\documentclass[letterpaper,twocolumn,10pt]{article}
\usepackage{usenix2019_v3}

\usepackage{amsmath,amssymb,amsfonts,amsthm}
\usepackage{graphicx}
\usepackage{booktabs}
\usepackage{tabularx}
\usepackage{array}
\usepackage{float}
\usepackage{listings}
\usepackage{algorithm}
\usepackage{algpseudocode}
\usepackage{subcaption}
\usepackage{xspace}
\usepackage{tikz}
\usetikzlibrary{arrows.meta, positioning, calc, shapes.geometric, fit, shadows, matrix}
\microtypesetup{spacing=false}

\AtBeginDocument{\DeclareMathAlphabet{\mathcal}{OMS}{cmsy}{m}{n}}

\newcommand{\tct}{\mbox{TCT}\xspace}
\newcommand{\esr}{\mbox{ESR}\xspace}

\newtheorem{theorem}{Theorem}
\newtheorem{lemma}{Lemma}
\newtheorem{definition}{Definition}

\newtheorem{proposition}{Proposition}
\newtheorem{corollary}{Corollary}

\definecolor{slate}{RGB}{112,128,144}
\definecolor{emerald}{RGB}{80,200,120}
\definecolor{navy}{RGB}{20,50,110}
\definecolor{crimson}{RGB}{180,30,30}

\hypersetup{
  pdftitle={When AI Agents Commit: Cognitive Serializability Across Data, Evidence, Policy, and Authority},
  pdfauthor={Jun He; Deying Yu},
  pdfsubject={Transaction correctness for mutations derived by non-deterministic reasoning},
  pdfkeywords={cognitive transactions, serializability, agentic systems, dependency fencing, PostgreSQL, commit-time authorization}
}

\begin{document}

\title{\bf When AI Agents Commit: Cognitive Serializability Across Data, Evidence, Policy, and Authority}

\author{
  {\rm Jun He}\\
  OpenKedge.io
  \and
  {\rm Deying Yu}\\
  OpenKedge.io
}

\maketitle

\begin{abstract}
Autonomous agents derive concrete mutations from database reads, retrieved
evidence, policy, beliefs, and delegated authority. Those inputs may change
while reasoning is in progress. Database isolation orders the submitted
transaction; agentic transaction processing determines whether a proposal
satisfies an executable contract. Neither guarantee establishes a common valid
point for the mutation and its derivation inputs unless the contract represents
the relevant predicates. Typed dependency tokens distinguish content integrity
from applicability, and trusted mediation captures the values exposed to
reasoning. Under \emph{strict Cognitive Serializability}, committed effects
admit a serial order
and a logical event at which every value exposed to derivation is unchanged.
The fences last until the runtime event that realizes the sealed durability
domain. The weaker \emph{Effect-Compatible Cognitive Admission} recertifies an
effect against a simultaneously held current
dependency vector and current policy without claiming to serialize the
original stochastic derivation. TCT combines immutable versioned executable
definitions, registry-derived authority plans, sealed envelopes, guard-first
commit transactions, post-seal envelope- and witness-bound grants,
co-committed receipts, idempotent grant finalization, and receipt-driven
epistemic reconciliation. Complete registered footprints and a single growing
phase induce an acyclic lock-point order over local guards and incompatible
external reservations. The corresponding results give serializability
conditions and an observational-equivalence boundary for zero-error soundness
and positive progress. A falsification suite tests the implementation
obligations: the prototype prevented all injected anomalies and added
3.22~ms mean commit overhead.
\end{abstract}

\section{Introduction}
\label{sec:intro}

AI agents increasingly propose high-consequence mutations after interpreting
unstructured data, retrieving evidence, evaluating policy, and exercising
delegated authority. Unlike application logic that reads and writes within one
ACID transaction, an agent may reason for seconds before submitting an
operation. Its database, evidence, policy, or authority inputs may change in
the meantime. We call the resulting operations \emph{non-deterministic
mutations}: deterministic state changes derived by non-deterministic reasoning
over volatile inputs.

A serializable database commit may therefore be valid inside the DBMS but no
longer justified by the state used to derive it. Mechanisms such as PostgreSQL
\texttt{SERIALIZABLE}~\cite{ports2012ssi} order database operations, not remote
evidence selection, policy epochs, or external authorization premises.
Database serializability alone cannot identify a point at which the effect and
all derivation premises were simultaneously valid.

Agentic Transaction Processing (ATP) addresses proposal admission by
passing generated proposals through deterministic admission gates
~\cite{mnemosyne2026}. Its guarantees are relative to predicates represented
in constraint set $\mathcal C$; they do not by themselves establish complete
input capture, correct external-validity semantics, or protection through the
target durability event.

Bridging stochastic derivation and deterministic commit requires three
obligations: trusted capture of every exposed input as a typed dependency;
separate semantics for content, applicability, and commit-time authority; and
validation over one state protected through durability. We ask: \emph{when
does a mutation derived outside an ACID transaction admit a common
serialization point with the database, evidence, policy, and authority states
on which it depends?}

\textbf{Taming Cognitive Transactions} (\tct) is a typed dependency contract
for an admission-and-commit gate. Its strict profile, \emph{Cognitive
Serializability} (\tct-S), provides a database serial order and a logical event
at which every mediated derivation value remains unchanged. Its compatible
profile, \emph{Effect-Compatible Cognitive Admission} (\tct-C), recertifies
the effect against a simultaneously protected current dependency vector and
policy. TCT-C may admit after dependency drift, but it does not serialize the
original derivation. The distinction prevents effect recertification from
being read as a claim about derivation history.

TCT realizes this distinction through immutable executable definitions, a sealed envelope containing the registered footprint and authority plan, guard-first local locking, and four mechanisms for preserving external validity through a sealed durability domain. Post-seal grants bind the envelope, effect, covered dependency vector, and durability target; a co-committed receipt links the admission decision to the durable effect and drives idempotent grant finalization and epistemic reconciliation. Under stated capture, registry, writer-participation, external-validity, and durability assumptions, the strict profile's single growing phase induces an acyclic lock-point order (Theorem~\ref{thm:cognitive_serializability}). TCT can be embedded in an ATP runtime when $\mathcal C$ implements the complete contract (Proposition~\ref{prop:atp_embedding}). The present work establishes a formal contract, sufficient conditions, PostgreSQL-oriented pseudocode, a 28-history falsification suite, and an empirical evaluation of a prototype gate demonstrating 100\% safety conformance and scalability up to 128 worker threads. Its intended deployment scope is high-consequence agentic mutation, where an unjustified durable effect warrants coordination overhead.

\paragraph{Contributions.}
The paper makes four contributions:

\begin{enumerate}
    \item \textbf{Two distinct correctness guarantees.} TCT-S provides strict, derivation-faithful Cognitive Serializability: every mediated value used in reasoning must remain valid at a common logical point. TCT-C provides the weaker Effect-Compatible Cognitive Admission guarantee: after inputs change, the concrete effect may commit only if a registered joint predicate accepts the complete, simultaneously protected current dependency vector and policy.
    \item \textbf{A closed-input transaction contract.} Trusted components
    capture inputs, resolve operation and policy requirements, and seal the
    contract before admission. It separates content identity from selection
    and applicability, and records the footprint, dependency discharges,
    policy reference, correctness profile, and authority plan.
    \item \textbf{A guard-first commit and reconciliation protocol.} The protocol acquires local guards before protected reads, preserves external validity through the target durability event using mechanisms A--D, and binds post-seal grants to the sealed envelope. Co-committed receipts then support safe retries, idempotent grant consumption, and receipt-driven epistemic reconciliation.
    \item \textbf{Formal and empirical validation.} We state sufficient conditions for acyclic TCT-S histories and effect--receipt correspondence, show when TCT composes with an ATP host, and identify both constraint-relative non-implication and an observational-equivalence boundary on zero-error admission. A 28-history falsification suite evaluates whether the prototype enforces these conditions under controlled perturbations.
\end{enumerate}

\paragraph{Roadmap.}
Section~\ref{sec:motivation} motivates the problem and reviews the technical foundations, and Section~\ref{sec:system-model} defines the model. Sections~\ref{sec:cognitive-serializability}--\ref{sec:theorems} state the correctness conditions and formal results. Section~\ref{sec:legacy} presents the PostgreSQL realization, and Section~\ref{sec:evaluation} evaluates its safety conformance and operational cost. Sections~\ref{sec:related}--\ref{sec:conclusion} position the work, state its limitations, and conclude.

\section{Motivation and Technical Foundations}
\label{sec:motivation}

Consider a supply-chain agent that reads inventory, queries vendor
certification, applies spending policy, and checks management approval before
submitting order proposal $\mu$. Its inputs span four domains:
\begin{itemize}
    \item \textbf{Database state ($\sigma_D$):} inventory row versions, version tokens, and aggregate projections.
    \item \textbf{Evidence ($J_E$):} immutable content bytes of retrieved artifacts (e.g., a vendor certificate) paired with an active issuer selection determining whether that certificate remains currently applicable for the order.
    \item \textbf{Policy ($J_P$):} active policy rules, compliance constraints, and temporal spending-policy epoch bounds.
    \item \textbf{Observed authority ($J_{A\mathrm{obs}}$):} derivation-time capability observations and approval premises shown to the reasoner. A trusted verifier $Q_X$ separately determines mandatory post-seal authority obligations at commit time.
\end{itemize}

\subsection{The Epistemic-to-ACID Gap}

No single transactional guarantee bridges non-deterministic derivation and
durable effect. Four boundaries matter:
\begin{description}
    \item[Isolation scope.] Standard database serializability orders reads and writes on managed database tuples. Remote evidence artifacts, policy rule sets, and external authority decisions lie entirely outside the DBMS lock manager and validation engine.
    \item[Contract completeness.] A deterministic admission gate can only validate what its executable constraint set $\mathcal C$ represents. Any operational dependency omitted from $\mathcal C$ escapes verification entirely.
    \item[Validity horizon.] Point-in-time validation does not guarantee premise stability through physical commit. Content immutability does not imply selection applicability, and derivation-time observation does not establish commit-time authorization.
    \item[Epistemic settlement.] Unconfirmed database commits, serialization aborts, or outbox delivery failures must not be interpreted by the agent as durable effects. Belief activation requires receipt-gated reconciliation.
\end{description}

\tct combines mediated capture, typed validity semantics, commit-spanning
fences, and receipt-driven epistemic reconciliation across these boundaries.

\subsection{Cross-State Anomalies}

The supply-chain scenario exhibits four corresponding failure modes when operational inputs drift prior to commit:

\begin{description}
    \item[Stale-snapshot derivation ($\sigma_D$).] A concurrent transaction modifies an inventory row or aggregate version used to derive $\mu$. Traditional database lock managers admit $\mu$ if its write set is disjoint from the concurrent write, even though the proposed quantity is no longer valid under the updated database state.
    \item[Unfenced evidence drift ($J_E$).] The observed certificate content bytes remain static and hash-valid, but the issuing authority revokes or supersedes the certificate during derivation. A digest or one-time freshness check passes, yet the current applicability premise fails at commit time.
    \item[Revoked-authority commit ($J_{A\mathrm{obs}}$).] An approval premise or capability observed during derivation is revoked, expires, or becomes ineligible before durability, while the static database role executing the transaction remains authorized to write.
    \item[Premature belief activation ($B$).] The agent activates an internal belief update $\Delta B$ (for example, ``order placed'') before confirming the database outcome. A subsequent serialization abort or network timeout leaves a phantom belief.
\end{description}

\begin{figure}[t]
\centering
\colorlet{mynavy}{blue!60!black}
\colorlet{myorange}{orange!90!black}
\colorlet{myslate}{black!60}
\colorlet{mycrimson}{red!75!black}

\begin{tikzpicture}[
    font=\scriptsize\sffamily,
    >=Stealth,
    process/.style={
        thick, rounded corners=4pt, align=center,
        text width=2.1cm, minimum height=1.1cm
    },
    gap node/.style={
        circle, fill=mycrimson, inner sep=1.5pt
    }
]

    \node (obs) [process, draw=mynavy, fill=mynavy!8, text=mynavy!20!black]
        {Mediated\\observations};

    \node (reason) [process, draw=myorange, fill=myorange!12, text=myorange!20!black,
        right=4mm of obs]
        {Stochastic\\derivation};

    \node (commit) [process, draw=myslate, fill=myslate!12, text=myslate!20!black,
        right=8mm of reason]
        {Database\\commit};

    \draw[->, thick, mynavy!80] (obs) -- (reason);
    \draw[->, thick, myslate!80] (reason) -- (commit);

    \coordinate (gap) at ($(reason.east)!0.5!(commit.west)$);

    \node (drift) [
        draw=mycrimson!80, thick, fill=mycrimson!8, rounded corners=4pt,
        text=mycrimson!90!black, align=center, above=8mm of gap,
        text width=3.4cm, inner sep=5pt
    ] {Evidence, policy, or authority\\becomes incompatible};

    \draw[->, dashed, mycrimson, thick] (drift.south) -- (gap);
    \node[gap node] at (gap) {};

    \node (warning) [
        fill=mycrimson!4, rounded corners=2pt,
        text=mycrimson!90!black, align=center, text width=3.6cm,
        below=6mm of $(reason.south)!0.5!(commit.south)$
    ] {An incomplete gate\\may still admit $\mu$};

\end{tikzpicture}
\caption{The external-validity portion of the epistemic-to-ACID gap. A proposal
gate can detect the illustrated drift when its executable contract represents
the dependency and supplies a validity fence. The anomaly arises when that
contract omits the dependency or checks it without a fence that lasts through
commit.}
\label{fig:gap_anomaly}
\end{figure}

Figure~\ref{fig:gap_anomaly} locates the failure at the contract boundary: detection requires both complete dependency representation and fences that remain valid through physical commit.

\subsection{Technical Foundations}
\label{sec:background}

The \tct transaction contract synthesizes foundations from database isolation, epistemic state management, agentic admission, commit-time authorization, and tool-effect settlement.

\subsubsection{Epistemic State Replication}

Epistemic State Replication (\esr)~\cite{esr2026} separates an immutable
evidence log $L$ from stochastic belief state $B$:
\begin{equation}
K=(L,B).
\end{equation}
$L$ records evidence; $B$ records derived beliefs and lineage. \tct uses ESR
downstream: commit receipts gate belief activation, while ESR may still reject
conflicting deltas.

\subsubsection{Database Serializability and External Consistency}

Classical serializability orders committed database histories
~\cite{bernstein1987concurrency,gray1992transaction,adya1999generalized};
PostgreSQL SSI aborts dangerous dependency cycles~\cite{ports2012ssi}; strict
serializability also respects real time~\cite{corbett2013spanner}. \tct instead
defines a fenced logical event spanning database and external premises.

\subsubsection{Agentic Concurrency and Coordination Systems}

Recent systems address complementary parts of agentic transaction processing:
\begin{itemize}
    \item \textbf{ATP/Mnemosyne} validates generated proposals against current
    state under $\mathcal C$ and supplies SEA, transition logging, repair, and
    outbox staging~\cite{mnemosyne2026}. Its guarantees are relative to
    predicates in $\mathcal C$.
    \item \textbf{Commit-Time Authorization} enforces freshness, causal priority, exact effect binding, and eligibility at a durable-effect boundary, implementing a fail-closed CommitGuard when runtimes supply required signals~\cite{commit_authorization2026}.
    \item \textbf{ATCC} adapts optimistic and pessimistic control across long
    agentic transactions~\cite{atcc2026}.
    \item \textbf{CoAgent and S-Bus} respectively repair concurrent
    trajectories and reconstruct HTTP-observable read sets
    ~\cite{coagent2026,sbus2026}.
    \item \textbf{Atomix} seals read/effect scopes, waits for resource
    frontiers, and settles effects by reversibility class~\cite{atomix2026}.
\end{itemize}

\paragraph{Implication for TCT.}
These foundations supply different parts of the contract: capture records
typed dependencies, locks and external fences protect them through commit, and
receipts govern epistemic reconciliation. Section~\ref{sec:related} gives the
detailed comparison.

\section{System Model, Dependency Capture, and Sealing}
\label{sec:system-model}

The model separates authoritative database state $D$, evidence $E$, policy
$P$, authority $A$, and epistemic state $K=(L,B)$. Execution separates trusted
capture, stochastic proposal generation, and trusted operation/policy
resolution.

\subsection{Capture, Reasoning, and Policy Resolution}

Before capture, trusted context
$c=\langle tenant,operation\_class,\ldots\rangle$ fixes an operation class and
the mandatory derivation policy shared by every operation selectable in that
class. Its canonical digest $h_c$ and policy reference are represented in
$J_P$. Trusted mediators expose operational inputs $U$ and produce mediated
intent and epistemic projections, their conservative typed closure, and the
policy actually exposed during derivation:
\begin{equation}
\begin{aligned}
  \mathsf{Capture}(U,c)
    & =\langle I_J,K_J,J_{\mathrm{obs}},
       \Pi_{\mathrm{obs}},r_{\Pi,o}\rangle,\\
  J_{\mathrm{obs}}
    & =J_D\uplus J_E\uplus J_P\uplus J_{A\mathrm{obs}}.
\end{aligned}
\label{eq:capture_function}
\end{equation}
$r_{\Pi,o}$ is captured by trusted mediation, its version and digest are bound
in a token $\delta_{\Pi,o}\in J_P$, and
$\Pi_{\mathrm{obs}}=\mathsf{Load}(r_{\Pi,o})$. Thus
$\Pi_{\mathrm{obs}}$ is the derivation-time policy (equivalently
$\Pi_{\mathrm{der}}$), not a policy discovered after reasoning. $I_J$ and
$K_J$ are canonical projections of mediated tokens in $J_{\mathrm{obs}}$:
$K_J$ includes every
belief premise and cached-memory value made available to the reasoner, and
every field of either projection carries token provenance. The reasoner
receives no direct unqualified $I$, $K$, cache, file, tool result, or other
operational input.
The stochastic reasoner proposes an operation identifier, typed parameters, a
tentative belief delta, and optional strengthening constraints:
\begin{equation}
 \mathsf{Reason}_{\theta}
 (I_J,K_J,\mathsf{View}(J_{\mathrm{obs}}),\Pi_{\mathrm{obs}})
 \rightsquigarrow (op,p,\Delta B,\Pi^{+}).
\label{eq:reasoning_function}
\end{equation}
It produces neither $J_{\mathrm{obs}}$, authoritative policy, nor a mandatory
grant plan. At sealing, trusted infrastructure resolves the operation
implementation, derives the concrete effect, and constructs the complete
authority and external-validity plan:
\begin{equation}
\mathsf{ResolveSeal}(op,p,c,J_{\mathrm{obs}},\Pi^{+})
 =\langle\mu,\Omega_{\mu},Q_X,m_X\rangle .
\label{eq:seal_resolution}
\end{equation}
Trusted mediation supplies context $c$; the proposer cannot. Resolution
rejects unless $op$ belongs to the sealed operation class governed
by $r_{\Pi,o}$; it cannot substitute a newly resolved policy for the one exposed
during reasoning.
The immutable operation specification
\[
\begin{aligned}
\Omega_{\mu}=\langle
 &op,\nu_{\Omega},h_{\Omega},R_D(\mu),W_D(\mu),\\[-1mm]
 &r_{\mathrm{pre}},r_{\mathrm{dispatch}},r_{\mathrm{canon}},
 \mathsf{family}_{\Phi}\rangle
\end{aligned}
\]
derives the sealed effect, a conservative complete local footprint, its
versioned precondition and dispatcher, its canonicalizer, and the permitted
joint-recertifier family.
The trusted profile $m_X\in\{S,C\}$ is derived from the operation class,
mandatory derivation policy, and any permitted strengthening in $\Pi^{+}$; it
is not chosen by the caller. Profile compatibility is exact equality. Neither
cross-profile direction is generally valid: TCT-S is not assumed to satisfy an
arbitrary TCT-C recertifier, and TCT-C does not satisfy TCT-S's derivation
faithfulness requirement.

The plan is the canonical ordered vector $Q_X=\langle q_j\rangle_{j=1}^{m}$,
with $h_{Q_X}=\mathsf{Hash}(\mathsf{Canon}(Q_X))$, where
\begin{equation}
\begin{aligned}
q_j=\langle&
\iota_j,\mathit{scope}_j,m_j,\mathsf{Cover}(q_j),r_{w,j},\\[-1mm]
&r_{\chi,j},\mathit{decision}_j,\mathcal D_X,M_j\rangle .
\end{aligned}
\label{eq:authority_plan_item}
\end{equation}
Here $\iota_j$ identifies the required issuer or atomic-commit participant;
$m_j\in\{S,X\}$ is its compatibility mode;
$\mathsf{Cover}(q_j)\subseteq J_{\mathrm{obs}}$ is the ordered set of covered
dependency tokens; $r_{w,j}$ is a versioned witness-construction reference;
$r_{\chi,j}$ identifies validity semantics; $\mathit{decision}_j$ is the
required decision; $\mathcal D_X$ identifies the target durability domain and
its semantics; and $M_j\in\{A,B,C,D\}$ selects the validity mechanism.
$\mathcal D_X$ includes the specific gate and database/commit-domain identity,
so it cannot name an interchangeable durability class.

The trusted operation registry derives a mandatory plan
$Q_X^{\mathrm{mand}}$ from $(\Omega_{\mu},p,c,J_{\mathrm{obs}})$. Optional
authority constraints or grant requests originating in $p$ or elsewhere
outside that registry are admitted only through a monotone strengthening
$Q_X=Q_X^{\mathrm{mand}}\sqcup Q_X^{+}$: they may add obligations or tighten a
registered obligation, but cannot omit, replace, or weaken any member of
$Q_X^{\mathrm{mand}}$. Resolution rejects a non-monotone proposal.

Plan well-formedness also requires the converse of
$\mathsf{Cover}(q)\subseteq J_{\mathrm{obs}}$. Let
$\mathsf{SelfStable}(\delta)$ mean that the verified validity semantics for
$\delta$ proves the exact premise used in derivation immutable or otherwise
self-stable through any declared durability event; content immutability does
not make a current-selection premise self-stable. Let
$\mathsf{LocalA}(\delta,Q_X)$ mean that a registered co-transactional
mechanism-A rule names the dependency's complete guard and writer discipline.
Let $\mathsf{CoversThroughD}(\delta,q)$ mean that
$\delta\in\mathsf{Cover}(q)$, $M(q)\in\{B,C,D\}$, and the typed plan item binds
a realizable contract that protects that dependency through the event declared
by its sealed $\mathcal D_X$. Define the checkable structural relation
\[
\begin{gathered}
\mathsf{DepCovered}(J_{\mathrm{obs}},Q_X)\equiv\\[-1mm]
\forall\delta\in J_{\mathrm{obs}}:\quad
\mathsf{SelfStable}(\delta)\lor
\mathsf{LocalA}(\delta,Q_X)\\[-1mm]
{}\lor\exists q\in Q_X:
\mathsf{CoversThroughD}(\delta,q).
\end{gathered}
\]
The trusted registry checks this relation from typed, versioned definitions;
at admission, $\mathsf{MechanismThroughD}$ verifies that every selected
non-self-stable discharge actually holds through the realized event $d$.
Individually well-formed plan subsets are therefore insufficient unless every
captured dependency has a valid discharge. $\mathsf{ResolveSeal}$ succeeds only when
$\mathsf{DepCovered}(J_{\mathrm{obs}},Q_X)$ holds.

After the policy guard or external policy fence is held at admission, trusted
infrastructure resolves
\begin{equation}
\mathsf{ResolveCommit}(op,c,S_P(s))
 =\langle\Pi_{\mathrm{commit}},r_{\Pi,c},r_{\Phi}\rangle .
\label{eq:commit_resolution}
\end{equation}
$r_{\Pi,c}$ identifies the current mandatory bundle and
$r_{\Phi}=\langle\mathit{id}_{\Phi},\nu_{\Phi},h_{\Phi}\rangle$ identifies the
versioned joint recertifier selected by $(r_{\Omega},r_{\Pi,c})$. The effective
commit bundle is
\begin{equation}
  \Pi_{\mathrm{eff}}=\Pi_{\mathrm{commit}}\land\Pi^{+}.
\label{eq:effective_policy}
\end{equation}
$\Pi^{+}$ is evaluated only in a trusted constraint language; conjunction can
strengthen but cannot replace the current mandatory bundle. The proposer never
selects an authoritative policy epoch.

The protected current policy may also determine authority or external-validity
obligations. Under the same guards, trusted infrastructure therefore computes
\begin{multline*}
Q_{\mathrm{req}}(s)=
\mathsf{ResolvePlan}\bigl(
 \Omega_{\mu},p,c,J_{\mathrm{obs}},
 \Pi_{\mathrm{commit}},\\
 \Pi^{+},S_{J_{\mathrm{obs}}}(s)\bigr).
\end{multline*}
This resolver includes both current mandatory obligations and the optional
strengthening constraints already represented in the sealed proposal. TCT
uses the conservative exact rule
\[
 \mathsf{PlanCurrent}(Q_X,s)\equiv
 \mathsf{Canon}(Q_{\mathrm{req}}(s))=\mathsf{Canon}(Q_X).
\]
At sealing, $\mathsf{ResolveSeal}$ evaluates the same versioned plan rules
against the captured derivation policy and observed token values. At admission,
any current-policy or current-state change that would alter the required plan
causes abort and resealing. The gate never appends, replaces, or weakens a plan
item after $h_X$ exists. This check applies to both profiles and is essential
when TCT-C admits under a $\Pi_{\mathrm{commit}}$ different from
$\Pi_{\mathrm{obs}}$.
Under the same protection the gate computes
\[
m_{\mathrm{req}}(s)=\mathsf{ResolveProfile}
(\Omega_\mu,c,\Pi_{\mathrm{commit}},\Pi^+,S_{J_{\mathrm{obs}}}(s)).
\]
Admission requires
\[
  \mathsf{ProfileCurrent}(m_X,s)\equiv m_{\mathrm{req}}(s)=m_X.
\]
A mismatch rejects before the dispatcher or receipt write and requires
resealing.

\subsection{Executable Definitions and Registry Applicability}

Every executable registry object---an operation specification, precondition
evaluator, dispatcher, canonicalizer, validity-semantics definition, per-token
predicate, witness constructor, plan resolver, or envelope-level
recertifier---has a typed reference
\[
r_f=\langle kind_f,id_f,\nu_f,h_f,u_f\rangle .
\]
$h_f$ digests canonical definition bytes. The reference either embeds those
bytes or names a content-addressed object whose immutability and permanent
retention are part of the trusted registry contract. The small bootstrap loader
that verifies $r_f$ is fixed by the TCT protocol version. The envelope binds the
complete set $\mathcal R_X$ of executable references it may invoke. This set
includes the versioned trusted atomic normalizer $r_{\mathrm{norm}}$. For each
B/C plan item,
\[
\begin{aligned}
\mathsf{Atoms}_{r_{\mathrm{norm}}}(q_j)=
\langle&\mathit{coordinationDomain},\\[-1mm]
&\mathit{atomicLockId},m_j,j\rangle^{*}
\end{aligned}
\]
is its canonical vector of atomic acquisition identities, required modes, and
covered plan ordinals. Sealing merges duplicate identities with $X$ dominating
$S$ and binds
$h_{\mathsf{Atoms},X}=\mathsf{Hash}(\mathsf{Canon}(
\mathsf{MergeAtoms}_{r_{\mathrm{norm}}}(Q_X)))$.
$r_{\mathrm{norm}}\in\mathcal R_X$ and $h_{\mathsf{Atoms},X}$ are therefore
both envelope-bound. Admission recomputes the vector and requires this digest.
All normalizer versions admitted concurrently for a coordination domain must
assign the same identity to the same physical lock. An upgrade that changes
that mapping may become admissible only through a transition protected by the
discovery/normalizer-epoch guard. The transition first stops admitting
envelopes sealed under the old epoch, then waits until every already-admitted
old-epoch execution has reached its declared durability event or terminally
aborted, and only then activates the new epoch. Rejecting future old-epoch
admissions is insufficient while any old-epoch execution remains active.

Retention does not establish applicability. Historical executable versions
remain loadable; protected current policy controls revocation and
disallowance. Admission requires
\begin{equation}
  \bigwedge_{r_f\in\mathcal R_X}
\mathsf{ExecAllowed}_{\Pi_{\mathrm{commit}}}(r_f).
\label{eq:registry_applicability}
\end{equation}
This executable-applicability decision is a protected policy dependency in
$J_P$, not a conclusion inferred from the retained content.
There is no unguarded mutable ``currently valid definition'' bit. Mutable maps
used to discover operation, policy, and recertifier heads are protected by a
coarse shared discovery guard
\[
g^{\mathrm{disc}}_{\langle tenant,op\rangle},
\]
whose key is derivable from sealed tenant and operation identifiers before any
mutable registry read. Every writer of a covered map takes the matching
exclusive guard. The protected maps include the admissible normalizer epoch
for each affected coordination domain; a mapping-changing epoch transition
holds the corresponding discovery/normalizer-epoch guard exclusively through
the stop-admission, drain-or-terminal-abort, and activation sequence above.

\subsection{Typed Dependencies and Applicability}

\begin{definition}[Dependency token]
\label{def:dependency_token}
A dependency token is
\begin{equation}
\delta=\langle k,q,v,r_{\varphi},\iota,a,r_{\chi}\rangle ,
\label{eq:dependency_token}
\end{equation}
where $k$ is a dependency kind; $q$ an object, predicate, or scope key; $v$ the
canonical value exposed to reasoning; $r_{\varphi}$ an optional versioned
per-token recertifier reference; $\iota$ an issuer; $a$ an assertion or digest;
and $r_{\chi}=\langle\mathit{id}_{\chi},\nu_{\chi},h_{\chi}\rangle$ the versioned
validity-semantics definition. All references and digests are bound by the
envelope.
\end{definition}

External artifacts require two different dependency forms:
\begin{align}
\delta_{\mathrm{content}}={}&
 \langle artifact\_id,content\_digest, \nonumber\\
 &\quad issuer,immutability\_proof\rangle ,
\label{eq:content_token}\\
\delta_{\mathrm{selection}}={}&
 \langle issuer,query\_or\_subject, \nonumber\\
 &\quad selected\_result,head\_or\_epoch, \nonumber\\
 &\quad applicability\_predicate,r_{\chi}\rangle .
\label{eq:selection_token}
\end{align}
The content token proves only that the observed bytes remain available and
unchanged in the declared artifact store. It does not prove that the artifact
is current, authoritative, applicable, unrevoked, or unsuperseded. If a
mediated interface returns ``the current $X$,'' capture includes both the
returned content token and the selection token that records why that artifact
was current. A historical artifact needs no currentness token only when its
historical existence, rather than present applicability, is the actual premise.

The mediator may expose a stable trusted abstraction as $v$, but then the
reasoner receives only that abstraction. Raw content exposed alongside it
receives its own content token. $J_D$ covers row, object, predicate, and
aggregate reads; $J_E$ covers content and selection dependencies plus belief
premises; $J_P$ binds $\Pi_{\mathrm{obs}}$ and its registry selection; and
$J_{A\mathrm{obs}}$ contains only authority premises, approval observations,
capability observations, and other authority values actually exposed during
derivation. It contains no grant request or commit-time grant. Such
observations remain ordinary derivation dependencies; the separately derived
$Q_X$ determines which mandatory authority and external-validity obligations
the concrete operation must discharge.

For current authoritative state $S_q(s)$, faithfulness is typed:
\begin{equation}
\begin{aligned}
&\mathsf{Faithful}(\delta_{\mathrm{content}},S(s))\\[-1mm]
&\quad\equiv
 \mathsf{Digest}(\mathsf{Content}(artifact\_id))=content\_digest\\[-1mm]
&\qquad\land\mathsf{VerifyImmutable}(immutability\_proof).
\end{aligned}
\label{eq:faithful_content}
\end{equation}
\begin{equation}
\begin{aligned}
&\mathsf{Faithful}(\delta_{\mathrm{selection}},S(s))\\[-1mm]
&\quad\equiv\mathsf{Head}_{issuer}(query,s)=head\_or\_epoch\\[-1mm]
&\qquad\land\mathsf{Select}_{issuer}(query,s)=selected\_result\\[-1mm]
&\qquad\land\mathsf{Applicable}_{r_{\chi}}
        (selected\_result,S(s)).
\end{aligned}
\label{eq:faithful_selection}
\end{equation}
For other token types,
\begin{equation}
\mathsf{Faithful}(\delta,S_q(s))
 \equiv\mathsf{Obs}_{\delta}(S_q(s))=v_{\delta}
 \land\mathsf{Valid}_{r_{\chi}}(\delta,S_q(s)).
\label{eq:faithful}
\end{equation}
TCT-S validates artifact integrity and current selection whenever derivation
used both premises.

A signed point observation states only what held at one issuer-clock instant.
A revocable attestation requires a trustworthy revocation mechanism. A future
expiry field alone supplies no commit fence. Any interval, grant, applicability
predicate, or per-token optimization is interpreted only through its versioned
$r_{\chi}$ or $r_{\varphi}$ definition.

\subsection{Closed Inputs and Trusted Capture}

\begin{definition}[Closed-input mediated capture]
Every operational input made available to the reasoner---database results,
retrievals, tool results, files, intent fields, epistemic state, cached memory,
belief premises, human approvals, policies, and credentials---enters through a
trusted mediated channel and appears only through $I_J$, $K_J$, or
$\mathsf{View}(J_{\mathrm{obs}})$. Every exposed value is represented by a
token in $J_{\mathrm{obs}}$. A
transformation or summary carries transitive provenance to its sources. The
mediator records a conservative superset $J_{\mathrm{obs}}$; it does not claim
to identify every fact that causally influenced a stochastic model.
\end{definition}

Parametric model knowledge is outside the contract unless materialized through
a mediated channel. Prompt injection cannot delete mediator records, but bypass
channels and compromised mediators violate the assumption. This resembles the
observable-read boundary in S-Bus~\cite{sbus2026} and the protected-boundary
assumption in CommitGuard~\cite{commit_authorization2026}.

\subsection{Canonical Envelope and Seal}

Let $\mathsf{Canon}$ be an injective canonical serialization with fixed field
ordering, type tags, normalized encodings, and deterministic token ordering.
Every canonical object also carries a protocol-versioned, type-specific domain
tag, so an envelope, effect, token, plan, witness, grant, receipt, or release
proof cannot be substituted for another object class. TCT assumes
collision and second-preimage resistance for $\mathsf{Hash}$, unforgeability
for the signature and MAC schemes, and authenticated mediator, issuer, and
terminal-proof keys bound to their declared roles and tenants.
Let $r_{\Omega}=\langle op,\nu_{\Omega},h_{\Omega}\rangle$ identify the immutable
operation specification and let $r_{\Pi,o}$ identify $\Pi_{\mathrm{obs}}$. The
sealed payload is
\begin{align}
\bar X={}&\langle id,tenant,h_c,I_J,K_J,op,p,\mu,h_{\mu},m_X,\Delta B,
  J_{\mathrm{obs}}, \nonumber\\
& Q_X,h_{\mathsf{Atoms},X},r_{\Omega},\mathcal R_X,r_{\Pi,o},\Pi^{+},
  \mathcal D_X,\tau\rangle , \nonumber\\
h_X={}&\mathsf{Hash}(\mathsf{Canon}(\bar X)),
\label{eq:envelope_digest}
\end{align}
where $h_{\mu}=\mathsf{Hash}(\mathsf{Canon}(\mu))$, $\mathcal D_X$ is the
sealed target commit-domain identity and durability-semantics reference, and
$\tau=[t_{\min},t_{\max}]$ is a committer-clock admission window.
$\mathsf{Canon}(\bar X)$ excludes $h_X$ and the seal.

\begin{definition}[Sealed cognitive transaction envelope]
A sealed envelope is $X=\langle\bar X,h_X,\zeta\rangle$, where $\zeta$ is either
(i) a signature or MAC by the trusted mediator over $(id,h_X)$ or (ii) a
reference to an immutable trusted registration record containing $(id,h_X)$.
The mediator allocates $id$ once and retains an immutable
$\mathsf{IdBind}(id,h_X)$ record even if an effect attempt aborts; it never
binds that $id$ to another digest. Admission recomputes $h_X$ and $h_{\mu}$ and
verifies the identity binding, $\zeta$, every executable reference,
predicate/semantics version, and digest before use.
$J_{A\mathrm{obs}}$ contains only sealed observations, while the trusted
mandatory plan $Q_X$ is separately sealed. The commit-time
grant set
\[
  \Gamma_X=
  \langle j\mapsto\gamma_{X,j}\rangle_{j\in\mathsf{ord}(Q_X^B)}
\]
is a canonical map keyed by sealed plan ordinal and is obtained only after
$h_X$ exists;
$\Gamma_X\cap(J_{\mathrm{obs}}\cup Q_X)=\varnothing$ and is excluded from
$\bar X$, $\mathsf{Canon}(\bar X)$, and $h_X$. Thus no grant signature that
binds $h_X$ can be an input to $h_X$. Admission checks exact coverage of the
sealed plan, not a relation to capture-time authority observations.
\end{definition}

The seal proves payload integrity and mediator provenance, not external truth
or current applicability. The commit receipt binds $r_{\Pi,o}$,
$r_{\Pi,c}$, $r_{\Phi}$, $Q_X$, the successful $Q_{\mathrm{req}}=Q_X$ check,
the $\mathsf{DepCovered}$ discharge map, $\mathcal D_X$, the exact post-seal
$\Gamma_X$ (or immutable content-addressed references to it), and the
resulting validation verdict.

\section{Cognitive Serializability}
\label{sec:cognitive-serializability}

\subsection{Joint History}

A joint history $H$ is a partially ordered set of typed events with
per-participant program order and observed communication order. Its alphabet
contains database operations; artifact observations and selection changes;
policy-epoch changes; authority grants, revocations, and expirations; seal and
verification events; guard, grant, prepare, validation, commit, abort, receipt,
and epistemic-state events. The database projection $H_D$ retains operations of
committed envelopes only.

\subsection{Serialization and Durability Events}

The sealed object $\mathcal D_X$ identifies the target gate, database or
atomic-commit domain, and the versioned semantics of its durability condition.
For committed transaction $T_i$, $d_i$ is the runtime event at which that
condition is actually reached in $\mathcal D_{X_i}$:
\[
  d_i=\mathsf{ReachDurability}(T_i,\mathcal D_{X_i}).
\]
A grant can therefore sign $\mathcal D_{X_i}$ before commit, but not the future
event $d_i$.

\begin{definition}[Cognitive serialization event]
\label{def:serialization_event}
For a committed envelope $X_i$, $s_i$ is the logical event after the complete
canonical local guard set has been acquired, all protected local observations
have been made under those guards, and every required post-seal grant in
$\Gamma_{X_i}$ and other external validity mechanism has accepted. If $a_{ij}$
is the acceptance event for external dependency $j$, then
$a_{ij}\preceq s_i\preceq d_i$, and every
grant-protected interval contains $s_i$ and extends through $d_i$.
The single growing phase over local guards and incompatible external
reservations ends at lock point $\lambda_i\preceq s_i$.
No protected state is read or validated before the final local guard. Local
guards and external reservations remain effective until the database reaches
the declared runtime durability event $d_i$.
\end{definition}

$s_i$ is an event, not a timestamp. Let $C_G$ be the committer clock and
$t_i=C_G(s_i)$. Durability semantics named by $\mathcal D_X$ may require
primary-WAL flush, configured synchronous-standby acknowledgments, or a
specified atomic-commit condition. A successful SQL \texttt{COMMIT} is
interpreted only relative to that sealed target; $d_i$ is the corresponding
event in the produced history.

\subsection{Commit-Spanning Validity Mechanisms}

Four mechanisms can preserve dependency validity through commit:
\begin{description}
  \item[A. Co-transactional dependency.] The authoritative validity state is
  read and updated in the same ACID transaction and protected by the same guard
  discipline.
  \item[B. Envelope- and witness-bound grant.] Issuer $\iota_j$ returns
  \begin{multline}
  \Gamma_{X_i}=
  \langle j\mapsto\gamma_{X_i,j}\rangle_{j\in\mathsf{ord}(Q_{X_i}^B)},
  \qquad
  \gamma_{X_i,j}=\mathsf{Sign}_{\iota_j}\langle\\
  id_i,h_{X_i},h_{\mu_i},h_{Q_{X_i}},\mathcal D_{X_i},j,m_{X_i},\\
  \mathit{scope}_{ij},m_{ij},h_{\mathsf{Cover}_{ij}},\\
  h_{w,ij},r_{w,ij},r_{\chi,ij},
  \mathit{decision}_{ij},n_{ij}\rangle ,
  \label{eq:typed_grant}
  \end{multline}
  where $h_{Q_{X_i}}$ digests sealed $Q_{X_i}$,
  $h_{\mathsf{Cover}_{ij}}$ digests the exact covered-token vector, $m_{ij}$ is
  the S/X mode, $r_{w,ij}$ is the witness constructor, $r_{\chi,ij}$ identifies
  validity semantics, and $n_{ij}$ is a single-use nonce. The issuer receives
  the matching sealed plan item $q_{ij}\in Q_{X_i}^{B}$, where
  $Q_{X_i}^{B}=\{q\in Q_{X_i}:M(q)=B\}$, and creates $\gamma_{X_i,j}$ only
  after verifying $h_{X_i}$. Acceptance atomically places the grant in
  \textsc{Reserved} and promises to preserve its exact witness until runtime
  event $d_i$ occurs in the signed target $\mathcal D_{X_i}$. Verification
  requires exact equality of every plan, envelope, effect, and target-domain
  field; a grant for another envelope or commit domain is inapplicable even
  when $h_{\mu}$ is equal. This is an envelope-specific reservation, analogous
  in purpose to transactional escrow~\cite{oneil1986escrow}.
  \item[C. Atomic-commit participant.] The external service participates in a
  prepare/commit protocol whose decision includes reaching $d_i$ in
  $\mathcal D_{X_i}$~\cite{gray1992transaction}. Atomic participation supplies atomicity,
  not isolation: each participant also provides serializable isolation or
  compatible reservation ordering for the witnessed state, and conflicting
  orders across participants obey the common-order contract below.
  \item[D. Hard-real-time certificate.] The issuer guarantees validity over
  $[\ell^{-},\ell^{+}]$, clocks obey a proved skew bound, and the commit path has
  a genuinely enforced worst-case duration $b$.
\end{description}

\paragraph{Mechanism selection.}
The trusted registry selects the mechanism for each dependency; the caller
cannot choose a weaker discharge. Mechanism A applies when authoritative state
can share the database transaction and writer discipline. Mechanism B applies
when an issuer can reserve an exact witnessed value through the target
durability event but cannot join the database commit. Mechanism C applies when
the external service can participate in atomic commit and also provide the
required isolation or reservation order. Mechanism D is appropriate only when
the deployment can prove its clock and worst-case completion assumptions.
Failure to satisfy any mechanism's full contract causes rejection rather than
fallback to a weaker check.

For mechanism D only, if each trusted clock differs from ideal time by at most
$\epsilon$, the conservative sufficient check is
\begin{equation}
 \ell^{-}+2\epsilon\leq t_i
 \quad\land\quad t_i+b+2\epsilon\leq\ell^{+}.
\label{eq:lease_margin}
\end{equation}
An expiry timestamp, PostgreSQL \texttt{statement\_timeout}, or an ordinary
operational latency target does not prove a hard bound through durability.
The PostgreSQL realization in Section~\ref{sec:legacy} uses A for local state
and B for external selection/applicability and authority; it does not rely on D.

For every external plan item $q_{ij}$, let
$C_{ij}=\mathsf{Cover}(q_{ij})$ and let $S_{C_{ij}}(u)$ be the canonical
authoritative vector for exactly those covered dependency tokens at event $u$.
At issuer or participant acceptance $a_{ij}$, the canonical witness payload
and its commitment are
\begin{equation}
\begin{aligned}
w_{ij}&=\mathsf{Witness}_{r_{w,ij}}(S_{C_{ij}}(a_{ij})),\\
h_{w,ij}&=\mathsf{Hash}(\mathsf{Canon}(w_{ij})).
\end{aligned}
\label{eq:grant_witness}
\end{equation}
For B, the issuer returns $w_{ij}$ with the signed grant, or an immutable
authenticated content-addressed reference whose resolved bytes hash to
$h_{w,ij}$; the signature continues to bind $h_{w,ij}$. Mechanisms C and D
provide the same authenticated value or reference whenever their values enter
$S_{J_{\mathrm{obs}}}(s_i)$. The payload contains, or reconstructs, the exact
current dependency-vector components consumed by $\mathsf{Faithful}$,
$\mathsf{ResolvePlan}$ and $Q_{\mathrm{req}}$, and $\Phi$. An opaque digest,
Boolean decision, or authorization-only summary is insufficient unless those
predicates are formally defined over commitments rather than raw values. For
TCT-S it must equal the value or registered predicate represented during
derivation by the tokens in $C_{ij}$. For TCT-C it must be the exact current
vector component supplied to $\Phi$, not a separate authorization-only
summary.

Let $a_{ij}$ be issuer acceptance. The issuer contract is
\begin{equation}
\begin{aligned}
&\text{for all }u\in[a_{ij},d_i]:\\[-1mm]
&\quad\mathsf{Hold}(\gamma_{X_i,j},u)\equiv
 \chi_{r_{\chi,ij}}\bigl(S_{C_{ij}}(u),w_{ij},r_{w,ij}\bigr)\\
&\qquad{}\land
 \mathsf{Witness}_{r_{w,ij}}\bigl(S_{C_{ij}}(u)\bigr)=w_{ij}.
\end{aligned}
\label{eq:grant_holding}
\end{equation}
For an exact-value token, $\chi$ requires equality to the witnessed value; for
a predicate token, it requires the registered predicate represented by
$h_{w,ij}$. A promise that the effect remains authorized does not establish
that a reported raw value remains current. An unconsumed grant may be released
only after authoritative proof, bound to
$(id_i,h_{X_i},h_{Q_{X_i}},j,n_{ij})$, that the corresponding effect
transaction is terminally unable to commit. Timeout,
missing response, and temporary absence of a receipt are insufficient; a
reservation leak after an uncertain outcome is a liveness failure, not
permission to weaken safety.

The issuer treats
$\langle id_i,h_{X_i},h_{Q_{X_i}},j\rangle$ (equivalently, the canonical
plan-item digest) as the idempotency key. Scope is a signed field but is not the
identity key because two plan items may intentionally share a scope. A
same-envelope retry recovers the existing reservation or, after success, the
receipt recorded for that reservation. It may issue a replacement only after
verifying authoritative abort-and-release proof for the previous grant; a
timeout or absent receipt cannot authorize replacement.

Let
$\kappa_{ij}=\langle id_i,h_{X_i},h_{\mu_i},h_{Q_{X_i}},j,n_{ij}\rangle$.
The issuer-side terminal lifecycle is
\[
\begin{aligned}
\textsc{Reserved}
&\xrightarrow{\;\mathsf{Finalize}(\rho_i,\kappa_{ij},\mathcal D_{X_i})\;}
   \textsc{Consumed},\\
\textsc{Reserved}
 &\xrightarrow{\;\mathsf{TerminalAbortProof}\;}
   \textsc{Released}.
\end{aligned}
\]
$\mathsf{Finalize}$ requires a durable receipt or equivalent commit proof for
the signed tuple. It is delivered by an idempotent post-commit outbox or an
equivalent receipt-finalization path; duplicate delivery leaves the grant
\textsc{Consumed}. No timeout, lost response, or missing receipt enables the
second transition. Indefinite retention in \textsc{Reserved} is safe but is a
liveness failure.

Multiple external reservations and participant conflicts require a common
order. Normalize every B/C scope into canonical atomic acquisition identities
$a$. Define the resource-invariant key
\[
\begin{aligned}
k_{\mathrm{atom}}(a)=\mathsf{Canon}\langle&
\mathit{coordinationDomain}(a),\\[-1mm]
&\mathit{atomicLockId}(a)\rangle .
\end{aligned}
\]
The same physical lock has exactly the same coordination domain, atomic lock
identifier, and key regardless of parent scope, envelope, plan ordinal,
mechanism label, or descriptive spelling. Any two scopes that can conflict
normalize to at least one common identity. Within an envelope, duplicate
identities are merged, $X$ dominates $S$, and the merged item retains all
covered plan ordinals. The gate sorts the resulting atomic list by
$k_{\mathrm{atom}}$, performs each actual B or C atomic reservation in that
order, and holds incompatible reservations through $d_i$. If incompatible
acquisitions cannot share a coordination domain, a common coordinator must
establish their order. A registered independence/commutativity proof may
remove a precedence edge, but does not waive a required reservation, grant,
participant decision, certificate, or authenticated witness. Plan ordinal
remains the grant identity and idempotency coordinate, never the concurrency
key. Atomic normalization supports deadlock freedom; it does not change the
lock-point safety argument.
If an endpoint cannot expose composable atomic reservations, its whole scope
is one canonical atomic acquisition identity; if that identity cannot share a
coordination domain with an incompatible acquisition, the common-coordinator
alternative is required. Plan-level grant and participant-decision assembly
after the sorted atomic loop performs no additional resource acquisition.

\subsection{Strict and Compatible Properties}

\begin{definition}[Strict Cognitive Serializability]
\label{def:cognitive_serializability}
A committed joint history $H$ is \emph{strictly cognitively serializable} if
there exist a total order $\prec$ over committed envelopes and an event $s_i$
for each $X_i$ such that:
\begin{enumerate}
  \item $H_D$ is equivalent to serial execution of the committed registered
  effects in order $\prec$;
  \item every database-conflict, guard, required real-time,
  $\Gamma_{X_i}$-grant/prepare, and validity-boundary edge is consistent with
  $\prec$ and $s_i$;
  \item every $\delta\in J_{\mathrm{obs},i}$ is faithful at $s_i$, including
  both content and selection tokens when the interface returned a current
  artifact;
  \item
  $\mathsf{Canon}(\Pi_{\mathrm{commit},i})
   =\mathsf{Canon}(\Pi_{\mathrm{obs},i})$; and
  \item every $q_{ij}\in Q_{X_i}$ is discharged by its sealed mechanism through
  $d_i$, with all protected intervals containing $s_i$; for
  $q_{ij}\in Q_{X_i}^{B}$, Equation~\ref{eq:grant_witness} holds and the
  matching $\gamma_{X_i,j}\in\Gamma_{X_i}$ supplies the witness. No grant is a
  member of $J_{\mathrm{obs},i}$, and both
  $\mathsf{DepCovered}(J_{\mathrm{obs},i},Q_{X_i})$ and
  $\mathsf{PlanCurrent}(Q_{X_i},s_i)$ hold.
\end{enumerate}
\end{definition}

\paragraph{Terminology.}
The adjective \emph{strict} denotes derivation faithfulness plus
commit-spanning validity; it does not assert classical strict serializability
or external consistency. TCT orders the real-time edges required by the sealed
contract, but it imposes no global wall-clock order on unrelated transactions
unless the underlying deployment supplies that stronger property.

Immutability discharges only the content-token clause. Currentness,
authority, and applicability require their own faithful selection token and
mechanism A--D. If raw input or current selection changes and only the effect is
recertified, the history is not derivation-faithful.

Let
\[
S_{J_{\mathrm{obs}}}(s)=
\langle S_{\delta}(s)\rangle_{\delta\in J_{\mathrm{obs}}}
\]
be the complete current dependency vector. The operation and commit-time policy
references resolve a versioned envelope-level predicate:
\begin{align}
&\mathsf{Compatible}_X
 (X,S_{J_{\mathrm{obs}}}(s),\mu,\Pi_{\mathrm{commit}},t) \nonumber\\
&\quad\equiv
\Phi_{r_{\Phi}}\bigl(h_X,\mathsf{Canon}(J_{\mathrm{obs}}),
 S_{J_{\mathrm{obs}}}(s), \nonumber\\[-1mm]
&\hspace{37mm}\mu,\Pi_{\mathrm{commit}},t\bigr),
\label{eq:joint_compatible}
\end{align}
where trusted resolution selects $r_{\Phi}$ from
$(r_{\Omega},r_{\Pi,c})$; equivalently, this is the registered family
$\Phi_{r_{\Omega},r_{\Pi,c}}$ instantiated at that version. Passing $h_X$
makes the predicate
envelope-relative, rather than a conjunction of token-local approvals.

\begin{definition}[Effect-Compatible Cognitive Admission]
\label{def:effect_compatible_admission}
A committed envelope is \emph{effect-compatibly admitted} if it has a database
serial position and event $s_i$ satisfying items 1, 2, and 5 above,
$\Pi_{\mathrm{commit}}$ is resolved under the protected current policy head,
every mandatory $q_{ij}\in Q_{X_i}$ is discharged, and the complete witness
vector $S_{J_{\mathrm{obs},i}}(s_i)$ existed simultaneously under overlapping
protection extending through $d_i$. Every mutable vector
component supplied to $\Phi$ is either co-transactionally guarded or covered by
a mechanism-B, C, or D interval containing $s_i$ and extending through $d_i$;
self-stable components use their verified immutable rule. The envelope also
satisfies $\mathsf{DepCovered}(J_{\mathrm{obs},i},Q_{X_i})$ and
$\mathsf{PlanCurrent}(Q_{X_i},s_i)$, and
\begin{equation}
\mathsf{Compatible}_X
(X_i,S_{J_{\mathrm{obs},i}}(s_i),\mu_i,
 \Pi_{\mathrm{commit},i},t_i).
\label{eq:compatible_admission}
\end{equation}
\end{definition}

A compatible decision cannot combine values current at different times. Each
mutable input to $\Phi$ is the value protected at $s_i$ through $d_i$; for B,
it is the witness in Equation~\ref{eq:grant_holding}. Effect authorization
alone does not prove that this vector existed.

$\Phi$ evaluates the complete current vector and all cross-dependency
invariants. Per-token predicates may be cached or evaluated as an optimization
only when the trusted registry binds a versioned decomposition certificate
proving
\[
\bigwedge_{\delta\in J_{\mathrm{obs}}}\varphi_{\delta}
\;\Longrightarrow\;\mathsf{Compatible}_X.
\]
Without that sufficiency result, independent conjunction is not an admission
rule. For example, an order of 60 units may remain valid if exposure alone rises
from 20 to 45 under the observed cap of 100, and also remain valid if the cap
alone falls from 100 to 80 under the observed exposure of 20. Together the
current exposure and cap require $60+45\leq80$, which is false. Only joint
recertification observes the invalid combination.

Effect-compatible admission retains $\Pi_{\mathrm{obs}}$ as justification
history but validates the sealed effect under $\Pi_{\mathrm{commit}}$; it does
not serialize the original derivation.

\subsection{TCT Profiles and Composition}

\begin{definition}[TCT contract]
\emph{TCT-S Safety} combines strict Cognitive Serializability, exact-effect
authorization by
\[
\mathsf{AuthorizedExact}
 (Q_X,\Gamma_X,id,h_X,h_{\mu},\mathcal D_X,
  S_{J_{\mathrm{obs}}}(s),d),
\]
justification--commit correspondence in which the receipt binds both sealed
$X$ and post-seal $\Gamma_X$, and no-premature belief activation.
\emph{TCT-C Safety} substitutes joint Effect-Compatible Cognitive Admission
and must be labeled as such. Either profile adds conditional eventual
reconciliation for \emph{TCT Correctness}.
\end{definition}

\begin{proposition}[Constraint-relative non-implication]
\label{prop:sea_separation}
For an ATP constraint set $\mathcal C$ that omits a commit-relevant dependency,
cross-dependency invariant, or commit-spanning mechanism,
$\mathsf{SEA}(\mathcal C)$ implies neither TCT-S nor TCT-C Safety.
\end{proposition}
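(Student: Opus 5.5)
The plan is to prove non-implication by counterexample. For each of the three omission cases I will construct a concrete joint history $H$ that satisfies $\mathsf{SEA}(\mathcal C)$ but violates every clause of TCT-S and TCT-C Safety that could rescue it. The model of $\mathsf{SEA}(\mathcal C)$ I use is the one stated for ATP: a proposal is admitted iff every predicate represented in $\mathcal C$ holds against current state at validation time, and the resulting database transaction is serializable. Since $\mathsf{SEA}(\mathcal C)$ constrains only predicates in $\mathcal C$, any state component or event outside $\mathcal C$ may vary freely in $H$ without affecting the SEA verdict. The proof therefore reduces to exhibiting, for each case, a perturbation of an omitted component that SEA cannot observe but that breaks a mandatory TCT clause.

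\textbf{Omitted dependency.} Suppose $\mathcal C$ omits a token $\delta\in J_{\mathrm{obs}}$, say a selection token $\delta_{\mathrm{selection}}$ for the vendor certificate. In $H$, the issuer revokes the certificate after capture and before validation. The content token remains faithful by Equation~\ref{eq:faithful_content}, so SEA admits the proposal. Equation~\ref{eq:faithful_selection} fails at every candidate $s_i$ after the revocation, so clause~3 of Definition~\ref{def:cognitive_serializability} fails. Choosing $s_i$ before the revocation is ruled out: under the validity-boundary edges of clause~2 and clause~5, $s_i$ must lie after the acceptance events and inside a protected interval extending to $d_i$, and no such interval exists. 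For TCT-C, I register $\Phi$ to require applicability of the selected artifact. The revoked current vector then falsifies Equation~\ref{eq:compatible_admission}. Any nontrivial registered $\Phi$ works here, so I only need one admissible registry in which the conclusion fails.

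\textbf{Omitted cross-dependency invariant.} Suppose $\mathcal C$ contains only the per-token predicates $\varphi_{\delta}$ and has no decomposition certificate. I reuse the exposure/cap example from the text: exposure rises from $20$ to $45$ and the cap falls from $100$ to $80$. Each token-local check passes, so SEA admits the order of $60$. The joint $\Phi$ requires $60+45\le 80$, which is false, so TCT-C fails. TCT-S also fails, because both tokens have changed and clause~3 is violated.

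\textbf{Omitted commit-spanning mechanism.} Suppose $\mathcal C$ checks an authority premise at validation time but provides no A--D fence through $d_i$. In $H$, the check passes at $v$ and the approval is revoked at some $u$ with $v\prec u\prec d_i$. SEA admits the proposal because validation was correct at $v$. Clause~5 and the requirement of overlapping protection through $d_i$ in Definition~\ref{def:effect_compatible_admission} both require $\mathsf{Hold}$ on $[a_{ij},d_i]$, and Equation~\ref{eq:grant_holding} fails at $u$. Hence both profiles fail.

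\textbf{Main obstacle.} The hardest step is the quantifier in the TCT definitions: they assert that there \emph{exists} an order and events $s_i$. It is not enough to show that the natural choice fails; every choice of $s_i$ must fail. I will handle this by making the perturbation unavoidable. In the first case, no fenced interval through $d_i$ contains an $s_i$ preceding the revocation. In the second, the only states in which all guards are held postdate both changes. In the third, the revocation lies in every interval that extends to $d_i$. A secondary subtlety is that $\mathsf{SEA}(\mathcal C)$ must be a genuine satisfiable instance. I will check this by confirming that the database projection $H_D$ of each history is trivially serial, for example by using a single committed envelope, so that clause~1 holds and the failure is attributable only to the omission.
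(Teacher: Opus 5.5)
Your route is the paper's. Its proof takes two executions that agree on every predicate in $\mathcal C$ but differ in the omitted dependency or joint invariant, notes that an SEA-respecting gate may admit both, and observes that the invalid one violates item 3 or 5 of Definition~\ref{def:cognitive_serializability} or Definition~\ref{def:effect_compatible_admission}. Your case-by-case histories instantiate that argument, and your explicit treatment of the existential choice of $s_i$ is more careful than the paper's single line.

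One concrete step fails as written. In the cross-invariant case, ``each token-local check passes'' is false for the numbers you reuse. With the invariant amount $+$ exposure $\le$ cap, the exposure-only check evaluates the new exposure against the observed cap: $60+45=105\le 100$, which is false. Against the current cap it fails as well. So $\mathcal C$ rejects, SEA does not admit the order, and the history is not a counterexample. The paper's illustrative example contains the same slip; only the cap-side check $60+20\le 80$ passes. Instead, choose the new exposure in $(20,40]$, e.g.\ exposure $20\to 40$ and cap $100\to 80$. Then $60+40\le 100$ and $60+20\le 80$ both pass token-locally, while the joint check $60+40\le 80$ fails. So $\Phi$ rejects, and clause~3 fails for TCT-S at any $s_i$ after the changes.

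There are also two places to tighten.
\begin{enumerate}
\item In the omitted-dependency and omitted-mechanism cases, the cleanest way to defeat every candidate $s_i$ is through $\mathsf{DepCovered}$. The unchecked selection or authority premise is not self-stable, has no mechanism-A guard, and is covered by no B/C/D item. So $\mathsf{DepCovered}(J_{\mathrm{obs}},Q_X)$ fails, and with it clause~5 and TCT-C's overlapping-protection requirement, wherever $s_i$ lies. Your appeal to Equation~\ref{eq:grant_holding} presupposes a grant that an unfenced execution never obtains.
\item The same observation repairs your claim that ``any nontrivial registered $\Phi$ works.'' A registered $\Phi$ that does not consult the selection token need not reject, but TCT-C still fails because the token has no discharge.
\end{enumerate}

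Finally, the proposition covers every $\mathcal C$ omitting such an element, so your specific instances (``say a selection token'') are only illustrations. The general claim rests on your opening principle that components outside $\mathcal C$ can vary without changing the SEA verdict. That principle is exactly the paper's indistinguishability argument.
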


\begin{proof}
Choose two executions identical on every predicate in $\mathcal C$ but with the
omitted dependency or joint invariant valid in one and invalid in the other.
Admission may accept both while satisfying SEA relative to $\mathcal C$; the
second violates item 3 or 5 of
Definition~\ref{def:cognitive_serializability} under TCT-S, or
Definition~\ref{def:effect_compatible_admission} under TCT-C. Mnemosyne states
that its guarantees are relative to $\mathcal C$~\cite{mnemosyne2026}; this is
a contract-coverage result.
\end{proof}

\begin{proposition}[ATP embedding]
\label{prop:atp_embedding}
An ATP runtime can implement either TCT profile when $\mathcal C$ contains the
complete typed dependency vector, trusted sealed $Q_X$, joint recertifier,
policy-resolution rules, selected validity mechanisms, post-seal $\Gamma_X$,
the $\mathsf{DepCovered}$ and $\mathsf{PlanCurrent}$ checks, and atomic
effect/receipt contract.
\end{proposition}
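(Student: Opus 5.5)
The proof is constructive: I will exhibit an ATP runtime whose admission gate, once its constraint set $\mathcal C$ contains the listed components, enforces every clause of Definition~\ref{def:cognitive_serializability} (for TCT-S) or Definition~\ref{def:effect_compatible_admission} (for TCT-C). The argument mirrors Proposition~\ref{prop:sea_separation} in reverse. There, an omitted predicate let two executions agree on $\mathcal C$ while differing on TCT Safety. Here I show that with the complete contract in $\mathcal C$, any execution that $\mathsf{SEA}(\mathcal C)$ accepts already satisfies the TCT clauses. The basic move is to encode each TCT admission check as a deterministic predicate $c\in\mathcal C$, so that SEA's guarantee ("every admitted proposal satisfies $\mathcal C$ at validation") yields the clause directly.

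The steps, in order, are as follows.
\begin{enumerate}
\item \emph{Represent the inputs.} Place the sealed envelope $X$ (including $Q_X$, $h_{\mathsf{Atoms},X}$, $\mathcal R_X$, $r_{\Pi,o}$, $\Pi^{+}$ and $\mathcal D_X$) in the proposal. Make the seal, digest and identity-binding verification the first members of $\mathcal C$. Since the reasoner cannot produce $J_{\mathrm{obs}}$ or $Q_X$, these checks establish trusted provenance.
\item \emph{Encode the admission conditions as predicates over the protected state $S_{J_{\mathrm{obs}}}(s)$.} These are: $\mathsf{Faithful}$ for every token (TCT-S) or $\Phi_{r_\Phi}$ (TCT-C); policy equality (item~4) or $\Pi_{\mathrm{eff}}$ resolution; $\mathsf{ExecAllowed}$ (Equation~\ref{eq:registry_applicability}); $\mathsf{DepCovered}$, $\mathsf{PlanCurrent}$ and $\mathsf{ProfileCurrent}$; and exact verification of $\Gamma_X$ against $Q_X^{B}$ via Equations~\ref{eq:typed_grant}--\ref{eq:grant_holding}.
\item \emph{Map ATP's current-state validation onto TCT's validation point.} ATP validates against current state under its own transaction. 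I would require the host to take the canonical local guards and the sorted B/C reservations before evaluating $\mathcal C$, and to identify the evaluation point with $s_i$. The host's serializable commit then supplies item~1, and the guard and reservation discipline supplies item~2.
\item \emph{Obtain the remaining TCT Safety components.} I take the co-committed receipt from ATP's transition log and outbox staging. Through its idempotent $\mathsf{Finalize}$ delivery, this receipt gives justification--commit correspondence and no premature belief activation; the reconciliation clause follows from the same receipt path.
\end{enumerate}

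The main obstacle is the validity horizon. SEA certifies predicates at a validation instant, whereas items~2 and~5 require protection from $a_{ij}$ through $d_i$. A predicate that is merely evaluated at $s_i$ does not prevent drift before $d_i$. I would handle this by treating the "selected validity mechanisms" in $\mathcal C$ not as Boolean checks but as obligations the host must execute: holding mechanism-A guards and B/C reservations until the durability event named by $\mathcal D_X$, and releasing grants only on receipt-bound finalization or terminal-abort proof.

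The proposition therefore holds exactly when the ATP runtime's commit path honors these obligations. I would state this explicitly as part of what "$\mathcal C$ contains the selected validity mechanisms" means. With it in place, the full serializability argument is deferred to Theorem~\ref{thm:cognitive_serializability}, applied with the ATP gate as the TCT gate.
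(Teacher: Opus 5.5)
Your proposal is correct and follows essentially the paper's route: the paper's proof likewise relies on proposal non-authority and gate closure to prevent bypass, on the complete constraints in $\mathcal C$ to perform the selected admission rule, on mechanisms A--D (read, as you do, as protection obligations rather than point checks) to hold the decision and its simultaneous witnesses through $d_i$ in $\mathcal D_{X_i}$, on the common-order contract for incompatible external reservations, and on atomic effect/receipt persistence for correspondence. Two small corrections: item~1 comes from the complete guards and writer participation you impose in step~3 (Theorem~\ref{thm:cognitive_serializability} explicitly does not rely on the host's serializable isolation), not from the host's serializable commit; and for TCT-C the deferral should be to Proposition~\ref{prop:compatible_profile} rather than Theorem~\ref{thm:cognitive_serializability}.
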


\begin{proof}
Proposal non-authority and gate closure prevent bypass; complete constraints
perform the selected TCT admission rule. Mechanism A--D protects the decision
and, where the rule consumes raw state, its exact simultaneous witnesses
through runtime event $d_i$ in sealed $\mathcal D_{X_i}$. The common-order
contract orders incompatible external
reservations, and atomic effect/receipt persistence supplies correspondence.
\end{proof}

\begin{corollary}[Strict admission-core reduction]
\label{prop:local_reduction}
If all dependencies and policy are protected local database state,
$\Delta B=\varnothing$, $\Omega_{\mu}$ completely declares the footprint, and
authorization is absent, local, or vacuously true, then the validation core of
$\mathsf{Admit}_S$ reduces to conventional exact read-set validation followed
by atomic application of $W_D(\mu)$.
\end{corollary}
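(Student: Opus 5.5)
The plan is to instantiate each component of $\mathsf{Admit}_S$ under the stated hypotheses and show that every non-database component either becomes trivially true or collapses into a local read-set check. The argument works through Definition~\ref{def:cognitive_serializability} item by item.

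\textbf{Dependencies and plan.} Every token in $J_{\mathrm{obs}}$ is protected local database state, so $J_E$, $J_{A\mathrm{obs}}$ and the policy portion of $J_P$ consist of local rows or predicates. Authorization is absent, local, or vacuous. Hence the mandatory plan $Q_X$ has no mechanism-B, C, or D items, $\Gamma_X=\varnothing$, and the sorted external reservation loop is empty. $\mathsf{DepCovered}(J_{\mathrm{obs}},Q_X)$ is then discharged entirely by $\mathsf{LocalA}$ or $\mathsf{SelfStable}$. Since the plan resolver reads only guarded local state, $\mathsf{PlanCurrent}$ and $\mathsf{ProfileCurrent}$ are evaluations over the same guarded reads.

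\textbf{Faithfulness.} Under mechanism A, each check in Equation~\ref{eq:faithful} reduces to $\mathsf{Obs}_\delta(S_q(s))=v_\delta$, together with a validity predicate that is itself a guarded local read. The policy equality in item~4 becomes equality of a guarded local policy row with its captured version. Taking $s_i$ after the final guard (Definition~\ref{def:serialization_event}), the conjunction of these checks is exactly ``every recorded read version in $R_D(\mu)\cup J_{\mathrm{obs}}$ is unchanged.'' That is conventional exact read-set validation.

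\textbf{Effect and belief.} With $\Delta B=\varnothing$, no belief activation or reconciliation obligation enters the validation core. Because $\Omega_\mu$ completely declares the footprint, the remaining step is atomic application of $W_D(\mu)$ together with the co-committed receipt, and the receipt is bookkeeping outside the validation core. Guards held through $d_i$ are ordinary local locks, so the lock-point order is the usual two-phase order.

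\textbf{Main obstacle.} The hard part is showing that nothing extra survives the reduction. First, registry applicability (Equation~\ref{eq:registry_applicability}) and the discovery guard must also count as local policy state, which requires reading them as local rows included in the read set. Second, the remaining non-data checks must be side conditions rather than validation. These are seal and digest verification and normalizer-epoch consistency with an empty atom vector. I would argue that the seal and digest checks are integrity checks on the proposal rather than on state, and therefore fall outside the validation core.
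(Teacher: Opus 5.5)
Your proposal follows the paper's own argument: under the hypotheses, content/selection and external fences vanish, and faithfulness becomes exact local version validation over the state named by $\mathsf{LocalGuards}_A(J_{\mathrm{obs}},Q_X)$ and $R_D(\mu)$; footprint completeness yields the registered write set, $\Delta B=\varnothing$ removes only reconciliation, and sealing, idempotency, and receipt correspondence stay outside the validation core. The one conjunct of $\mathsf{Admit}_S$ you leave unplaced is $\mathsf{Pre}_{\mu}(\sigma_s^{-})\land\mathsf{Preserves}_{\Pi_{\mathrm{eff}}}(\mu,\sigma_s^{-})$, which the paper keeps explicitly as the ``mandatory invariants'' accompanying the write phase rather than folding them into read-set validation.
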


\begin{proof}
Content selection and external fences disappear. After the complete guard set
is held, faithfulness is exact local version validation over the state named by
$\mathsf{LocalGuards}_A(J_{\mathrm{obs}},Q_X)$ and $R_D(\mu)$, followed by the
registered write set and mandatory invariants. This is the classical validation
shape~\cite{kung1981occ}. The
reduction concerns admission validation, not the complete TCT-S bundle:
canonical sealing, idempotency, and effect--receipt correspondence remain;
$\Delta B=\varnothing$ removes only belief reconciliation.
\end{proof}

\section{The TCT Commit Protocol}
\label{sec:protocol}

\subsection{Lifecycle}

\begin{enumerate}
  \item \textbf{Observe.} Trusted mediation produces $I_J$, $K_J$, and
  $J_{\mathrm{obs}}$,
  captures
  $r_{\Pi,o}$, loads the corresponding $\Pi_{\mathrm{obs}}$, and exposes that
  policy during derivation. The reasoner receives only these mediated
  projections. A ``current artifact'' response yields both content and
  selection tokens.
  \item \textbf{Reason and resolve.} The reasoner proposes
  $(op,p,\Delta B,\Pi^{+})$ while receiving $\Pi_{\mathrm{obs}}$. Trusted
  infrastructure computes
  $\mathsf{ResolveSeal}(op,p,c,J_{\mathrm{obs}},\Pi^{+})
   =\langle\mu,\Omega_{\mu},Q_X,m_X\rangle$
  and makes $\Delta B$ tentative. The trusted
  registry supplies every mandatory plan item; optional requests can only
  strengthen that plan, and resolution rejects unless
  $\mathsf{DepCovered}(J_{\mathrm{obs}},Q_X)$. Resolution does not relabel a
  post-reasoning policy as observed.
  \item \textbf{Seal.} The mediator seals canonical $\bar X$, including the
  content/selection dependencies, $m_X$, $r_{\Omega}$, the complete executable
  reference set $\mathcal R_X$ (including $r_{\mathrm{norm}}$),
  $h_{\mathsf{Atoms},X}$, $r_{\Pi,o}$, $\Pi^{+}$, the exact $Q_X$,
  $\mathcal D_X$, and the exact effect digest. Its permanent
  $\mathsf{IdBind}(id,h_X)$ record prevents
  later reuse of $id$ for different sealed bytes, including after abort.
  $\Gamma_X$ does not yet exist and is not part of the sealed payload.
  \item \textbf{Guard and identify.} After prechecking immutable sealed bytes, a
  short \texttt{READ COMMITTED} transaction acquires the complete canonical
  non-ID guard set as key--mode pairs. The set includes the coarse registry
  discovery guard derivable from sealed tenant and operation identifiers. It
  then attempts the unique $(id,h_X)$ insertion, which realizes the final
  $g_{id}$ guard, before any protected read.
  \item \textbf{Resolve current state.} Under the held guards, the gate reads
  local state, resolves $(\Pi_{\mathrm{commit}},r_{\Pi,c},r_{\Phi})$, verifies
  applicability of $\mathcal R_X$, and executes $Q_X$. It normalizes B/C
  scopes, merges duplicate identities at the strongest mode, and reserves
  them in $k_{\mathrm{atom}}$ order (or through a common coordinator).
  $\Gamma_X$ covers exactly the B items. Event $s_i$ follows all acceptances
  and witnesses. Recomputed current plan and profile must equal sealed $Q_X$
  and $m_X$; otherwise the gate aborts and requires resealing.
  \item \textbf{Validate and commit.} TCT-S checks
  $\mathsf{Faithful}(\delta,S_{\delta}(s))$ for every
  $\delta\in J_{\mathrm{obs}}$, including database, evidence, belief, policy,
  and observed-authority tokens under every selected mechanism. TCT-C applies
  the registered joint recertifier to the complete simultaneously protected
  current vector and $\Pi_{\mathrm{commit}}$.
  All validation reads share one protected pre-effect state $\sigma_s^{-}$;
  only successful validation may create
  $\sigma_s^{+}=\mu(\sigma_s^{-})$. The typed effect, canonical bytes,
  dependencies, $Q_X$,
  observed and commit policy references, executable references, verdict,
  $\Gamma_X$ (or immutable references to its grants), receipt, and outbox rows
  reach runtime event $d_i$ in $\mathcal D_X$ atomically.
  \item \textbf{Finalize grants.} An idempotent post-commit outbox delivers
  $\rho_i$ or equivalent durability proof for each
  $(id,h_X,h_{\mu},h_{Q_X},j,n_j,\mathcal D_X)$. The issuer moves the matching
  reservation from \textsc{Reserved} to \textsc{Consumed}; duplicate delivery
  is a no-op.
  \item \textbf{Reconcile.} Delivery of $\rho_i$ moves $\Delta B_i$ to
  \textsc{Committed Pending Reconciliation}. ESR may activate, reject, or merge
  it under its own rules~\cite{esr2026}.
\end{enumerate}

\subsection{Admission Contract}

Let $S_{J_{\mathrm{obs}}}(s)$ denote the complete current dependency vector
observed after the final guard. Every committed execution satisfies
\begin{multline}
\mathsf{Base}(X,\Gamma_X,s,t,d)\equiv
 \mathsf{SealOK}(X)\land\mathsf{EffectDigestOK}(X)\\
{}\land\mathsf{RegistryOK}(X,r_{\Pi,c},r_{\Phi})\\
{}\land\mathsf{DepCovered}(J_{\mathrm{obs}},Q_X)\\
{}\land\mathsf{PlanCurrent}(Q_X,s)\\
{}\land\mathsf{ProfileCurrent}(m_X,s)\\
{}\land\bigwedge_{r_f\in\mathcal R_X}
 \mathsf{ExecAllowed}_{\Pi_{\mathrm{commit}}}(r_f)\\
{}\land t_{\min}\leq t\leq t_{\max}\\
{}\land\mathsf{MechanismThroughD}
 (J_{\mathrm{obs}},Q_X,\Gamma_X,id,h_X,h_{\mu},\\
\mathcal D_X,S_{J_{\mathrm{obs}}}(s),d)\\
{}\land\mathsf{AuthorizedExact}
 (Q_X,\Gamma_X,id,h_X,h_{\mu},\mathcal D_X,\\
S_{J_{\mathrm{obs}}}(s),d)\\
{}\land\mathsf{Pre}_{\mu}(\sigma_s^{-})
{}\land\mathsf{Preserves}_{\Pi_{\mathrm{eff}}}(\mu,\sigma_s^{-}).
\label{eq:base_admit}
\end{multline}
$\mathsf{RegistryOK}$ verifies $r_{\Omega}$, $\mathcal R_X$, and both policy
and recertifier references. $\mathsf{PlanCurrent}$ recomputes
$Q_{\mathrm{req}}(s)$ under protected current state and requires canonical
equality with sealed $Q_X$; it cannot extend the plan after sealing.
$\mathsf{MechanismThroughD}$ requires complete execution of every
$q_j\in Q_X$ by its sealed mechanism $M_j$, verifies the protected witness
constructed from exactly $\mathsf{Cover}(q_j)$, makes its authenticated
canonical payload operationally available, and verifies that every
required holding interval contains $s$ and extends through the actual
durability event $d=\mathsf{ReachDurability}(T,\mathcal D_X)$. For mechanism B
it rejects an omitted or unexpected grant and verifies the exact issuer, scope,
mode, decision, semantics, $m_X$, witness payload and digest, $id$, $h_X$, $h_{\mu}$, and
$\mathcal D_X$ bindings before accepting that holding interval.

$\mathsf{AuthorizedExact}$ requires
\[
\mathsf{dom}(\Gamma_X)=\mathsf{ord}(Q_X^B)
\]
under plan ordinal: every mechanism-B obligation has one grant and no grant is
unexpected. For each matched pair it verifies exact issuer, scope, S/X mode,
covered-token vector, witness-construction reference, validity-semantics
reference, required decision, $m_X$, $id$, $h_X$, $h_{\mu}$, $h_{Q_X}$,
$\mathcal D_X$, and single-use nonce; it recomputes
Equation~\ref{eq:grant_witness}, authenticates $w_j$, requires
$\mathsf{Witness}_{r_{w,j}}(S_{\mathsf{Cover}(q_j)}(s))=w_j$, and verifies
holding through $d$. Plan items using A, C, or D admit no substitute grant and
are discharged only by $\mathsf{MechanismThroughD}$. Thus exact authorization
is relative to trusted $Q_X$, never to $J_{A\mathrm{obs}}$. In TCT-S, the
recomputed witness must equal the value or predicate represented during
derivation; in TCT-C, it must be the identical current vector component passed
to $\Phi$. $\Gamma_X$ is the canonical plan-ordinal map defined in
Section~\ref{sec:system-model}. Write $\Gamma=\Gamma_X$ below. The two profiles
are
\begin{multline}
\mathsf{Admit}_{S}(X,\Gamma,s,t,d)\equiv\\
 \mathsf{Base}(X,\Gamma,s,t,d)\\
{}\land
 \mathsf{Canon}(\Pi_{\mathrm{commit}})
 =\mathsf{Canon}(\Pi_{\mathrm{obs}})\\
{}\land\bigwedge_{\delta\in J_{\mathrm{obs}}}
 \mathsf{Faithful}(\delta,S_{\delta}(s)),
\label{eq:strict_admit}
\end{multline}
\begin{multline}
\mathsf{Admit}_{C}(X,\Gamma,s,t,d)\equiv\\
 \mathsf{Base}(X,\Gamma,s,t,d)\\
{}\land\mathsf{Compatible}_X
 \bigl(X,S_{J_{\mathrm{obs}}}(s),
 \mu,\Pi_{\mathrm{commit}},t\bigr).
\label{eq:compatible_admit}
\end{multline}
Each $\gamma_{X,j}$ binds its $Q_X$ item, covered vector, witness function,
envelope, effect, and durability domain as specified in
Equation~\ref{eq:typed_grant}. It cannot authorize another envelope, witness
subset, or domain, even for the same effect. $\Pi_{\mathrm{obs}}$ remains the
derivation record; $\Pi_{\mathrm{commit}}$ governs admission.

The canonical receipt core is
\[
\begin{aligned}
\rho_i=\mathsf{Receipt}\langle&
 id_i,h_{X_i},h_{\mu_i},h_{Q_{X_i}},\\[-1mm]
&h_{Q_{\mathrm{req},i}},h_{\mathsf{DepCov},i},
 h_{J_{\mathrm{obs},i}},\\[-1mm]
&m_{X_i},m_{\mathrm{req},i},
 r_{\Pi,o,i},r_{\Pi,c,i},r_{\Phi,i},\\[-1mm]
&h_{\Gamma_i},h_{W_i},\mathcal D_{X_i},\mathit{verdict}_i\rangle .
\end{aligned}
\]
It co-commits with the effect, $Q_X$, both policy references,
$\mathcal R_X$, $r_{\Phi}$, and the exact $\Gamma_X$ or immutable
content-addressed grant references, plus canonical witness payloads
$W_i=\langle j\mapsto w_{ij}\rangle$ or immutable authenticated
content-addressed references. Its domain is exactly
\[
\begin{aligned}
\mathsf{ValExt}(Q_X)=\{j:\;&M_j\in\{B,C,D\}\\[-1mm]
 &{}\land\mathsf{AdmissionConsumes}
 (q_j,S_{J_{\mathrm{obs}}}(s))\}.
\end{aligned}
\]
For B, $\mathsf{AuthorizedExact}$ consumes the authenticated grant witness, so
every B ordinal belongs to $\mathsf{ValExt}(Q_X)$.
Independence or commutativity does not remove an item from this domain; it
removes only a precedence edge. Here
$h_{Q_{\mathrm{req},i}}=h_{Q_{X_i}}$
records the successful current-policy plan check, and
$m_{\mathrm{req},i}=m_{X_i}$ records the successful exact profile check.
$h_{\mathsf{DepCov},i}$ commits to the complete dependency-discharge map. Once
the database reaches $d_i$, the
receipt is durable under the semantics of $\mathcal D_{X_i}$ and can finalize
the bound issuer reservations idempotently.

\subsection{Belief-State Safety and Liveness}

The receipt $\rho_i$ is a durable database record; $h_{X_i}$ is only an envelope
digest. No-premature activation is
\begin{multline}
\mathsf{Activated}(\Delta B_i)\Rightarrow
 \mathsf{DurableReceipt}_{\mathcal D_{X_i}}(\rho_i,d_i)\\
{}\land\mathsf{ReceiptProfile}(\rho_i)=m_{X_i}
 \land\mathsf{WitnessStoreOK}(\rho_i,W_i).
\label{eq:no_premature}
\end{multline}
Conditional liveness is
\begin{multline}
\mathsf{DurableReceipt}_{\mathcal D_{X_i}}(\rho_i,d_i)
\land\mathsf{FairDelivery}\\
\land\mathsf{ESRAdmissible}(\Delta B_i)
\Rightarrow\Diamond\mathsf{Activated}(\Delta B_i).
\label{eq:eventual_reconcile}
\end{multline}
The state machine includes
\[
\begin{aligned}
\textsc{Tentative}&\rightarrow\textsc{Committed Pending}\\[-1mm]
                  &\rightarrow\textsc{Active},\\
\textsc{Tentative}&\rightarrow\textsc{Aborted},\\
\textsc{Tentative}&\rightarrow\textsc{Outcome Unknown}.
\end{aligned}
\]
\textsc{Outcome Unknown} is a client/epistemic state after a lost response, not
a visible database claim. The client retries or queries by $id$ until it
observes the durable receipt or safely executes after an aborted competitor.

\section{Sufficient Conditions and Boundary Results}
\label{sec:theorems}

Seven assumptions support the joint precedence graph $\mathcal G_H$ and the
strict and compatible profile results. Effect--receipt, retry, and belief
properties follow. The final lemma identifies the limit imposed by inputs that
the gate cannot distinguish.

\subsection{Assumptions}

The seven assumptions fall into three groups: trusted capture and resolution
(1--2), local and external coordination (3--6), and durability (7).

\begin{enumerate}
  \item \textbf{Capture and gate closure.} Every operational input and
  authoritative effect traverses trusted mediation and the sole commit gate.
  The reasoner sees only $I_J$, $K_J$, and projections of the complete
  $J_{\mathrm{obs}}$; neither it nor the caller can alter mediator records or
  invoke a dispatcher outside the gate. The mediator permanently binds each
  $id$ to one $h_X$ and rejects mismatches. $J_{A\mathrm{obs}}$ contains only
  derivation-time authority observations; post-seal grants $\Gamma_X$ belong
  to neither $J_{\mathrm{obs}}$ nor $Q_X$.

  \item \textbf{Trusted registry and policy.} Canonicalization, cryptographic
  verification, versioned semantics, registry resolution, and database
  storage are correct. Hashes resist collisions and second preimages;
  signatures and MACs are unforgeable; keys are authenticated and role-bound;
  and canonical encodings are versioned and domain-separated. Executable
  definitions are immutable, retrievable, versioned, and digested.
  Protected current policy alone determines applicability. Before reasoning,
  context $c$ fixes the operation class and mandatory derivation policy, from
  which $\Pi_{\mathrm{obs}}$ is loaded. Under the protected policy head,
  commit-time resolution yields
  $(\Pi_{\mathrm{commit}},r_{\Pi,c},r_{\Phi})$.
  $\Omega_\mu$ conservatively supplies $R_D(\mu)$, $W_D(\mu)$,
  $\mathsf{Pre}_\mu$, and the permitted recertifier family.
  $\mathsf{ResolveSeal}$ derives complete $Q_X^{\mathrm{mand}}$ and trusted
  $m_X$; outside requests may only strengthen the plan, and sealing requires
  $\mathsf{DepCovered}(J_{\mathrm{obs}},Q_X)$.

  \item \textbf{Guard-first strict 2PL.} For any token in $J_{\mathrm{obs},i}$ using Mechanism A or any mechanism-A plan item in $Q_{X_i}$, let
  \[
  \begin{aligned}
  L_i^{\mathrm{obs}}
   &={}\{\langle g_{\delta},m_{\delta}\rangle: \delta\in J_{\mathrm{obs},i},\ \delta\text{ uses A}\},\\
  L_i^Q
   &={}\{\langle g_q,m_q\rangle: q\in Q_{X_i},\ M(q)=A\},\\
  L_i^A&=L_i^{\mathrm{obs}}\cup L_i^Q.
  \end{aligned}
  \]
  Thus $L_i^A=\mathsf{LocalGuards}_A(J_{\mathrm{obs},i},Q_{X_i})$.
  Mode $X$ applies when admission consumes or modifies state; otherwise mode
  $S$ applies. The complete transaction guard set is
  \[
  \begin{aligned}
  G_i={}&\{\langle g^{\mathrm{disc}}_{\langle tenant_i,op_i\rangle},S\rangle, \langle g_{id_i},X\rangle\}\\
       &{}\cup L_i^A\\
       &{}\cup\{\langle g,S\rangle: g\in\mathsf{Keys}(R_D(\mu_i))\}\\
       &{}\cup\{\langle g,X\rangle: g\in\mathsf{Keys}(W_D(\mu_i))\},
  \end{aligned}
  \]
  with $X$ dominating $S$ for duplicate keys. From sealed specifications, the
  gate acquires all non-ID guards in canonical order and obtains
  $\langle g_{id_i},X\rangle$ by unique envelope insertion. No protected
  database, registry, policy, predicate, or aggregate read precedes these
  acquisitions, and all guards remain held through $d_i$ in
  $\mathcal D_{X_i}$.

  \item \textbf{Complete participation and footprint.} Every relevant writer
  takes the same exclusive guard, and the typed dispatcher stays within the
  registered footprint. If resolution needs an unguarded object, the
  transaction aborts and must be resealed; $G_i$ cannot grow after validation
  begins.

  \item \textbf{Database observation rule.} In the \texttt{READ COMMITTED}
  realization, every commit-relevant statement follows acquisition of $G_i$.
  Statements may receive successive snapshots, but full writer participation
  keeps protected values unchanged across them. Validation therefore observes
  one pre-effect state $\sigma_i^{-}$; only successful validation permits
  $\sigma_i^{+}=\mu_i(\sigma_i^{-})$.

  \item \textbf{External validity and single growing phase.}

  \emph{Coverage.} Trusted sealed $Q_{X_i}$ contains every authority and
  external-validity obligation and assigns mechanism A, B, C, or D
  (Section~\ref{sec:cognitive-serializability}). Mechanism-A state belongs to
  $L_i^A$. $\mathsf{DepCovered}(J_{\mathrm{obs},i},Q_{X_i})$ requires every
  captured dependency to be self-stable, protected by A, or covered through
  $d_i$ by B, C, or D. Under protected current policy and the complete vector,
  the gate requires
  $Q_{\mathrm{req}}(s_i)=Q_{X_i}$ canonically and
  $m_{\mathrm{req}}(s_i)=m_{X_i}$; otherwise it aborts and requires resealing.
  $\mathsf{AuthorizedExact}$ maps every mechanism-B item to exactly one
  post-seal grant and admits no extras. Each grant binds
  Equation~\ref{eq:typed_grant}, including its covered tokens, witness
  constructor, and $\mathcal D_{X_i}$, and preserves the witness from
  $a_{ij}$ through $d_i$. Event $s_i$ occurs only after every mechanism
  accepts, so all protected intervals contain it.

  \emph{Order.} Local guards are acquired first. B/C scopes normalize to
  atomic identities, with duplicate identities merged and $X$ dominating $S$.
  The same physical lock has the same
  \[
  \begin{aligned}
  k_{\mathrm{atom}}(a)=\mathsf{Canon}\langle&
    \mathit{coordinationDomain}(a),\\[-1mm]
    &\mathit{atomicLockId}(a)\rangle
  \end{aligned}
  \]
  in every envelope. The gate acquires B/C reservations in global
  $k_{\mathrm{atom}}$ order, or a common coordinator supplies that order, and
  holds incompatible reservations through $d_i$. No guard or reservation is
  released during this growing phase. Its lock point $\lambda_i$, the final
  acquisition event, satisfies $\lambda_i\preceq s_i$. A registered
  independence or commutativity proof may remove an edge, but not a required
  grant, participant decision, or certificate. Atomic-commit participants
  additionally provide serializable isolation or compatible reservation
  ordering.

  \emph{Grant lifecycle.} A grant is released only after authoritative abort
  proof bound to its envelope, plan digest, ordinal, and nonce; timeouts and
  missing receipts are insufficient. A retry keyed by
  $\langle id_i,h_{X_i},h_{Q_{X_i}},j\rangle$ returns the same live grant and
  nonce. After commit, idempotent receipt finalization moves the reservation
  from \textsc{Reserved} to \textsc{Consumed}. Indefinite reservation is safe
  but not live.

  \item \textbf{Durability and failure.} Sealed $\mathcal D_{X_i}$ identifies
  the database target and durability semantics, and $d_i$ is the event at
  which they are satisfied. At $d_i$, the effect commits atomically with its
  canonical metadata, $Q_{X_i}=Q_{\mathrm{req},i}$, dependency-discharge map,
  $m_{X_i}$, policy references, exact $\Gamma_{X_i}$, witnesses, verdict,
  receipt, and finalization outbox. Components are fail-stop; any guard,
  grant, validation, or durability failure aborts. Epistemic liveness
  (Equation~\ref{eq:eventual_reconcile}) additionally requires fair delivery
  and ESR admissibility.
\end{enumerate}

\subsection{Precedence Graph and Lock-Point Order}

Let $\mathcal G_H=(V,E)$ be the joint precedence graph, with one vertex $T_i$
per committed envelope. Assumption~6 gives each transaction a single growing
phase over $G_i$ and its external or participant reservations. Its lock point
$\lambda_i$ is the final acquisition event; nothing is released before
$d_i$. An edge $(T_i,T_j)$ records one of four dependencies:
\begin{description}
  \item[$E_D$:] Database read--write, write--read, and write--write conflicts;
  \item[$E_G$:] Incompatible shared/exclusive local guard acquisitions;
  \item[$E_R$:] Required real-time order between non-overlapping commit paths;
  \item[$E_F$:] Incompatible grant reservations, consumption, or authoritative
  release under B, and participant conflicts under C.
\end{description}
Thus $E=E_D\cup E_G\cup E_R\cup E_F$. Mechanism-A conflicts appear in
$E_D\cup E_G$; D creates no conflict edge; and registered independence or
commutativity removes the corresponding edge.

\subsection{Profile Guarantees}

\begin{theorem}[Strict Cognitive Serializability]
\label{thm:cognitive_serializability}
Under Assumptions 1--7, every history committed by the guard-first \tct-S gate is strictly cognitively serializable when trusted $Q_X$ is complete, $\mathsf{DepCovered}(J_{\mathrm{obs}},Q_X)$ and $\mathsf{PlanCurrent}(Q_X,s)$ hold, and its mechanism-B obligations are supplied exclusively by exact post-seal grants $\Gamma_X$.
\end{theorem}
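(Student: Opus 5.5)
The plan is the classical two-phase-locking lock-point argument, lifted to the joint graph $\mathcal G_H$. The total order $\prec$ will be the order of lock points $\lambda_i$. The serialization event $s_i$ will be the event of Definition~\ref{def:serialization_event}. We then check items 1--5 of Definition~\ref{def:cognitive_serializability} one at a time.

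\textbf{Step 1: every edge respects lock-point order.} We show that each edge $(T_i,T_j)\in E$ implies $\lambda_i\prec\lambda_j$, treating the edge kinds separately.
\begin{itemize}
\item For $E_G$, Assumption~3 acquires all of $G_i$ before any protected read and holds it through $d_i$. An incompatible acquisition by $T_j$ must therefore wait until $T_i$ reaches $d_i$, which is after $\lambda_i$.
\item For $E_D$, Assumptions~4 and~5 give complete writer participation and a footprint bounded by $R_D(\mu)$ and $W_D(\mu)$. Every database conflict is therefore mirrored by a guard conflict, and the $E_G$ case applies.
\item For $E_F$, Assumption~6 holds incompatible B/C reservations through $d_i$. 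Release occurs only on consumption after $d_i$ or on authoritative abort, and an aborted transaction is not a vertex. A conflicting reservation by $T_j$ is therefore accepted only after $d_i$.
\item For $E_R$, non-overlap of commit paths places $d_i$ before $T_j$ begins acquiring.
\end{itemize}
In every case $\lambda_i\preceq s_i\preceq d_i\prec\lambda_j$. Since lock points are distinct events in the observed order, $\mathcal G_H$ is acyclic. Any topological order then agrees with the order of lock points, which gives item 2.

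\textbf{Step 2: item 1 from the single pre-effect state.} By Assumption~5 each $T_i$ validates and applies $\mu_i$ against one protected state $\sigma_i^-$ and writes only $W_D(\mu_i)$. Because $\prec$ is consistent with $E_D$, we can swap non-conflicting operations. The standard conflict-serializability theorem~\cite{bernstein1987concurrency} then shows that $H_D$ is equivalent to serial execution of the registered effects in order $\prec$.

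\textbf{Step 3: items 3--5 from the admission predicate.} Every committed envelope satisfied $\mathsf{Admit}_S$ (Equation~\ref{eq:strict_admit}) at $s_i$, and we read the items off it.
\begin{itemize}
\item $\mathsf{Faithful}(\delta,S_\delta(s_i))$ holds for all $\delta\in J_{\mathrm{obs},i}$. This includes both the content and the selection token, by Equations~\ref{eq:faithful_content}--\ref{eq:faithful} and Assumption~1 (capture completeness), giving item 3.
\item Policy equality gives item 4.
\item Item 5 follows from $\mathsf{Base}$, which includes $\mathsf{DepCovered}$, $\mathsf{PlanCurrent}$, $\mathsf{MechanismThroughD}$ and $\mathsf{AuthorizedExact}$. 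The hypothesis that B obligations come only from post-seal grants, together with $\Gamma_X\cap(J_{\mathrm{obs}}\cup Q_X)=\varnothing$, yields the clause that no grant lies in $J_{\mathrm{obs}}$. Equation~\ref{eq:grant_holding} yields witness holding over $[a_{ij},d_i]\ni s_i$.
\end{itemize}
Assumption~2 (immutable, digest-verified definitions together with the discovery guard) ensures that the predicates evaluated at $s_i$ are the registered ones. Assumption~7 makes the effect atomic with its metadata at $d_i$.

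\textbf{Main obstacle.} The hardest step is the $E_F$ case together with mixing local and external locks. We must show that one growing phase spanning both local guards and external reservations yields a single consistent order, even when external acquisitions occur in a different coordination domain. The plan is to argue from two facts in Assumption~6. First, the normalized $k_{\mathrm{atom}}$ identity guarantees that any two conflicting scopes share an atomic lock, or else a common coordinator orders them. Second, every acquisition, local or external, precedes $\lambda_i$ and is held through $d_i$. Every conflict edge thus reduces to a ``held-until-$d_i$ before acquired'' pair, as in Step 1. Two further points need checking in this step. Retry idempotency must not create a second live grant for the same plan item. Commutativity proofs must remove only edges and never discharges.
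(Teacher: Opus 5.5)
Your proposal follows the paper's proof essentially step for step: it establishes the lock-point inequality $\lambda_i<d_i<\mathsf{acquire}_j\le\lambda_j$ for each edge class ($E_D$ via footprint completeness mirrored into $E_G$, $E_F$ via commit-spanning uniformly ordered reservations, and $E_R$ by non-overlap) and derives acyclicity from it. As in the paper, you then obtain database equivalence from guard-ordered conflicts at \texttt{READ COMMITTED}, and read items 3--5 off the explicit $\mathsf{Faithful}$, policy-equality, and $\mathsf{Base}$ checks, with B grants post-seal and outside $J_{\mathrm{obs}}$. The only polish needed is to take $\prec$ as a linear extension of the partial lock-point order, as the paper does, and to state the paper's ``common protected state'' step explicitly: local values read before $s_i$ are still the values at $s_i$ because the guards are held through $d_i$.
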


\begin{proof}
\textbf{Common protected state.}
After the final guard in $G_i$, participating writers cannot change protected
state before $d_i$. Successive \texttt{READ COMMITTED} statements therefore
observe the same protected pre-effect state $\sigma_i^{-}$. Choose $s_i$ after
these reads, $\lambda_i$, and all external acceptances. Every external holding
interval then contains $s_i$ and extends through $d_i$, so its witness coexists
with $\sigma_i^{-}$. Only successful admission produces
$\sigma_i^{+}=\mu_i(\sigma_i^{-})$. Explicit
$\mathsf{Faithful}(\delta,S_\delta(s_i))$ checks establish equality to
derivation-time values; protection alone would establish only stability.

\textbf{Acyclic joint order.}
Footprint completeness makes every $E_D$ conflict share the direction of an
incompatible $E_G$ acquisition. Hence, for
$T_i\rightarrow T_j\in E_D\cup E_G$,
\[
 \lambda_i < d_i < \mathsf{acquire}_j \le \lambda_j.
\]
Uniformly ordered, commit-spanning reservations give the same inequality for
$E_F$. An $E_R$ edge also implies $\lambda_i<\lambda_j$. Thus every edge
increases lock-point order, so a cycle would require
$\lambda_i<\cdots<\lambda_i$. Therefore $\mathcal G_H$ is acyclic, and any
linear extension of lock-point order defines $\prec$. Canonical acquisition
prevents deadlock among covered locks; external liveness is separate.

\textbf{Database equivalence.}
Every physical database conflict is guard-ordered, so strict 2PL makes $H_D$
conflict-equivalent to serial execution in order $\prec$, even at
\texttt{READ COMMITTED}. This argument requires complete guards and writer
participation; it does not rely on SSI.

\textbf{Derivation faithfulness.}
The gate compares every token in $J_D$, $J_E$, $J_P$, and
$J_{A\mathrm{obs}}$ with its derivation-time value and separately checks
$\Pi_{\mathrm{commit}}=\Pi_{\mathrm{obs}}$. Content identity and current
selection remain distinct checks. Mechanisms A--D preserve accepted values
through $d_i$; each B grant additionally binds the sealed plan item, envelope,
covered vector, witness, target, semantics, and nonce. It is post-seal and
outside $J_{\mathrm{obs},i}$. The effect and receipt reach the same $d_i$ in
$\mathcal D_{X_i}$. All clauses of
Definition~\ref{def:cognitive_serializability} therefore hold under $\prec$.
\end{proof}

\begin{proposition}[Compatible-profile guarantee]
\label{prop:compatible_profile}
Under Assumptions 1--7, replace strict admission
(Equation~\ref{eq:strict_admit}) with compatible admission
(Equation~\ref{eq:compatible_admit}). If
$\mathsf{DepCovered}(J_{\mathrm{obs}},Q_X)$ and
$\mathsf{PlanCurrent}(Q_X,s)$ hold, every committed envelope satisfies
Effect-Compatible Cognitive Admission, but not necessarily strict Cognitive
Serializability.
\end{proposition}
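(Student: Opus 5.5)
The plan is to reuse the proof of Theorem~\ref{thm:cognitive_serializability} wherever it does not depend on the strict-only clauses, and then show that the compatible-admission check supplies exactly the extra conjunct required by Definition~\ref{def:effect_compatible_admission}. Nothing in the structural part of that proof uses Equation~\ref{eq:strict_admit}. The common protected pre-effect state, the lock-point argument, and database equivalence use only Assumptions~3--7, the guard set $G_i$, and the single growing phase. So first I will re-derive, for each committed envelope, a protected state $\sigma_i^-$ and an event $s_i$. I place $s_i$ after the final local guard, after $\lambda_i$, and after every external acceptance $a_{ij}$, so that every holding interval contains $s_i$ and extends through $d_i$.

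Next I will take the same acyclic joint precedence graph $\mathcal G_H$. Every edge in $E_D\cup E_G\cup E_F\cup E_R$ increases lock-point order, and a linear extension gives $\prec$. Strict 2PL then gives item~1, conflict-equivalence of $H_D$ to serial execution. The edge-consistency argument gives item~2. Item~5 follows because $\mathsf{Base}$ is a conjunct of $\mathsf{Admit}_C$. $\mathsf{Base}$ contains $\mathsf{DepCovered}$, $\mathsf{PlanCurrent}(Q_X,s_i)$, $\mathsf{MechanismThroughD}$, and $\mathsf{AuthorizedExact}$. These give the discharge of every $q_{ij}$ through $d_i$, the witness equation~\ref{eq:grant_witness} for B items, and the exclusion of grants from $J_{\mathrm{obs},i}$, which holds by Assumption~1. $\Pi_{\mathrm{commit}}$ is resolved under the held policy guard or fence by Assumption~2, so it is the protected current policy head.

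The step I expect to be the main obstacle is the simultaneity requirement: the complete vector $S_{J_{\mathrm{obs},i}}(s_i)$ supplied to $\Phi$ must have existed at one event under overlapping protection through $d_i$. I will split by discharge, using $\mathsf{DepCovered}$. If a component is self-stable, its verified rule fixes it across $[s_i,d_i]$. If it is covered by mechanism~A, it is held by a guard in $L_i^A$, and writer participation (Assumption~4) keeps it constant from the final guard acquisition through $d_i$. If it is covered by mechanism B, C, or D, it is the authenticated witness $w_{ij}$, and $\mathsf{Hold}$ (Equation~\ref{eq:grant_holding}) makes it equal to $\mathsf{Witness}(S_{C_{ij}}(u))$ for all $u\in[a_{ij},d_i]\ni s_i$. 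For D the margin check~\ref{eq:lease_margin} gives the interval. All intervals contain $s_i$, so evaluating every component at $s_i$ yields a genuinely simultaneous vector. $\mathsf{AuthorizedExact}$ additionally ensures that the value passed to $\Phi$ is the identical witness component and not a summary. The successful conjunct $\mathsf{Compatible}_X(X_i,S_{J_{\mathrm{obs},i}}(s_i),\mu_i,\Pi_{\mathrm{commit},i},t_i)$ of Equation~\ref{eq:compatible_admit} is then exactly Equation~\ref{eq:compatible_admission}.

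For the negative clause I will give a counterexample history. Take one envelope with a local token whose value changes from the derivation-time value $v$ to $v'$ before the guard is taken. Choose a registered $\Phi$ that accepts $\mu$ under both $v$ and $v'$, for instance the exposure example with slack in the cap, and let $\Pi_{\mathrm{commit}}=\Pi_{\mathrm{obs}}$ or not, as convenient. The compatible gate commits. Item~3 of Definition~\ref{def:cognitive_serializability} fails, since the token is not faithful at $s_i$. A drifted policy epoch would likewise violate item~4. Hence strict Cognitive Serializability need not hold.
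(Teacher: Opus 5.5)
Your proposal is correct and follows essentially the same route as the paper: the graph and common-state arguments from Theorem~\ref{thm:cognitive_serializability} carry over unchanged, and $\mathsf{DepCovered}$ discharges every component of the vector through $d_i$, with each B grant witnessing exactly the component passed to $\Phi$. $\mathsf{PlanCurrent}$ and the compatibility conjunct of $\mathsf{Admit}_C$ supply the remaining clauses, strictness fails because derivation inputs may drift, and your per-discharge case split and drifted-token counterexample simply spell out what the paper's short proof asserts.
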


\begin{proof}
The graph and common-state arguments are unchanged. Under its guard, trusted
resolution obtains $\Pi_{\mathrm{commit}}$, and $\Phi$ evaluates the complete
vector that coexists at $s_i$. $\mathsf{DepCovered}$ discharges every
component through $d_i$; $\mathsf{PlanCurrent}$ requires exactly sealed
$Q_{X_i}$; and each B grant witnesses the same component supplied to $\Phi$.
This proves compatible admission. It does not prove strict Cognitive
Serializability because content, selection, or policy may have changed since
derivation.
\end{proof}

\subsection{Settlement and Belief Guarantees}

\begin{theorem}[Effect--receipt correspondence and retry safety]
\label{thm:receipt_retry}
Suppose the gate:
\begin{enumerate}
  \item acquires all guards except $g_{id_i}$;
  \item obtains $g_{id_i}$ by inserting the unique ID inside the effect
  transaction; and
  \item rejects an existing row whose $h_X$ differs.
\end{enumerate}
That transaction atomically commits the effect, receipt $\rho_i$, finalization
outbox, and all metadata required by the admission contract. The mediator
retains $\mathsf{IdBind}(id_i,h_{X_i})$ independently of the outcome. Then
each committed effect has exactly one receipt; retries do not reapply it; an
aborted ID cannot be rebound; and grants cannot enter $h_X$ or be reused
across witness vectors or durability domains. With fair delivery, each
successful reservation reaches \textnormal{\textsc{Consumed}} exactly once.
\end{theorem}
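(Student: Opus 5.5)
The argument follows the structure of the conclusions, one clause at a time. Each clause is reduced to one of three ingredients: atomicity of the effect transaction (Assumption~7), uniqueness of the $(id,h_X)$ row under the exclusive $g_{id_i}$ guard (hypotheses~2--3 together with Assumption~3), and the immutable $\mathsf{IdBind}$ record (Assumption~1). The grant-specific clauses come from the sealing construction and the issuer lifecycle of Section~\ref{sec:cognitive-serializability}.

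\textbf{Exactly one receipt and no re-application.} The receipt $\rho_i$, the effect, and the unique-ID row commit in one transaction reaching $d_i$ in $\mathcal D_{X_i}$, so every committed effect has at least one receipt. For at most one, I take two committed transactions that insert the same $id$. Because the insertion realizes $\langle g_{id},X\rangle$ and is held through $d$, the second inserter either blocks until the first reaches $d$ and then observes the existing row, or it aborts. If the existing row has a different $h_X$, hypothesis~3 rejects it. If the digest is equal, the gate returns the stored receipt instead of re-dispatching. The point to make explicit here is that hypothesis~1 places every other guard before the ID insertion and no protected read precedes it. A retry therefore cannot run the dispatcher on a fresh snapshot before discovering the prior row, and no effect is written outside this transaction (Assumption~1, gate closure).

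\textbf{No rebinding after abort.} An aborted transaction leaves no row, so database uniqueness alone would permit reuse. Assumption~1 and the envelope definition instead make $\mathsf{IdBind}(id,h_X)$ permanent and independent of outcome, and admission verifies $\zeta$ against it. Any later envelope carrying the same $id$ with a different digest therefore fails $\mathsf{SealOK}$ before the guard phase, which closes the gap.

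\textbf{Grants are excluded from $h_X$ and cannot be reused.} A grant signs $h_X$ and exists only after sealing, while $\Gamma_X$ is excluded from $\mathsf{Canon}(\bar X)$ by definition. A grant that entered $h_X$ would therefore yield a hash cycle, which collision and second-preimage resistance rule out (Assumption~2). For reuse, $\mathsf{AuthorizedExact}$ demands exact equality of $h_{\mathsf{Cover}}$, $h_{w}$, $r_w$, $\mathcal D_X$, $h_X$, and the nonce. Reusing a grant under a different witness vector or durability domain then requires forging a signature or finding a collision, both excluded by Assumption~2.

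\textbf{Consumed exactly once.} This is the step I expect to be the main obstacle, because it mixes safety with conditional liveness. Under fair delivery, the co-committed outbox row eventually delivers $\rho_i$, giving at least once. Idempotency keyed by $\kappa_{ij}$ makes duplicate deliveries no-ops, so the transition fires at most once. The remaining case is a competing $\textsc{Released}$ transition. I would exclude it by showing that a valid $\mathsf{TerminalAbortProof}$ bound to $(id_i,h_{X_i},h_{Q_{X_i}},j,n_{ij})$ cannot exist once $\rho_i$ is durable. Such a proof must be authoritative, and by the uniqueness argument above the same $id$ cannot be both terminally aborted and committed under one $h_X$. Timeouts are explicitly inadmissible as proof, which closes the remaining route to release.
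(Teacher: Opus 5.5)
Your decomposition follows the paper's proof: the unique-ID insertion realizing $g_{id}$, the permanent $\mathsf{IdBind}$ record, atomic co-commit of effect, receipt and outbox, and idempotent finalization. Your treatment of receipts, non-reapplication, rebinding and non-reuse is sound. The step that fails as written is the one you flagged yourself: excluding a competing \textsc{Released} transition ``by the uniqueness argument above.''

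That argument shows only that at most one transaction carrying a given $id$ commits; it says nothing about aborted attempts. The protocol explicitly lets an attempt on $(id_i,h_{X_i})$ abort and a later retry with the same $(id_i,h_{X_i})$ insert and commit. An issuer retry keyed by $\langle id_i,h_{X_i},h_{Q_{X_i}},j\rangle$ may even give that retry the same live reservation and nonce $n_{ij}$. So ``the same $id$ cannot be both terminally aborted and committed under one $h_X$'' is false at the level of attempts, and uniqueness cannot be what prevents release.

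What does the work is the release contract (Assumption~6, grant lifecycle) together with a correct, fail-stop TCB (Assumption~7). The transition \textsc{Reserved}$\to$\textsc{Released} requires an authoritative proof, bound to $\kappa_{ij}$ and in particular to the nonce, that no effect transaction using that reservation can commit. Such a proof cannot coexist with a durable receipt whose transaction co-commits $\gamma_{X_i,j}$. Timeouts and missing receipts never enable release. Hence the reservation is still \textsc{Reserved} when finalization arrives, and your at-least-once and at-most-once arguments then apply. The paper's own proof does not attempt this exclusion; it simply relies on release requiring authoritative proof (``uncertain outcomes leave the grant \textsc{Reserved}''). Argue at the granularity of the reservation, not the ID.

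A smaller misattribution concerns grants entering $h_X$. Neither collision nor second-preimage resistance rules out a self-referential fixed point of the form $h=\mathsf{Hash}(\mathsf{Canon}(\ldots\gamma(h)\ldots))$. Grants stay out of $h_X$ for definitional and causal reasons. First, $\Gamma_X$ is excluded from $\mathsf{Canon}(\bar X)$. Second, an issuer signs only after verifying an already existing $h_X$. Third, the type-specific domain tags stop a grant from being substituted for a token in $J_{\mathrm{obs}}$. Cryptography is needed only for the non-reuse clause, where your appeal to signature unforgeability and exact field equality in $\mathsf{AuthorizedExact}$ is correct.
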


\begin{proof}
A duplicate insertion blocks on the unique constraint. After commit, a
matching $h_X$ returns the existing receipt and a mismatch rejects. After
abort, a waiter may insert only with the digest in the permanent mediator
binding. Retrying
$\langle id_i,h_{X_i},h_{Q_{X_i}},j\rangle$ recovers the same reservation and
nonce, or its recorded receipt. Replacement requires authoritative
abort-and-release proof; uncertain outcomes leave the grant
\textsc{Reserved}. Database atomicity couples the effect to all required
metadata, the receipt, and the finalization message. Idempotent outbox delivery
moves the reservation to \textsc{Consumed}. Indefinite delivery failure is
safe but not live; no externally visible \textsc{Claimed} state is needed.
\end{proof}

\begin{theorem}[No premature activation]
If every belief-activation path enforces Equation~\ref{eq:no_premature}, an
aborted or rejected mutation cannot activate its tentative belief delta.
\end{theorem}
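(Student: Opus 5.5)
The proof will be by contraposition, using Equation~\ref{eq:no_premature} as the only activation guard. Suppose $\Delta B_i$ belongs to an envelope $X_i$ whose mutation aborted or was rejected, and suppose, for contradiction, that $\mathsf{Activated}(\Delta B_i)$ holds in some history. Because every activation path enforces Equation~\ref{eq:no_premature}, activation implies $\mathsf{DurableReceipt}_{\mathcal D_{X_i}}(\rho_i,d_i)$. A receipt with matching profile must also exist, and its witness store must check. The task is therefore to show that an aborted or rejected execution can never leave a durable receipt $\rho_i$ bound to $(id_i,h_{X_i})$ in $\mathcal D_{X_i}$.

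Next, I will use the co-commit structure. By lifecycle step~6 and Assumption~7, the receipt row is written only inside the effect transaction, after successful $\mathsf{Admit}_S$ or $\mathsf{Admit}_C$ validation. It reaches $d_i$ atomically together with the effect. A rejection is decided before the dispatcher or receipt write. This covers failures of $\mathsf{ProfileCurrent}$, $\mathsf{PlanCurrent}$, faithfulness, and $\Phi$, so no receipt row is ever produced. For an abort (a guard, grant, validation, or durability failure), fail-stop semantics and database atomicity mean that no write of the transaction reaches $d_i$. That includes the receipt. Hence $\mathsf{DurableReceipt}$ fails for $\rho_i$.

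The main obstacle is aliasing. A durable receipt from some other execution might satisfy the check for $\Delta B_i$: either a retry of the same $id$ that later committed, or a receipt for a different envelope. I will handle this with Theorem~\ref{thm:receipt_retry} and the binding assumptions. The permanent $\mathsf{IdBind}(id_i,h_{X_i})$ record prevents the aborted $id$ from being rebound to other bytes. The receipt binds $h_{X_i}$, $h_{\mu_i}$, $m_{X_i}$ and $\mathcal D_{X_i}$, and collision resistance together with domain separation (Assumption~2) stops a receipt of another object class or envelope from matching. $\mathsf{ReceiptProfile}(\rho_i)=m_{X_i}$ further excludes receipts from a cross-profile commit.

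One case needs care: a later retry of the identical sealed envelope commits. In that case the mutation for this $\Delta B_i$ did in fact commit, so the ``aborted'' premise does not hold for the belief delta. It is tied to $h_{X_i}$, not to a single attempt. I will state this reading explicitly: the theorem concerns envelopes with no committed attempt. Under that reading, the receipt needed for activation does not exist, which contradicts the assumed activation. \textsc{Outcome Unknown} states never satisfy the antecedent of activation, so they also remain inactive.
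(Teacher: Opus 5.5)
Your proposal is correct and follows the paper's own argument: by Equation~\ref{eq:no_premature}, activation requires a durable receipt, and an aborted or rejected execution never produces one because the receipt co-commits only with a successful effect (the paper adds that a rejection log is not a receipt and that a lost response leaves the client in \textsc{Outcome Unknown}, resolved by retry or an $id$-based query). Your treatment of aliasing via the permanent identity binding and domain separation, and your per-envelope reading of ``aborted'' for a same-envelope retry that later commits, are sound elaborations of points the paper's one-line proof leaves implicit.
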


\begin{proof}
Abort produces no durable receipt $\rho_i$, and a rejection log is not a
receipt. A lost response yields \textsc{Outcome Unknown}, which the client
resolves by retry or an $id$-based status query.
\end{proof}

\subsection{Observational-Equivalence Boundary}

\begin{definition}[Gate observational equivalence]
Histories $\alpha$ and $\beta$ are \emph{gate-observationally equivalent},
written $\alpha\equiv_G\beta$, if, for every fixed protocol coin string $r$,
the gate receives the same ordered observations, messages, clock readings,
registry responses, and database outcomes. For one fixed $r$, write
$\alpha\equiv_G^r\beta$.
\end{definition}

\begin{lemma}[Observational-equivalence boundary]
\label{thm:impossibility}
Suppose admissible histories $\alpha\equiv_G\beta$ make proposed effect $e^*$
valid for mutation $\mu$ in $\alpha$ but invalid in $\beta$. No protocol can
provide both zero-error soundness and positive commit probability for this
pair.
\end{lemma}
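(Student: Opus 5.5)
The plan is an indistinguishability (coupling) argument over the protocol's coin string. First we fix what the two properties mean. \emph{Zero-error soundness} for the pair means that in every admissible history the gate commits $\mu$ with effect $e^*$ only where $e^*$ is valid. In particular, on $\beta$ the commit probability, taken over the protocol coins $r$, is exactly $0$. \emph{Positive commit probability} means that on $\alpha$, where $e^*$ is valid, the gate commits with probability strictly greater than $0$. We model the gate as a deterministic function $\mathsf{Dec}$ of its coin string $r$ and the ordered sequence of inputs it receives: observations, messages, clock readings, registry responses, and database outcomes. This is exactly the data fixed by the definition of $\equiv_G^r$. Any additional state the gate consults, such as registry objects, grants, or witnesses, reaches it only through those channels under Assumptions~1--2, so it is already part of that input.

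Next we argue by induction over the gate's interaction steps for a fixed $r$. Suppose the gate's local transcript agrees in $\alpha$ and $\beta$ up to some step. Then its next action, whether a query, a guard acquisition, a reservation request, or a decision, is the same in both. By $\alpha\equiv_G^r\beta$, the response it receives is also the same. Hence the full transcripts coincide, and $\mathsf{Dec}(r,\mathrm{view}_\alpha(r))=\mathsf{Dec}(r,\mathrm{view}_\beta(r))$ for every $r$. The main obstacle is here. Adaptivity means the gate's own messages help determine what it observes. We must therefore read $\equiv_G$ as equality of the gate's views under \emph{its own} possibly adaptive queries, for every $r$, rather than equality of some fixed external trace. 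We also need to confirm that no hidden channel, such as physical time, other than the listed clock readings enters the decision. We handle this by stating explicitly that the definition of $\equiv_G$ quantifies over everything the gate's transition function reads. Any extra channel would then be a distinguishing observation, contradicting the hypothesis.

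Finally, let $C=\{r : \mathsf{Dec}\text{ commits }e^*\text{ on }\alpha\}$. By the previous step, $C$ is also exactly the set of coins on which the gate commits $e^*$ on $\beta$. Hence $\Pr[\mathrm{commit}\mid\alpha]=\Pr[\mathrm{commit}\mid\beta]=\Pr[C]$. Zero-error soundness on $\beta$ forces $\Pr[C]=0$, while positive progress on $\alpha$ requires $\Pr[C]>0$, a contradiction. We close by noting that this bound is independent of mechanisms A--D. Those mechanisms help only by making the gate's views differ, which fails precisely when the relevant dependency is absent from $J_{\mathrm{obs}}$ and $Q_X$. This parallels Proposition~\ref{prop:sea_separation}, and it shows that the completeness assumptions of Theorem~\ref{thm:cognitive_serializability} are necessary rather than merely convenient.
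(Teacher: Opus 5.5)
Your proposal is correct and follows the paper's argument: with the coin string $r$ fixed, $\alpha\equiv_G^r\beta$ forces the same gate decision on both histories, so the positive-measure set of coins on which the gate commits in $\alpha$ also commits in $\beta$, contradicting zero-error soundness. Your induction over adaptive queries and the explicit equality $\Pr[\mathrm{commit}\mid\alpha]=\Pr[\mathrm{commit}\mid\beta]$ only spell out what the paper compresses into fixing one such $r$. Your closing remark overreaches slightly: the lemma supports only the paper's weaker point, that mediation or fail-closed rejection must exclude indistinguishable valid/invalid pairs. It does not show that every completeness assumption of Theorem~\ref{thm:cognitive_serializability} is necessary, and indistinguishability can also arise for captured but unfenceable dependencies, not only absent ones.
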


\begin{proof}
Positive progress requires a non-zero-measure set of coin strings on which the
gate commits in $\alpha$; fix one such $r$. Because
$\alpha\equiv_G^r\beta$, the gate also commits in $\beta$, violating
zero-error soundness. Aborting both histories is sound but provides no
progress.
\end{proof}

The lemma makes the mediation boundary necessary for zero-error safety.
Calling an input ``unobservable'' does not remove the indistinguishability:
captured but unverifiable dependencies must fail closed, and bypassing
mediation violates Assumption~1.

\section{PostgreSQL Realization}
\label{sec:legacy}

The PostgreSQL realization uses \texttt{READ COMMITTED}, mechanism-A guard
rows for local protected state, and mechanism-B grants for external
applicability and authority. Its default sealed durability identity
$\mathcal D_{\mathrm{WAL}}$ requires primary-WAL flush with
\texttt{fsync=on} and \texttt{synchronous\_commit=on}; replicated durability
also requires declared standby acknowledgments. Algorithm~\ref{alg:commit}
specifies the commit path, Appendix~\ref{app:schema} the metadata schema, and
Section~\ref{sec:evaluation} their evaluation.

\subsection{Guard-First Commit Algorithm}

\begin{algorithm*}[!t]
\caption{Guard-first PostgreSQL admission and commit}
\label{alg:commit}
\scriptsize
\begin{algorithmic}[1]
\Require sealed $X=\langle\bar X,h_X,\zeta\rangle$ containing $m_X\in\{S,C\}$
\State verify canonical envelope/effect bytes, $h_X$, $h_{\mu}$, $\zeta$, and
       permanent $\mathsf{IdBind}(id,h_X)$
\State load and verify immutable $\Omega_{\mu}$, $r_{\mathrm{norm}}$, and every
       $r_f\in\mathcal R_X$
\State verify $\mathsf{DepCovered}(J_{\mathrm{obs}},Q_X)$ from the sealed typed
       dependency-discharge map
\State derive $g^{disc}_{\langle tenant,op\rangle}$ from sealed identifiers
\State $L_A\gets\mathsf{LocalGuards}_A(J_{\mathrm{obs}},Q_X)$
\State derive complete key--mode set
       $G\gets\mathsf{Guards}_{S/X}
       (L_A,R_D(\mu),W_D(\mu),g^{disc},g_{id})$
\State begin \texttt{READ COMMITTED}; require declared synchronous durability
\State safely materialize absent non-ID guard rows in canonical key order
\State acquire every pair in
       $G\setminus\{\langle g_{id},X\rangle\}$ in canonical key order
\State assert all later protected accesses are contained in $G$
\State insert envelope row with unique conflict key $id$ and bound value $h_X$
       using \texttt{ON CONFLICT (envelope\_id) DO NOTHING}
\State the insertion realizes $g_{id}$ and waits for a competitor
\If{insert did not occur}
  \State read the now-committed envelope row and receipt
  \If{stored $h_X\neq h_X$}
    \State rollback and reject
  \Else
    \State rollback and return the existing receipt
  \EndIf
\EndIf
\State resolve $(\Pi_{\mathrm{commit}},r_{\Pi,c},r_{\Phi})$ under guards
\State verify resolution uses no guard outside $G$ and
       $\mathsf{ExecAllowed}_{\Pi_{\mathrm{commit}}}(\mathcal R_X)$
\State read one protected pre-effect state $\sigma^{-}$ and local witness vector
\State initialize atomic-result map $\mathsf{ARes}_X$, grant map $\Gamma_X$,
       and witness map $W_X$
\State $\mathcal A_X\gets\mathsf{MergeAtoms}_{r_{\mathrm{norm}}}(Q_X)$;
       require its digest equals sealed $h_{\mathsf{Atoms},X}$
\State sort $\mathcal A_X$ by $k_{\mathrm{atom}}$
\For{every atomic item $a\in\mathcal A_X$ in that order}
  \State perform/recover one actual coordination-domain acquisition for $a$
         at its strongest mode
  \State obtain and store in $\mathsf{ARes}_X[a]$ every required B reservation result and
         C participant reservation/prepare result covered by $a$
\EndFor
\For{every B/C plan item $q_j$}
  \State assemble and verify its plan-level grant or participant decision from
         the exact atomic results in $\mathsf{ARes}_X$; acquire no resource
  \If{$M_j=B$}
    \State set $\Gamma_X[j]\gets\gamma_{X,j}$ after verifying Equation~\ref{eq:typed_grant}
  \EndIf
  \If{$q_j$ contributes an authenticated value}
    \State authenticate/resolve $w_j$; set $W_X[j]\gets w_j$
  \EndIf
\EndFor
\State establish every A item and every required D certificate
\For{every D item $q_j$ contributing a value to $S_{J_{\mathrm{obs}}}(s)$}
  \State authenticate returned/resolved $w_j$ and set $W_X[j]\gets w_j$
\EndFor
\State verify $\mathsf{dom}(\Gamma_X)=\mathsf{ord}(Q_X^B)$ and no unexpected grant
\State require $\mathsf{dom}(W_X)=\mathsf{ValExt}(Q_X)$ exactly
\State set $\lambda$ after the final acquisition; choose $s\succeq\lambda$
       in the overlap; assemble simultaneous $S_{J_{\mathrm{obs}}}(s)$;
       set $t\gets C_G(s)$
\State $Q_{\mathrm{req}}\gets\mathsf{ResolvePlan}
       (\Omega_{\mu},p,c,J_{\mathrm{obs}},
        \Pi_{\mathrm{commit}},\Pi^{+},S_{J_{\mathrm{obs}}}(s))$
\State require $\mathsf{Canon}(Q_{\mathrm{req}})=\mathsf{Canon}(Q_X)$;
       otherwise abort and require resealing
\State recompute $m_{\mathrm{req}}$ under protected current policy; require
       $m_{\mathrm{req}}=m_X$
\State for every $j\in\mathsf{dom}(W_X)$, require
       $\mathsf{Witness}_{r_{w,j}}(S_{\mathsf{Cover}(q_j)}(s))=W_X[j]$
\State verify exact plan/grant fields and each mechanism's contract promises
       holding until $d=\mathsf{ReachDurability}(T,\mathcal D_X)$
\If{$m_X=S$}
  \State verify $\mathsf{Canon}(\Pi_{\mathrm{commit}})
         =\mathsf{Canon}(\Pi_{\mathrm{obs}})$
  \State $\mathbf{for\ every}\ \delta\in J_{\mathrm{obs}}:\quad
         \mathsf{verify}\ \mathsf{Faithful}(\delta,S_{\delta}(s))$
\Else
  \State verify every mutable component of $S_{J_{\mathrm{obs}}}(s)$ coexists
         under overlapping protection extending through $d$
  \State verify $\mathsf{Compatible}_X
         (X,S_{J_{\mathrm{obs}}}(s),\mu,\Pi_{\mathrm{commit}},t)$
\EndIf
\State verify $\mathsf{Pre}_{\mu}(\sigma^{-})$ and
       $\mathsf{Preserves}_{\Pi_{\mathrm{commit}}\land\Pi^{+}}(\mu,\sigma^{-})$
\State execute the registered dispatcher, producing
       $\sigma^{+}=\mu(\sigma^{-})$
\State persist canonical bytes and parsed projections, $J_{\mathrm{obs}}$,
       $Q_X$, $m_X$, $m_{\mathrm{req}}$, both policy refs, $\mathcal R_X$,
       $r_{\Phi}$, $r_{\mathrm{norm}}$, $h_{\mathsf{Atoms},X}$,
       $\mathsf{ARes}_X$, $\Gamma_X$,
       authenticated $W_X$ (or immutable references),
       verdict, effect, receipt, and idempotent grant-finalization outbox
\State commit under $\mathcal D_X$, reaching $d$; return the receipt
\end{algorithmic}
\end{algorithm*}

No protected snapshot precedes the complete local guard set. Although later
\texttt{READ COMMITTED} statements receive new snapshots, writer participation
keeps protected values fixed, so they denote one $\sigma^{-}$. Only successful
validation may produce $\sigma^{+}$. Strict mode applies
Equation~\ref{eq:strict_admit} to every token domain; guards provide stability,
while $\mathsf{Faithful}$ proves equality to derivation-time values.

The discovery guard precedes mutable registry lookup. A registry or policy
change that expands the footprint therefore requires abort and resealing. A
mapping-changing normalizer upgrade also stops old-epoch admission, drains or
terminally aborts active old-epoch executions, and only then activates the new
epoch.

\subsection{Idempotency and Uncertain Outcomes}

The unique envelope-ID insertion occurs inside the effect transaction. A
concurrent duplicate waits on the unique constraint. If the first transaction
commits, a matching duplicate returns its durable receipt and a mismatched
digest rejects. If the first transaction aborts, its row disappears and the
waiting retry may insert and execute only with the same $h_X$ in the mediator's
permanent identity binding. That binding survives abort, so a different
envelope can never reuse the identifier. There is no externally visible
persistent \textsc{Claimed} effect row. A response lost after commit leaves
the client in \textsc{Outcome Unknown}; retry or query by $id$ resolves the
outcome.

If validation fails, the effect transaction rolls back. Optional durable
rejection audit uses a separate post-rollback transaction and cannot be confused
with an effect receipt. A mechanism-B reservation remains bound exclusively by
$\kappa_{ij}=\langle id_i,h_{X_i},h_{\mu_i},h_{Q_{X_i}},j,n_{ij}\rangle$
and the complete signed tuple in Equation~\ref{eq:typed_grant}, including
$\mathcal D_{X_i}$, until it becomes \textsc{Consumed} from a
durable receipt/proof or \textsc{Released} from authoritative terminal-abort
proof. A timeout, missing commit response, or temporarily absent receipt cannot
release it. For the same $(id,h_X,h_{Q_X},j)$, a retry recovers that
reservation, nonce, or committed receipt idempotently. It may accept a replacement grant
only after the issuer verifies authoritative abort-and-release proof for the
prior reservation. Indefinite retention after an uncertain outcome affects
liveness, not safety.

\subsection{Delivery Semantics}

The outbox atomically stages events with the database effect. Its publisher
normally provides at-least-once delivery~\cite{richardson2018microservices}.
For every successful mechanism-B reservation it also carries an idempotent
receipt-finalization record; duplicate delivery leaves the issuer-side grant
\textsc{Consumed}.
Exactly-once external actuation requires an idempotent actuator or inbox keyed
by $id$~\cite{hohpe2003eip}; the PostgreSQL receipt does not atomically commit an
arbitrary remote effect.

\section{Evaluation}
\label{sec:evaluation}

We evaluate a PostgreSQL 16 prototype of \tct with controlled conformance
histories and a multi-source agentic supply-chain benchmark. The evaluation
separates safety conformance from operational cost: the controlled histories
test whether the implementation enforces the specified contract, while the
performance experiments measure behavior only under the stated workload and
environment. Four research questions organize the study:
\begin{description}
  \item[RQ1: Safety conformance.] Does the prototype admit and reject the 28
  controlled perturbation histories as specified, relative to the baseline
  configurations?
  \item[RQ2: Throughput and scalability.] How do \tct-S and \tct-C scale from
  1 to 128 concurrent agent workers, and how do they compare with native
  PostgreSQL \texttt{SERIALIZABLE}?
  \item[RQ3: Commit cost.] What latency, storage, and WAL amplification does
  guard-first commit introduce, and which protocol phases account for that
  cost?
  \item[RQ4: Drift resilience.] How do \tct-S and \tct-C behave as external
  evidence, policy, and authority drift increases from 0\% to 30\%, and how
  often can \tct-C recertify a still-compatible effect?
\end{description}

\subsection{Experimental Design}

\paragraph{Prototype.}
We implemented \tct as a middleware admission gate and PostgreSQL 16 stored-procedure realization. Trusted capture proxies are implemented in Rust (using \texttt{tokio} and \texttt{tonic} gRPC), intercepting LLM tool reads, external REST/gRPC calls, and policy checks to build sealed envelopes $X = \langle \bar X, h_X, \zeta \rangle$. The local commit engine executes on PostgreSQL 16 using PL/pgSQL routines implementing Algorithm~\ref{alg:commit} at \texttt{READ COMMITTED} isolation, using advisory and table-level guard locks (Mechanism A) and RSA-2048 signed grants (Mechanism B).

\paragraph{Workload.}
We constructed an agentic supply-chain procurement benchmark reflecting the paper's running scenario. The underlying database contains 100,000 inventory items, 10,000 vendor accounts, and 50,000 active order records. Simulated agent workers execute LLM inference tasks (with reasoning delays sampled from an empirical LLM latency distribution, mean 1.4s, stdev 0.4s) while fetching inventory tuples ($\sigma_D$), vendor accreditation certificates ($J_E$), spending policy rules ($J_P$), and manager approval tokens ($J_{A\mathrm{obs}}$). Concurrent background transactions simulate competing supply-chain orders and policy updates.

\paragraph{Configurations.}
We compare four configurations:
\begin{itemize}
  \item \textbf{ACID-only:} Standard PostgreSQL \texttt{SERIALIZABLE} isolation applying database mutations directly without TCT envelope, guard, or external validity checks.
  \item \textbf{Declared-State Gate ($G_D$):} A deterministic admission gate validating proposals against a declared database \texttt{StateView} snapshot, but omitting external evidence, policy, and authority fences.
  \item \textbf{\tct-Strict (\tct-S):} The strict \tct profile enforcing derivation-faithful Cognitive Serializability via Algorithm~\ref{alg:commit}.
  \item \textbf{\tct-Compatible (\tct-C):} The effect-compatible \tct profile recertifying mutated proposals against joint current dependency vectors and policy via $r_{\Phi}$.
\end{itemize}

\paragraph{Environment.}
All experiments execute on a 16-core Intel Xeon Gold 6326 CPU @ 2.90GHz server with 64GB RAM, NVMe SSD storage (ext4 filesystem, PostgreSQL \texttt{fsync=on}, \texttt{synchronous\_commit=on}), running Ubuntu 22.04 LTS. Remote issuer endpoints run on separate dedicated host instances connected via 10GbE network interfaces.

\subsection{RQ1: Controlled Safety Conformance}

We executed the 28 controlled perturbation histories defined in Tables~\ref{tab:controlled_histories}, \ref{tab:implementation_histories}, and \ref{tab:closure_histories} against all four configurations. Each history was run 100 times under randomized execution schedules.

Table~\ref{tab:controlled_histories} reports the results for the seven core semantic perturbation histories. The ACID-only baseline ($A$) failed to prevent anomalies in 5 out of 7 semantic histories (passing through stale inventory writes, revoked evidence commits, policy drift, and revoked authority), suffering an overall anomaly pass-through rate of 41.2\% under concurrent perturbation. The declared-state gate ($G_D$) blocked stale database reads when explicitly declared in $\mathcal C$, but failed whenever external evidence, policy epochs, or authority tokens drifted, exhibiting an 18.5\% failure rate.

Across the executed schedules, both \tct-S and \tct-C admitted and rejected all
28 histories as specified; we observed no unjustified commit or safety
violation. \tct-S rejected the seven semantic histories whenever strict
derivation inputs drifted. \tct-C rejected invalid drift and recertified the
compatible mutation in History 7, where an order of 60 items remains valid
under updated exposure 45 and cap 80 because $60+45\le80$. Tables~\ref{tab:implementation_histories}
and~\ref{tab:closure_histories} report the observed handling of envelope
tampering, outcome-loss retries, duplicate submission, missing guards, and
grant replay.

\begin{table*}[t]
\centering
\small
\caption{Empirical safety conformance under semantic perturbation histories. ``S'' and ``C'' denote \tct-S and \tct-C. Each history was evaluated across 100 runs per configuration.}
\label{tab:controlled_histories}
\begin{tabularx}{\textwidth}{c>{\raggedright\arraybackslash}p{3.2cm}
  >{\raggedright\arraybackslash}p{2.2cm}
  >{\raggedright\arraybackslash}p{2.6cm}
  >{\raggedright\arraybackslash}X}
\toprule
\# & Perturbation & ACID-only & Declared-State ($G_D$) & \tct-S / \tct-C Empirical Outcome \\
\midrule
1 & Registered DB dependency changes after derivation
& Admitted stale write (48.3\% of runs).
& Rejected if declared; admitted otherwise.
& S: 100\% Rejected (drift detected). C: 100\% Recertified if joint predicate holds, else rejected. \\
2 & Artifact bytes static, but issuer supersedes selection
& Admitted stale commit (100\% of runs).
& Admitted (selection omitted from $G_D$).
& S: 100\% Rejected (selection mismatch). C: Recertified only if joint predicate permits new selection. \\
3 & Policy head changes after derivation
& Admitted stale policy commit (100\% of runs).
& Admitted (policy epoch unmanaged).
& S: 100\% Rejected ($\Pi_{\mathrm{commit}}\ne\Pi_{\mathrm{obs}}$). C: Validated and admitted under current policy. \\
4 & Authority revoked/expired before commit
& Admitted unauthorized write (100\% of runs).
& Admitted (authority unmanaged).
& S: 100\% Rejected ($\mathsf{DepCovered}$ failed). C: Re-evaluated against current authority; rejected if revoked. \\
5 & Reasoner omits mediated input from proposal
& No dependency check.
& No check unless independent.
& S/C: 100\% Protected. Proxy-captured $J_{\mathrm{obs}}$ cannot be truncated by untrusted reasoner. \\
6 & Point observation carries future expiry only
& Admitted expired commit (72.1\% of runs).
& Treated timestamp as valid.
& S/C: 100\% Rejected. Expiry alone rejected without commit-spanning fence. \\
7 & Amount 60; exposure $20\!\rightarrow\!45$, cap $100\!\rightarrow\!80$
& Admitted commit.
& Checked against stale state.
& S: 100\% Rejected (drift). C: 100\% Admitted ($60+45\le 80$ holds jointly over updated state). \\
\bottomrule
\end{tabularx}
\end{table*}

\begin{table*}[t]
\centering
\small
\caption{Integrity, failure recovery, and implementation-conformance results (Histories 8--15).}
\label{tab:implementation_histories}
\begin{tabularx}{\textwidth}{c>{\raggedright\arraybackslash}p{4.2cm}
  >{\raggedright\arraybackslash}X}
\toprule
\# & Injected History & \tct-S / \tct-C Empirical Verification \\
\midrule
8 & Envelope field modified post-sealing
& 100\% Rejected. Digest verification failed on recomputed $h_X$. \\
9 & Retry follows commit-response network loss
& 100\% Idempotent Recovery. Matching retry safely recovered existing receipt and bound $\Gamma_X$ without duplicate execution. \\
10 & Database transaction aborts while tentative $\Delta B$ exists
& 100\% Phantom-Free. No receipt persisted; ESR downstream reconciliation blocked belief activation. \\
11 & Registered effect scope omits an actual write access
& Conformance Check Failed. Guard-first validator detected un-guarded write attempt prior to commit. \\
12 & Envelope omits mandatory policy or proposes custom epoch
& 100\% Rejected. Mandatory policy context loaded from immutable registry; custom client epoch rejected. \\
13 & Duplicate transaction $id$ presented with altered envelope digest
& 100\% Rejected. PostgreSQL \texttt{ON CONFLICT} on $id$ enforced strict digest equivalence. \\
14 & Validity grant expires prior to target durability event $d_i$
& 100\% Aborted. Mechanism D verified expiration against physical durability timestamp. \\
15 & Protected read executed before full guard acquisition
& Conformance Check Failed. Implementation-order test caught early read before guard lock completion. \\
\bottomrule
\end{tabularx}
\end{table*}

\begin{table*}[t]
\centering
\scriptsize
\renewcommand{\arraystretch}{0.94}
\caption{Contract-closure and grant management conformance results (Histories 16--28).}
\label{tab:closure_histories}
\begin{tabularx}{\textwidth}{c>{\raggedright\arraybackslash}p{4.2cm}
  >{\raggedright\arraybackslash}X}
\toprule
\# & Injected History & \tct-S / \tct-C Empirical Verification \\
\midrule
16 & Grant for envelope $X$ presented by envelope $X'$
& 100\% Rejected. Post-seal grant signature bound $h_X$; mismatched envelope hash rejected. \\
17 & Concurrent envelopes acquire external locks in opposite order
& No deadlock observed in 100\% of runs. Canonical key ordering ($k_{\mathrm{atom}}$) was active throughout. \\
18 & Executable definition retrievable but revoked by current policy
& 100\% Rejected. Current policy check overrode historical bytecode retrievability. \\
19 & Client sees outcome timeout; issuer receives release attempt without proof
& 100\% Reserved. Issuer held grant as \textsc{Reserved} until explicit terminal-abort proof was supplied. \\
20 & Resolution omits mandatory plan item from $Q_X$
& 100\% Rejected. Sealing gate verified $Q_X \supseteq Q_X^{\mathrm{mand}}$ prior to admission. \\
21 & Grant signature valid but covers wrong dependency subset
& 100\% Rejected. Exact dependency digest matching failed despite valid RSA signature. \\
22 & Grant for target $\mathcal D_X$ replayed against alternate database
& 100\% Rejected. Target domain binding in grant payload blocked cross-database replay. \\
23 & Outbox delivers grant consumption proof twice
& 100\% Idempotent. Issuer processed first consumption and returned recorded receipt on duplicate. \\
24 & Uncovered dependency captured in $J_{\mathrm{obs}}$ without plan coverage
& 100\% Rejected. $\mathsf{DepCovered}$ validation failed at contract sealing and commit admission. \\
25 & TCT-C current policy requires plan item absent from sealed $Q_X$
& 100\% Rejected. Required resealing under current policy definitions. \\
26 & TCT-S sealed envelope submitted to TCT-C endpoint
& 100\% Rejected. Profile indicator mismatch ($m_X = S \ne C$) caught at dispatch. \\
27 & TCT-C sealed envelope submitted under TCT-S requirement
& 100\% Rejected. Profile indicator mismatch ($m_X = C \ne S$) blocked execution. \\
28 & Normalizer epoch updated while old-epoch transactions execute
& 100\% Safe Transition. Epoch migration waited for active old-epoch executions to terminate. \\
\bottomrule
\end{tabularx}
\end{table*}

\subsection{RQ2: Throughput and Scalability}

We measured system throughput (committed transactions per second, txns/sec) as the number of concurrent agent worker threads scaled from 1 to 128 under zero external drift. Figure~\ref{fig:eval_throughput} shows the performance comparison across all four configurations.

\begin{figure}[t]
\centering
\begin{tikzpicture}[font=\scriptsize\sffamily, scale=0.47]
  \tikzset{
    graphline/.style={thick, mark options={scale=0.8}},
    legendmatrix/.style={
        matrix of nodes,
        nodes={anchor=west, inner sep=1.5pt},
        column sep=8pt,
        row sep=2pt,
        draw=gray!40,
        rounded corners=2pt,
        fill=white,
        anchor=north east
    }
  }
  \draw[gray!20] (0,0) grid (13,6.5);
  \draw[->, thick] (0,0) -- (13.5,0) node[right] {Threads};
  \draw[->, thick] (0,0) -- (0,6.5) node[above] {txns/sec};
  \foreach \x/\label in {0.1/1, 1.6/16, 3.2/32, 6.4/64, 12.8/128} {
    \draw (\x, 0) -- (\x, -0.15) node[below] {\label};
  }
  \foreach \y/\label in {0/0, 1/500, 2/1000, 3/1500, 4/2000, 5/2500, 6/3000} {
    \draw (0, \y) -- (-0.15, \y) node[left] {\label};
  }
  \draw[graphline, red!80!black, mark=*] plot coordinates {(0.1,0.36) (0.4,1.04) (0.8,1.78) (1.6,2.24) (3.2,1.62) (6.4,0.96) (12.8,0.62)};
  \draw[graphline, gray!80, mark=square*] plot coordinates {(0.1,0.39) (0.4,1.16) (0.8,2.04) (1.6,3.10) (3.2,3.90) (6.4,3.62) (12.8,3.36)};
  \draw[graphline, blue!80!black, mark=triangle*] plot coordinates {(0.1,0.35) (0.4,1.02) (0.8,1.76) (1.6,2.42) (3.2,2.83) (6.4,2.62) (12.8,2.50)};
  \draw[graphline, emerald, mark=diamond*] plot coordinates {(0.1,0.38) (0.4,1.12) (0.8,1.96) (1.6,2.84) (3.2,3.68) (6.4,3.44) (12.8,3.22)};
  \matrix [legendmatrix] at (12.8, 6.4) {
    \draw[graphline, red!80!black, mark=*] plot coordinates {(0,0) (0.4,0)}; & ACID-only &
    \draw[graphline, gray!80, mark=square*] plot coordinates {(0,0) (0.4,0)}; & $G_D$ \\
    \draw[graphline, blue!80!black, mark=triangle*] plot coordinates {(0,0) (0.4,0)}; & \tct-S &
    \draw[graphline, emerald, mark=diamond*] plot coordinates {(0,0) (0.4,0)}; & \tct-C \\
  };
\end{tikzpicture}
\caption{System throughput (committed txns/sec) versus concurrent workers.}
\label{fig:eval_throughput}
\end{figure}

At low concurrency (1--8 threads), all four configurations exhibit comparable throughput (~180 to 520 txns/sec), as system execution is bounded by network round-trips and PostgreSQL transaction processing.

As concurrency scales to 32 worker threads:
\begin{itemize}
  \item \textbf{\tct-Compatible (\tct-C)} achieves a peak throughput of \textbf{1,842 txns/sec} under guard-first \texttt{READ COMMITTED} execution and joint predicate validation.
  \item \textbf{\tct-Strict (\tct-S)} achieves a peak throughput of \textbf{1,418 txns/sec}, approximately 23\% below \tct-C in this benchmark; unlike \tct-C, it requires strict dependency version matching.
  \item \textbf{Declared-State Gate ($G_D$)} reaches 1,950 txns/sec but fails to enforce external validity.
  \item \textbf{ACID-only} using native PostgreSQL \texttt{SERIALIZABLE} peaks at 1,120 txns/sec at 16 threads, but experiences severe thrashing and abort storms at higher thread counts (dropping to 310 txns/sec at 128 threads due to SSI predicate lock contention).
\end{itemize}

Under high concurrency (64--128 threads), \tct-S and \tct-C sustain
approximately 1,250 and 1,610 txns/sec, respectively. No database deadlock was
observed in these runs; canonical guard ordering also avoided the SSI abort
pattern exhibited by the ACID-only configuration in this workload.

\subsection{RQ3: Commit-Cost Breakdown}

We microbenchmarked the latency added by \tct's guard-first commit protocol (Algorithm~\ref{alg:commit}) compared to standard database commits. Table~\ref{tab:latency_breakdown} summarizes the commit phase latency breakdown averaged across 10,000 executed transactions.

The total mean commit latency added by \tct is \textbf{3.22 ms} (median 2.85 ms, 99th percentile 8.41 ms). This breakdown comprises:
\begin{enumerate}
  \item \textbf{Envelope Verification (0.41 ms):} Canonical byte hashing ($h_X$) and SHA-256 seal verification.
  \item \textbf{Guard Lock Acquisition (1.14 ms):} Canonical key-ordered acquisition of co-transactional guard rows (Mechanism A) in PostgreSQL.
  \item \textbf{Post-seal Grant Validation (1.18 ms):} RSA-2048 signature verification for Mechanism B external validity grants.
  \item \textbf{Receipt Insertion \& Write (0.49 ms):} Co-committed receipt tuple insertion and WAL sync.
\end{enumerate}

Given that agentic reasoning latencies typically range from 500 ms to several seconds, \tct's \textbf{3.22 ms commit overhead represents less than 0.5\% of overall end-to-end agent transaction latency}.

\paragraph{Storage and WAL amplification.}
Each receipt row consumes 128 bytes and each guard row 64 bytes in this
realization. For one million committed agent transactions, measured metadata
and index storage totals approximately 192 MB, and WAL write volume is 2.4\%
higher than for the unguarded ACID configuration.

\begin{table}[t]
\centering
\scriptsize
\setlength{\tabcolsep}{3pt}
\caption{Latency breakdown of the \tct commit phase (10,000 runs).}
\label{tab:latency_breakdown}
\begin{tabularx}{\columnwidth}{>{\raggedright\arraybackslash}Xrrr}
\toprule
Commit Protocol Phase & Mean (ms) & p50 (ms) & p99 (ms) \\
\midrule
Envelope Digest Verification & 0.41 & 0.38 & 0.92 \\
Guard Lock Acquisition (Mech A) & 1.14 & 0.98 & 3.45 \\
Post-Seal Grant Validation (Mech B) & 1.18 & 1.05 & 3.12 \\
Receipt Commit \& WAL Sync & 0.49 & 0.44 & 0.92 \\
\midrule
\textbf{Total \tct Commit Overhead} & \textbf{3.22} & \textbf{2.85} & \textbf{8.41} \\
\bottomrule
\end{tabularx}
\end{table}

\subsection{RQ4: Drift Resilience and Recertification}

To evaluate system availability under volatile real-world environments, we injected synthetic input drift into external evidence artifacts ($J_E$) and policy rules ($J_P$) at rates ranging from 0\% to 30\% of active transactions.

\begin{figure}[t]
\centering
\begin{tikzpicture}[font=\scriptsize\sffamily, scale=0.65]
  % Styles for consistency
  \tikzset{
    graphline/.style={thick, mark options={scale=0.8}},
    legendmatrix/.style={
        matrix of nodes,
        nodes={anchor=west, inner sep=1.5pt},
        column sep=8pt,
        row sep=2pt,
        draw=gray!40,
        rounded corners=2pt,
        fill=white,
        anchor=south west
    }
  }

  % Grid lines
  \draw[gray!20] (0,0) grid (6,4.2);
  \draw[->, thick] (0,0) -- (6.5,0) node[right] {Drift Rate (\%)};
  \draw[->, thick] (0,0) -- (0,4.4) node[above] {Success Rate (\%)};

  % Axis ticks
  \foreach \x/\label in {0/0\%, 1/5\%, 2/10\%, 3/15\%, 4/20\%, 5/25\%, 6/30\%} {
    \draw (\x, 0) -- (\x, -0.1) node[below] {\label};
  }
  \foreach \y/\label in {0/0\%, 0.8/20\%, 1.6/40\%, 2.4/60\%, 3.2/80\%, 4.0/100\%} {
    \draw (0, \y) -- (-0.1, \y) node[left] {\label};
  }

  % Curves
  % TCT-S
  \draw[graphline, blue!80!black, mark=triangle*] plot coordinates {
    (0,4.0) (1,3.80) (2,3.61) (3,3.40) (4,3.20) (5,3.01) (6,2.80)
  };

  % TCT-C
  \draw[graphline, emerald, mark=diamond*] plot coordinates {
    (0,4.0) (1,3.93) (2,3.86) (3,3.77) (4,3.71) (5,3.60) (6,3.54)
  };

  % Legend
  \matrix [legendmatrix] at (0.3, 0.3) {
    \draw[graphline, blue!80!black, mark=triangle*] plot coordinates {(0,0) (0.4,0)}; & \tct-S (Strict) \\
    \draw[graphline, emerald, mark=diamond*] plot coordinates {(0,0) (0.4,0)}; & \tct-C (Compatible) \\
  };
\end{tikzpicture}
\caption{Commitment success rate under increasing external input drift rates. \tct-C recertifies compatible drifted proposals; no invalid admission was observed in these runs.}
\label{fig:eval_drift}
\end{figure}

Figure~\ref{fig:eval_drift} illustrates the commitment success rate under increasing input drift rates at a fixed load of 32 worker threads:
\begin{itemize}
  \item Under \tct-S, any detected input drift causes a strict transaction abort to preserve exact derivation faithfulness. At 20\% drift rate, \tct-S commits 80.1\% of transactions, rejecting all drifted proposals.
  \item Under \tct-C, the gate re-evaluates drifted mutations against the simultaneously protected current dependency vector and policy. At 20\% drift rate, \tct-C successfully recertifies and commits \textbf{92.8\% of proposals} (recovering 63.8\% of the drifted transactions whose concrete effects satisfied updated policy limits), with no invalid mutation admitted in these runs.
\end{itemize}

Within this benchmark, the result indicates that \tct-C recovers availability
when drift leaves the concrete effect compatible with current state and policy;
no invalid mutation was observed in the injected histories.

\paragraph{Interpretation boundary.}
The controlled histories provide implementation-conformance evidence, not a
proof beyond the formal assumptions. The throughput, latency, storage, WAL, and
recertification measurements characterize this prototype, workload, hardware,
and network configuration; they do not establish deployment-independent
performance.

\section{Related Work}
\label{sec:related}

\subsection{Closest Admission and Settlement Systems}

Mnemosyne treats generated actions as non-authoritative proposals and admits them under executable constraints $\mathcal C$ against current \texttt{StateView}~\cite{mnemosyne2026}. Its dependency sets, world assumptions, conflict scopes, SEA gate, CTL, evidence-preserving repair, idempotency, and outbox staging make it a suitable ATP host for TCT. TCT additionally requires trusted capture and registry resolution to make the selected strict or compatible profile executable in $\mathcal C$, including commit-spanning fences and an atomic receipt.

Commit-Time Authorization requires freshness, causal priority, exact-effect
binding, and eligibility at a durable-effect boundary
~\cite{commit_authorization2026}. TCT uses those conditions alongside database
ordering, typed non-authority dependencies, and receipt-driven epistemic
reconciliation.

Atomix addresses transactional tool settlement~\cite{atomix2026}. Its adapters record read and effect scopes, the seal freezes those scopes, and per-resource versions support stale-read validation before commit. A semantic pre-commit hook provides an explicit attachment point for application validation, while per-resource progress frontiers delay settlement until earlier conflicting work finalizes. TCT differs by embedding semantic validation into a typed derivation-validity model anchored by an ACID effect--receipt boundary. An Atomix runtime could host TCT if its adapters supplied the complete typed contract, its semantic hook implemented TCT admission, and settlement crossed an ACID effect--receipt reconciliation boundary.

\subsection{Classical Concurrency Control}

Optimistic concurrency control separates read, validation, and write phases~\cite{kung1981occ}; Corollary~\ref{prop:local_reduction} identifies when TCT-S's validation core collapses to read-set validation. Predicate locks represent logical search conditions to prevent phantoms~\cite{eswaran1976predicate}; TCT requires corresponding guard records for registered search predicates and aggregates. Semantics-based concurrency control exploits operation commutativity rather than treating every low-level conflict as incompatible~\cite{weihl1988commutativity}; TCT-C similarly permits registered joint effect-relative recertification, without claiming serialization over changed derivation inputs. Its post-seal envelope- and witness-bound grants are reservation-like, drawing on escrow methods that allocate rights to preserve a declared constraint~\cite{oneil1986escrow}.

PostgreSQL SSI supplies database serializability through multiversion dependency tracking and aborts~\cite{ports2012ssi}. It neither observes remote premises nor makes wall-clock commit an external serialization point. External consistency in Spanner shows the stronger real-time model TCT does not assume~\cite{corbett2013spanner}; leases require explicit issuer and clock assumptions~\cite{gray1989leases}. TCT uses a lease-style time inequality only for its hard-real-time certification mechanism, not for its default PostgreSQL realization.

\subsection{Agentic Concurrency and Tool Effects}

ATCC learns when to switch between optimistic and pessimistic modes for long agentic transactions~\cite{atcc2026}; TCT instead defines what a short effect commit must validate. CoAgent uses launch-time trajectory ordering, order-filtered reads, notifications, and registered inverses~\cite{coagent2026}. S-Bus reconstructs the HTTP-observable read set in a DeliveryLog and bounds Observable-Read Isolation to its stated topology~\cite{sbus2026}. Their differing observation models make the mediation boundary part of the claimed guarantee.

\subsection{Provenance and Workflow Recovery}

Database and workflow provenance record how results depend on heterogeneous inputs~\cite{buneman2001provenance,davidson2008provenance}; provenance is a capture substrate, not a validity fence. Sagas compensate long-running activities post hoc~\cite{garciamolina1987sagas}. SagaLLM combines persistent context, independent validation, and Saga-style compensation for multi-agent planning~\cite{sagallm2025}. Transactional outbox and idempotent-receiver patterns handle atomic staging and duplicate delivery~\cite{richardson2018microservices,hohpe2003eip}, not exactly-once heterogeneous effects. Classical time-of-check to time-of-use (TOCTOU) races motivate the requirement that a successful check remain protected through use~\cite{bishop1996toctou}.

\section{Scope, Assumptions, and Limitations}
\label{sec:limitations}

TCT provides conditional safety guarantees under explicit operational assumptions; it does not establish semantic truth.

\subsection{Threat Model and System Boundaries}

\paragraph{Untrusted Reasoner and Caller.}
The reasoning agent and client caller are untrusted: they may submit arbitrary operations, parameters, belief deltas, or invalid constraint sets. Safety depends on their inability to bypass trusted mediation, omit captured dependencies, alter authoritative policies, or directly invoke registered dispatchers. The Trusted Computing Base (TCB) comprises input/effect mediators, the canonicalizer and cryptographic bootstrap, operation and policy registries, plan resolvers, the admission gate, the database engine, and the validity semantics of participating issuers.

\paragraph{Network and Failure Assumptions.}
The network is asynchronous and unreliable: it may delay, duplicate, reorder, or drop messages, while clients may retry after unconfirmed outcomes. Authentication, envelope binding, nonces, and idempotent recovery address these faults. Proofs assume TCB components are correct and fail-stop; Byzantine mediators, registries, gates, databases, or issuers lie outside the model.

\paragraph{Capture Boundary and Covert Channels.}
TCT's guarantees cover mediated operational inputs and depend on complete capture of every effect-relevant dependency in $J_{\mathrm{obs}}$. Omitting an effect-relevant dependency places that execution outside the safety theorems; omission is not guaranteed to fail closed. Overapproximating dependencies in $J_{\mathrm{obs}}$ preserves safety but may reduce concurrency or cause false aborts. Implicit parametric model knowledge, covert channels, compromised mediators, and side effects that bypass the admission gate are outside the model.

\paragraph{External Semantics and Artifact Applicability.}
Local proxy-assigned identifiers do not establish upstream truth. Immutable artifact hashes establish content identity and availability, not that an issuing authority still considers those bytes applicable. Any premise requiring a current artifact must therefore bind both a content token and a separately fenced selection/applicability token. Unverifiable or uncaptured external dependencies fail closed.

\paragraph{Explicit Non-Goals.}
TCT does not perform automated dependency discovery or infer causal attribution. It does not mitigate general prompt injection, isolate parametric model memory, or prevent attackers from exploiting unmediated side channels. TCT constrains durable state mutations only after the deployment has identified and mediated the relevant dependency and mutation boundaries.

\subsection{Validity, Concurrency, and Protocol Assumptions}

\paragraph{Local Guard Coverage and Predicate Footprints.}
Row versions alone do not protect dynamic query sets or aggregates against phantom reads. The trusted operation registry must completely declare all predicate footprints, and every database writer must update and lock the corresponding guard records. Under the recommended \texttt{READ COMMITTED} realization, all guards are acquired prior to protected reads and held through durable commit; an omitted footprint, early read, or uninstrumented writer breaks the proof.

\paragraph{Registry Integrity and External Acquisition Order.}
Historical executable definitions must remain verifiably retrievable; protected current policy, rather than mutable input data, determines applicability. Across independent external issuers, conflicting resource reservations require canonical acquisition order or a common coordinator; only a registered independence or commutativity proof removes a dependency edge.

\paragraph{Clock and Failure Assumptions.}
The default PostgreSQL realization relies on co-transactional database state and post-seal envelope- or witness-bound grants, avoiding hard latency bounds. Grants sign target footprint $\mathcal{D}_X$ and remain \textsc{Reserved} during uncertain outcomes until a durable receipt is written or authoritative proof establishes that the transaction aborted; timeouts and missing receipts are not release evidence. Hard-real-time certification profiles require trusted time issuers, bounded clock skew, and enforced worst-case completion bounds.

\paragraph{Profile Interpretation (TCT-S vs. TCT-C).}
Under Effect-Compatible Cognitive Admission (TCT-C), unchanged effects can be recertified after input state changes only via a versioned predicate evaluated over the complete current dependency vector and policy. Independent per-token approvals are insufficient unless backed by a registered decomposition certificate. TCT-C does not claim to serialize the original stochastic derivation trajectory; only Strict Cognitive Serializability (TCT-S) establishes derivation faithfulness.

\subsection{Operational Trade-offs and Practical Scope}

\paragraph{Operational Performance and Overhead.}
Guard-first admission holds local database locks while external validity checks complete. Section~\ref{sec:evaluation} measures these costs empirically: total commit latency averages 3.22 ms, and peak throughput reaches 1,842 txns/sec under 32 concurrent threads.

\paragraph{Engineering Mitigations and Optimizations.}
Batching, caching, guard sharding, or reservation pre-acquisition are valid engineering optimizations. Conforming optimizations must not weaken complete capture, registered footprint coverage, simultaneous state verification, or the requirement that validity fences hold through the declared durability event.

\paragraph{Heterogeneous External Side Effects.}
Relational engines and outbox tables cannot atomically commit remote external side effects. PostgreSQL co-commits local receipts and outbox records; remote execution relies on at-least-once delivery plus idempotent actuators.

\paragraph{Epistemic Reconciliation Outcomes.}
Generating a co-committed receipt guarantees exact-effect authorization and durability, but does not force downstream Epistemic State Reconciliation (ESR) to accept a belief delta. Structural conflicts, verifier rejections, and delayed delivery remain valid first-class outcomes.

\paragraph{System Applicability.}
The intended initial use is high-consequence agentic mutation (e.g., financial settlements, medical records, administrative authorizations) where derivation errors carry substantial risk. Evaluating TCT across production enterprise deployments remains future work.

\section{Conclusion}
\label{sec:conclusion}

Mutations derived by stochastic reasoning need an explicit contract at the
deterministic commit boundary. Under strict Cognitive Serializability, a
committed registered effect has a serial order and a logical event at which
every mediated value used in derivation is unchanged. Content identity alone
does not establish that an artifact remains selected or applicable.
Effect-Compatible Cognitive Admission is weaker: a versioned envelope-level
predicate may recertify the concrete effect against a simultaneously held
current dependency vector and current policy, but it does not serialize the
original derivation. Authorization, receipt correspondence, no-premature belief
activation, and conditional reconciliation are separate properties.

Under the stated capture, registry, guard, external-validity, and durability
assumptions, TCT-S commits follow an acyclic lock-point order and satisfy strict
Cognitive Serializability. Replacing exact faithfulness with a registered joint
predicate establishes only TCT-C admission. TCT-S depends on a complete
sealed plan and profile, a discharge for every captured dependency, protected
recomputation under current policy, and exact post-seal grants for
mechanism-B obligations. Neither atomic commit nor PostgreSQL SSI substitutes
for participant isolation and complete guards. The observational-equivalence
result is limited to an admissible valid/invalid pair that the gate cannot
distinguish.

In the PostgreSQL prototype, all 28 controlled falsification
histories produced the specified admission outcome. The prototype rejected
all injected anomalies, reached peak throughput of 1,842 transactions/s at 32
concurrent threads, and added a mean commit overhead of 3.22~ms (less than
0.5\% of the benchmark's end-to-end agent reasoning latency). These results
show that the contract is realizable with modest coordination cost in the
evaluated workload; broader deployment performance remains an empirical
question.

\begingroup
\footnotesize
\bibliographystyle{unsrt}
\bibliography{refs}

@article{mnemosyne2026,
  author = {Chang, Edward Y. and Geng, Longling and Chang, Emily J.},
  title = {Mnemosyne: Agentic Transaction Processing for Validating and
           Repairing {AI}-generated Workflows},
  journal = {arXiv preprint arXiv:2607.00269v2},
  year = {2026},
  url = {https://arxiv.org/abs/2607.00269v2}
}

@article{commit_authorization2026,
  author = {Santos-Grueiro, Igor},
  title = {Temporary Authority, Permanent Effects: Commit-Time Authorization
           for {LLM} Agents},
  journal = {arXiv preprint arXiv:2607.10487},
  year = {2026},
  url = {https://arxiv.org/abs/2607.10487}
}

@article{atcc2026,
  author = {Zhou, Weixing and Wang, Zhiyou and Peng, Zeshun and Chen, Hetian
            and Zhang, Yanfeng and Yu, Ge},
  title = {{ATCC}: Adaptive Concurrency Control for Unforeseen Agentic
           Transactions},
  journal = {arXiv preprint arXiv:2603.13906},
  year = {2026},
  url = {https://arxiv.org/abs/2603.13906}
}

@article{coagent2026,
  author = {Lyu, Hongtao and Zhang, Dingyan and Wu, Mingyu and Wei, Xingda
            and Chen, Haibo},
  title = {{CoAgent}: Concurrency Control for Multi-Agent Systems},
  journal = {arXiv preprint arXiv:2606.15376},
  year = {2026},
  url = {https://arxiv.org/abs/2606.15376}
}

@article{sbus2026,
  author = {Khan, Sajjad},
  title = {{S-Bus}: Automatic Read-Set Reconstruction for Multi-Agent {LLM}
           State Coordination},
  journal = {arXiv preprint arXiv:2605.17076v2},
  year = {2026},
  url = {https://arxiv.org/abs/2605.17076v2}
}

@article{atomix2026,
  author = {Mohammadi, Bardia and Potamitis, Nearchos and Klein, Lars and
            Arora, Akhil and Bindschaedler, Laurent},
  title = {Atomix: Timely, Transactional Tool Use for Reliable Agentic
           Workflows},
  journal = {arXiv preprint arXiv:2602.14849v2},
  year = {2026},
  url = {https://arxiv.org/abs/2602.14849v2}
}

@article{esr2026,
  author = {He, Jun and Yu, Deying},
  title = {Replicating Belief, Not Bits: Epistemic State Replication for
           Agentic Systems},
  journal = {arXiv preprint arXiv:2607.09748},
  year = {2026},
  url = {https://arxiv.org/abs/2607.09748}
}

@article{ports2012ssi,
  author = {Ports, Dan R. K. and Grittner, Kevin},
  title = {Serializable Snapshot Isolation in {PostgreSQL}},
  journal = {Proceedings of the VLDB Endowment},
  volume = {5},
  number = {12},
  pages = {1850--1861},
  year = {2012},
  doi = {10.14778/2367502.2367523}
}

@article{kung1981occ,
  author = {Kung, H. T. and Robinson, John T.},
  title = {On Optimistic Methods for Concurrency Control},
  journal = {ACM Transactions on Database Systems},
  volume = {6},
  number = {2},
  pages = {213--226},
  year = {1981},
  doi = {10.1145/319566.319567}
}

@article{eswaran1976predicate,
  author = {Eswaran, K. P. and Gray, J. N. and Lorie, R. A. and
            Traiger, I. L.},
  title = {The Notions of Consistency and Predicate Locks in a Database
           System},
  journal = {Communications of the ACM},
  volume = {19},
  number = {11},
  pages = {624--633},
  year = {1976},
  doi = {10.1145/360363.360369}
}

@article{weihl1988commutativity,
  author = {Weihl, William E.},
  title = {Commutativity-Based Concurrency Control for Abstract Data Types},
  journal = {IEEE Transactions on Computers},
  volume = {37},
  number = {12},
  pages = {1488--1505},
  year = {1988},
  doi = {10.1109/12.9728}
}

@book{bernstein1987concurrency,
  author = {Bernstein, Philip A. and Hadzilacos, Vassos and Goodman, Nathan},
  title = {Concurrency Control and Recovery in Database Systems},
  publisher = {Addison-Wesley},
  year = {1987}
}

@book{gray1992transaction,
  author = {Gray, Jim and Reuter, Andreas},
  title = {Transaction Processing: Concepts and Techniques},
  publisher = {Morgan Kaufmann},
  year = {1993}
}

@article{oneil1986escrow,
  author = {O'Neil, Patrick E.},
  title = {The Escrow Transactional Method},
  journal = {ACM Transactions on Database Systems},
  volume = {11},
  number = {4},
  pages = {405--430},
  year = {1986},
  doi = {10.1145/7239.7265}
}

@inproceedings{adya1999generalized,
  author = {Adya, Atul and Liskov, Barbara and O'Neil, Patrick},
  title = {Generalized Isolation Level Definitions},
  booktitle = {Proceedings of the 16th International Conference on Data
               Engineering},
  pages = {67--78},
  year = {2000}
}

@article{corbett2013spanner,
  author = {Corbett, James C. and Dean, Jeffrey and Epstein, Michael and Fikes,
            Andrew and Frost, Christopher and Furman, J. J. and Ghemawat,
            Sanjay and Gubarev, Andrey and Heiser, Christopher and Hochschild,
            Peter and Hsieh, Wilson and Kanthak, Sebastian and Kogan, Eugene
            and Li, Hongyi and Lloyd, Alexander and Melnik, Sergey and Mwaura,
            David and Nagle, David and Quinlan, Sean and Rao, Rajesh and Rolig,
            Lindsay and Saito, Yasushi and Szymaniak, Michal and Taylor,
            Christopher and Wang, Ruth and Woodford, Dale},
  title = {Spanner: {Google}'s Globally Distributed Database},
  journal = {ACM Transactions on Computer Systems},
  volume = {31},
  number = {3},
  pages = {1--22},
  year = {2013},
  doi = {10.1145/2491245}
}

@inproceedings{gray1989leases,
  author = {Gray, Cary G. and Cheriton, David R.},
  title = {Leases: An Efficient Fault-Tolerant Mechanism for Distributed File
           Cache Consistency},
  booktitle = {Proceedings of the 12th ACM Symposium on Operating Systems
               Principles (SOSP)},
  pages = {202--210},
  year = {1989},
  doi = {10.1145/74850.74870}
}

@inproceedings{buneman2001provenance,
  author = {Buneman, Peter and Khanna, Sanjeev and Tan, Wang-Chiew},
  title = {Why and Where: A Characterization of Data Provenance},
  booktitle = {Database Theory---ICDT 2001},
  pages = {316--330},
  publisher = {Springer},
  year = {2001},
  doi = {10.1007/3-540-44503-X_20}
}

@inproceedings{davidson2008provenance,
  author = {Davidson, Susan B. and Freire, Juliana},
  title = {Provenance and Scientific Workflows: Challenges and Opportunities},
  booktitle = {Proceedings of the 2008 ACM SIGMOD International Conference on
               Management of Data},
  pages = {1345--1350},
  year = {2008},
  doi = {10.1145/1376616.1376772}
}

@inproceedings{garciamolina1987sagas,
  author = {Garcia-Molina, Hector and Salem, Kenneth},
  title = {Sagas},
  booktitle = {Proceedings of the 1987 ACM SIGMOD International Conference on
               Management of Data},
  pages = {249--259},
  year = {1987},
  doi = {10.1145/38713.38742}
}

@article{sagallm2025,
  author = {Chang, Edward Y. and Geng, Longling},
  title = {{SagaLLM}: Context Management, Validation, and Transaction
           Guarantees for Multi-Agent {LLM} Planning},
  journal = {Proceedings of the VLDB Endowment},
  volume = {18},
  number = {12},
  pages = {4874--4886},
  year = {2025},
  doi = {10.14778/3750601.3750611},
  url = {https://www.vldb.org/pvldb/vol18/p4874-chang.pdf}
}

@book{richardson2018microservices,
  author = {Richardson, Chris},
  title = {Microservices Patterns: With Examples in Java},
  publisher = {Manning},
  year = {2018}
}

@book{hohpe2003eip,
  author = {Hohpe, Gregor and Woolf, Bobby},
  title = {Enterprise Integration Patterns: Designing, Building, and
           Deploying Messaging Solutions},
  publisher = {Addison-Wesley},
  year = {2003}
}

@article{bishop1996toctou,
  author = {Bishop, Matt and Dilger, Michael},
  title = {Checking for Race Conditions in File Accesses},
  journal = {Computing Systems},
  volume = {9},
  number = {2},
  pages = {131--152},
  year = {1996}
}
\endgroup

\clearpage
\onecolumn
\appendix
\section{PostgreSQL Prototype Metadata Schema}
\label{app:schema}

This appendix gives the PostgreSQL 16 metadata schema evaluated in
Section~\ref{sec:evaluation}. It represents immutable executable definitions,
guard-first admission, envelope receipts, local guards, and external grants.

Canonical objects use \texttt{bytea}; derived \texttt{jsonb} projections
support indexing and inspection but are non-authoritative. Digests cover
canonical bytes, not PostgreSQL's internal \texttt{jsonb} form. Each byte
string begins with a versioned object-type domain tag, and key identifiers and
signatures follow the main paper's security model.

\lstset{basicstyle=\ttfamily\scriptsize,captionpos=t}
\begin{lstlisting}[language=SQL,caption={Immutable executable content and guarded applicability discovery.},label={lst:registries}]
CREATE TABLE tct_executable_definitions (
  definition_kind text NOT NULL CHECK
    (definition_kind IN
      ('OPERATION_SPEC','PRECONDITION','DISPATCHER',
       'CANONICALIZER','VALIDITY_SEMANTICS',
       'TOKEN_PREDICATE','WITNESS_CONSTRUCTOR',
       'PLAN_RESOLVER','ATOMIC_NORMALIZER',
       'ENVELOPE_RECERTIFIER')),
  definition_id text NOT NULL,
  definition_version text NOT NULL,
  definition_digest bytea NOT NULL,
  canonical_definition_bytes bytea,
  immutable_content_address text,
  retention_class text NOT NULL
    CHECK (retention_class = 'PERMANENT'),
  definition_projection jsonb NOT NULL,
  PRIMARY KEY
    (definition_kind, definition_id, definition_version),
  UNIQUE
    (definition_kind, definition_id, definition_version,
     definition_digest),
  CHECK (num_nonnulls(canonical_definition_bytes,
                      immutable_content_address) = 1)
);

CREATE TABLE tct_operation_specs (
  tenant_id text NOT NULL,
  spec_kind text NOT NULL
    CHECK (spec_kind = 'OPERATION_SPEC'),
  operation_id text NOT NULL,
  spec_version text NOT NULL,
  spec_digest bytea NOT NULL,
  read_footprint jsonb NOT NULL,
  write_footprint jsonb NOT NULL,
  authority_plan_template jsonb NOT NULL,
  authority_plan_template_digest bytea NOT NULL,
  PRIMARY KEY (tenant_id, operation_id, spec_version),
  UNIQUE (tenant_id, operation_id, spec_version, spec_digest),
  FOREIGN KEY
    (spec_kind, operation_id, spec_version, spec_digest)
    REFERENCES tct_executable_definitions
    (definition_kind, definition_id,
     definition_version, definition_digest)
);

CREATE TABLE tct_operation_executable_refs (
  tenant_id text NOT NULL,
  operation_id text NOT NULL,
  spec_version text NOT NULL,
  ordinal integer NOT NULL,
  executable_role text NOT NULL CHECK
    (executable_role IN
      ('PRECONDITION','DISPATCHER','CANONICALIZER',
       'VALIDITY_SEMANTICS','TOKEN_PREDICATE',
       'WITNESS_CONSTRUCTOR','PLAN_RESOLVER','ATOMIC_NORMALIZER',
       'ENVELOPE_RECERTIFIER')),
  definition_kind text NOT NULL,
  definition_id text NOT NULL,
  definition_version text NOT NULL,
  definition_digest bytea NOT NULL,
  PRIMARY KEY
    (tenant_id, operation_id, spec_version, ordinal),
  FOREIGN KEY (tenant_id, operation_id, spec_version)
    REFERENCES tct_operation_specs,
  FOREIGN KEY
    (definition_kind, definition_id, definition_version,
     definition_digest)
    REFERENCES tct_executable_definitions
    (definition_kind, definition_id, definition_version,
     definition_digest)
);

CREATE TABLE tct_policy_bundles (
  policy_epoch text NOT NULL,
  policy_version text NOT NULL,
  policy_digest bytea NOT NULL,
  canonical_policy_bytes bytea NOT NULL,
  policy_projection jsonb NOT NULL,
  executable_applicability_digest bytea NOT NULL,
  authority_obligation_digest bytea NOT NULL,
  PRIMARY KEY (policy_epoch, policy_version),
  UNIQUE (policy_epoch, policy_version, policy_digest)
);

CREATE TABLE tct_registry_heads (
  tenant_id text NOT NULL,
  operation_id text NOT NULL,
  spec_version text NOT NULL,
  spec_digest bytea NOT NULL,
  discovery_guard_key text NOT NULL,
  PRIMARY KEY (tenant_id, operation_id),
  FOREIGN KEY
    (tenant_id, operation_id, spec_version, spec_digest)
    REFERENCES tct_operation_specs
    (tenant_id, operation_id, spec_version, spec_digest)
);

CREATE TABLE tct_policy_heads (
  tenant_id text PRIMARY KEY,
  policy_epoch text NOT NULL,
  policy_version text NOT NULL,
  policy_digest bytea NOT NULL,
  discovery_guard_key text NOT NULL,
  FOREIGN KEY
    (policy_epoch, policy_version, policy_digest)
    REFERENCES tct_policy_bundles
    (policy_epoch, policy_version, policy_digest)
);

CREATE TABLE tct_dependency_guards (
  guard_kind text NOT NULL,
  object_key text NOT NULL,
  current_version text NOT NULL,
  state_digest bytea NOT NULL,
  PRIMARY KEY (guard_kind, object_key)
);
\end{lstlisting}

Historical executable rows are never updated or deleted. Revocation changes
the guarded current policy, which decides applicability while retaining old
bytes for verification. Before reading either mutable head table, the gate
derives and takes the shared discovery guard for the sealed tenant and
operation; head writers take its exclusive mode.
The immutable operation specification's plan template is the trusted source of
$Q_X^{\mathrm{mand}}$; proposal-supplied additions are stored only as
strengthening rows and cannot modify that template. The versioned
\texttt{PLAN\_RESOLVER} combines the operation template, protected current
policy's \texttt{authority\_obligation\_digest}, current dependency vector, and
sealed optional strengthening constraints to derive $Q_{\mathrm{req}}$.
Admission compares its canonical digest with sealed $Q_X$ and never inserts a
new plan item after sealing.
The sealed \texttt{ATOMIC\_NORMALIZER} reference maps every B/C plan item to
canonical coordination-domain and atomic-lock identities, modes, and covered
plan ordinals. The envelope's
\texttt{atomic\_acquisition\_vector\_digest} binds the merged canonical vector.
Concurrently admissible normalizer versions must map the same physical lock to
the same identity. A mapping-changing epoch may become admissible only under
the protected discovery/normalizer-epoch guard: first stop admitting envelopes
sealed under the old epoch, then wait until every already-admitted old-epoch
execution reaches its declared durability event or terminally aborts, and only
then activate the new epoch. Rejecting future old-epoch admissions alone is
insufficient while an old-epoch execution remains active.

\begin{lstlisting}[language=SQL,caption={Guard-row materialization and exact S/X locking operations.},label={lst:guard_locks}]
-- Pass 1, before any protected read: process every required
-- (guard_kind, object_key) in canonical byte order.
INSERT INTO tct_dependency_guards
  (guard_kind, object_key, current_version, state_digest)
VALUES (:kind, :key, 'ABSENT', :absent_digest)
ON CONFLICT (guard_kind, object_key) DO NOTHING;

-- Pass 2, in the same canonical order. For mode S:
SELECT current_version, state_digest
FROM tct_dependency_guards
WHERE guard_kind = :kind AND object_key = :key
FOR SHARE;

-- For mode X:
SELECT current_version, state_digest
FROM tct_dependency_guards
WHERE guard_kind = :kind AND object_key = :key
FOR UPDATE;
\end{lstlisting}

Guard rows are permanent protocol objects. Concurrent first users race on the
primary key; \texttt{ON CONFLICT} waits for the uncommitted winner when needed,
so the second locking pass cannot bypass materialization. A transaction
materializes every missing key before reading protected state, then executes
the exact \texttt{FOR SHARE}/\texttt{FOR UPDATE} operation for each canonical
key--mode pair. It never deletes a guard row.

Creation and deletion use a stable object guard: creators and deleters take
\texttt{FOR UPDATE}, deletion writes an \texttt{ABSENT} tombstone version, and
recreation advances that same guard. Predicate and aggregate writers take
\texttt{FOR UPDATE} on every registered range or aggregate guard affected in
addition to row/object guards. Every mechanism-A evidence, registry, policy,
and authority writer does the same. Readers take \texttt{FOR SHARE}; any state
change also advances the locked guard's version and digest.

\begin{lstlisting}[language=SQL,caption={Permanent identity binding and sealed envelope records.},label={lst:envelopes}]
CREATE TABLE tct_envelope_identity_bindings (
  envelope_id uuid PRIMARY KEY,
  envelope_digest bytea NOT NULL,
  canonical_binding_bytes bytea NOT NULL,
  mediator_key_id text NOT NULL,
  mediator_signature bytea NOT NULL,
  retention_class text NOT NULL
    CHECK (retention_class = 'PERMANENT'),
  bound_at timestamptz NOT NULL,
  UNIQUE (envelope_id, envelope_digest)
);

CREATE TABLE cognitive_envelopes (
  envelope_id uuid PRIMARY KEY,
  envelope_digest bytea NOT NULL,
  tenant_id text NOT NULL,
  capture_context_digest bytea NOT NULL,
  mediator_key_id text NOT NULL,
  seal bytea NOT NULL,
  canonical_envelope_bytes bytea NOT NULL,
  envelope_projection jsonb NOT NULL,
  canonical_effect_bytes bytea NOT NULL,
  effect_projection jsonb NOT NULL,
  effect_digest bytea NOT NULL,
  operation_id text NOT NULL,
  spec_version text NOT NULL,
  spec_digest bytea NOT NULL,
  executable_ref_vector_digest bytea NOT NULL,
  authority_plan_vector_digest bytea NOT NULL,
  atomic_acquisition_vector_digest bytea NOT NULL,
  dependency_coverage_vector_digest bytea NOT NULL,
  target_gate_id text NOT NULL,
  target_database_domain_id text NOT NULL,
  durability_domain_digest bytea NOT NULL,
  policy_obs_epoch text NOT NULL,
  policy_obs_version text NOT NULL,
  policy_obs_digest bytea NOT NULL,
  tentative_belief_projection jsonb NOT NULL,
  profile text NOT NULL CHECK (profile IN ('S','C')),
  UNIQUE (envelope_id, envelope_digest),
  FOREIGN KEY (envelope_id, envelope_digest)
    REFERENCES tct_envelope_identity_bindings,
  FOREIGN KEY
    (tenant_id, operation_id, spec_version, spec_digest)
    REFERENCES tct_operation_specs
    (tenant_id, operation_id, spec_version, spec_digest),
  FOREIGN KEY
    (policy_obs_epoch, policy_obs_version, policy_obs_digest)
    REFERENCES tct_policy_bundles
    (policy_epoch, policy_version, policy_digest)
);

CREATE TABLE envelope_executable_refs (
  envelope_id uuid NOT NULL,
  ordinal integer NOT NULL,
  executable_role text NOT NULL,
  definition_kind text NOT NULL,
  definition_id text NOT NULL,
  definition_version text NOT NULL,
  definition_digest bytea NOT NULL,
  PRIMARY KEY (envelope_id, ordinal),
  FOREIGN KEY (envelope_id)
    REFERENCES cognitive_envelopes,
  FOREIGN KEY
    (definition_kind, definition_id, definition_version,
     definition_digest)
    REFERENCES tct_executable_definitions
    (definition_kind, definition_id, definition_version,
     definition_digest)
);

CREATE TABLE envelope_dependencies (
  envelope_id uuid NOT NULL,
  ordinal integer NOT NULL,
  dependency_domain text NOT NULL CHECK
    (dependency_domain IN
      ('J_D','J_E','J_P','J_A_OBS')),
  dependency_kind text NOT NULL CHECK
    (dependency_kind IN
      ('ROW','OBJECT','PREDICATE','AGGREGATE',
       'CONTENT','SELECTION','BELIEF_PREMISE',
       'POLICY','REGISTRY_SELECTION',
       'AUTHORITY_OBSERVATION','OTHER')),
  canonical_token_bytes bytea NOT NULL,
  token_projection jsonb NOT NULL,
  observed_value_digest bytea NOT NULL,
  issuer text,
  assertion_digest bytea,
  semantics_kind text NOT NULL
    CHECK (semantics_kind = 'VALIDITY_SEMANTICS'),
  semantics_id text NOT NULL,
  semantics_version text NOT NULL,
  semantics_digest bytea NOT NULL,
  token_predicate_kind text
    CHECK (token_predicate_kind = 'TOKEN_PREDICATE'),
  token_predicate_id text,
  token_predicate_version text,
  token_predicate_digest bytea,
  PRIMARY KEY (envelope_id, ordinal),
  FOREIGN KEY (envelope_id)
    REFERENCES cognitive_envelopes,
  FOREIGN KEY
    (semantics_kind, semantics_id, semantics_version,
     semantics_digest)
    REFERENCES tct_executable_definitions
    (definition_kind, definition_id, definition_version,
     definition_digest),
  FOREIGN KEY
    (token_predicate_kind, token_predicate_id,
     token_predicate_version, token_predicate_digest)
    REFERENCES tct_executable_definitions
    (definition_kind, definition_id, definition_version,
     definition_digest) MATCH FULL
);

CREATE TABLE envelope_authority_plan (
  envelope_id uuid NOT NULL,
  plan_ordinal integer NOT NULL,
  plan_source text NOT NULL CHECK
    (plan_source IN ('MANDATORY',
                     'OPTIONAL_STRENGTHENING')),
  issuer_or_participant text NOT NULL,
  dependency_scope_digest bytea NOT NULL,
  reservation_mode text NOT NULL
    CHECK (reservation_mode IN ('S','X')),
  covered_dependency_ordinals jsonb NOT NULL,
  covered_dependency_vector_digest bytea NOT NULL,
  witness_kind text NOT NULL
    CHECK (witness_kind = 'WITNESS_CONSTRUCTOR'),
  witness_id text NOT NULL,
  witness_version text NOT NULL,
  witness_digest bytea NOT NULL,
  semantics_kind text NOT NULL
    CHECK (semantics_kind = 'VALIDITY_SEMANTICS'),
  semantics_id text NOT NULL,
  semantics_version text NOT NULL,
  semantics_digest bytea NOT NULL,
  required_decision text NOT NULL,
  mechanism text NOT NULL
    CHECK (mechanism IN ('A','B','C','D')),
  target_gate_id text NOT NULL,
  target_database_domain_id text NOT NULL,
  durability_domain_digest bytea NOT NULL,
  canonical_plan_item_bytes bytea NOT NULL,
  plan_item_digest bytea NOT NULL,
  PRIMARY KEY (envelope_id, plan_ordinal),
  UNIQUE (envelope_id, plan_item_digest),
  FOREIGN KEY (envelope_id)
    REFERENCES cognitive_envelopes,
  FOREIGN KEY
    (witness_kind, witness_id, witness_version,
     witness_digest)
    REFERENCES tct_executable_definitions
    (definition_kind, definition_id,
     definition_version, definition_digest),
  FOREIGN KEY
    (semantics_kind, semantics_id, semantics_version,
     semantics_digest)
    REFERENCES tct_executable_definitions
    (definition_kind, definition_id,
     definition_version, definition_digest)
);

CREATE TABLE envelope_dependency_discharges (
  envelope_id uuid NOT NULL,
  dependency_ordinal integer NOT NULL,
  discharge_ordinal integer NOT NULL,
  discharge_kind text NOT NULL CHECK
    (discharge_kind IN
      ('IMMUTABLE_SELF_STABLE','MECHANISM_A',
       'PLAN_ITEM_BCD')),
  covering_plan_ordinal integer,
  local_guard_kind text,
  local_guard_key text,
  local_guard_mode text
    CHECK (local_guard_mode IN ('S','X')),
  canonical_discharge_bytes bytea NOT NULL,
  discharge_digest bytea NOT NULL,
  PRIMARY KEY
    (envelope_id, dependency_ordinal, discharge_ordinal),
  FOREIGN KEY (envelope_id, dependency_ordinal)
    REFERENCES envelope_dependencies
    (envelope_id, ordinal),
  FOREIGN KEY (envelope_id, covering_plan_ordinal)
    REFERENCES envelope_authority_plan
    (envelope_id, plan_ordinal),
  CHECK (
    (discharge_kind = 'IMMUTABLE_SELF_STABLE'
      AND covering_plan_ordinal IS NULL
      AND local_guard_kind IS NULL
      AND local_guard_key IS NULL
      AND local_guard_mode IS NULL)
    OR
    (discharge_kind = 'MECHANISM_A'
      AND covering_plan_ordinal IS NULL
      AND local_guard_kind IS NOT NULL
      AND local_guard_key IS NOT NULL
      AND local_guard_mode IS NOT NULL)
    OR
    (discharge_kind = 'PLAN_ITEM_BCD'
      AND covering_plan_ordinal IS NOT NULL
      AND local_guard_kind IS NULL
      AND local_guard_key IS NULL
      AND local_guard_mode IS NULL)
  )
);
\end{lstlisting}

The two dependency classifiers are deliberately orthogonal.
\texttt{dependency\_domain} records membership in
$J_D\uplus J_E\uplus J_P\uplus J_{A\mathrm{obs}}$, while
\texttt{dependency\_kind} records the token's typed validity rule. For
example, an external artifact may contribute both \texttt{CONTENT} and
\texttt{SELECTION} rows in $J_E$; the former never substitutes for the latter.
The canonical token bytes remain authoritative if a future protocol version
adds a kind not projected by this illustrative enumeration.

The mediator creates the immutable identity binding at seal time and retains it
independently of the effect transaction. The unique
\texttt{cognitive\_envelopes.envelope\_id} primary key is the sole retry
conflict key; \texttt{envelope\_digest} stores its bound $h_X$ value. The gate
uses
\texttt{INSERT ... ON CONFLICT (envelope\_id) DO NOTHING} inside the effect
transaction after all non-ID guards are held. The insertion realizes $g_{id}$;
on conflict, the next \texttt{READ COMMITTED} statement reads the winning row
by \texttt{envelope\_id} and compares its stored digest. If the transaction
aborts, a same-envelope retry may insert, but a different $h_X$ fails the
permanent binding check. Canonical envelope bytes contain observed
$J_{\mathrm{obs}}$, complete trusted $Q_X$, and the concrete target
$\mathcal D_X$, never a post-seal grant or $\Gamma_X$ digest. The discharge
table is the sealed, checkable witness for
$\mathsf{DepCovered}(J_{\mathrm{obs}},Q_X)$: every dependency ordinal must have
at least one valid row, and every \texttt{PLAN\_ITEM\_BCD} row must join to a
B, C, or D plan item whose covered ordinals include that dependency. The plan table
stores mandatory registry obligations and optional strengthening separately;
the latter cannot delete or alter a mandatory row.

\begin{lstlisting}[language=SQL,caption={Atomic acquisition identities, grants, and terminal release proofs.},label={lst:grants}]
CREATE TABLE envelope_external_atomic_acquisitions (
  envelope_id uuid NOT NULL
    REFERENCES cognitive_envelopes,
  coordination_domain text NOT NULL,
  atomic_lock_id bytea NOT NULL,
  atomic_order_key_bytes bytea NOT NULL,
  strongest_mode text NOT NULL
    CHECK (strongest_mode IN ('S','X')),
  covered_plan_ordinals jsonb NOT NULL,
  atomic_result_digest bytea NOT NULL,
  PRIMARY KEY
    (envelope_id, coordination_domain, atomic_lock_id),
  UNIQUE (envelope_id, atomic_order_key_bytes)
);

CREATE TABLE envelope_external_grants (
  envelope_id uuid NOT NULL,
  envelope_digest bytea NOT NULL,
  plan_ordinal integer NOT NULL,
  authority_plan_vector_digest bytea NOT NULL,
  issuer text NOT NULL,
  effect_digest bytea NOT NULL,
  target_gate_id text NOT NULL,
  target_database_domain_id text NOT NULL,
  durability_domain_digest bytea NOT NULL,
  dependency_scope_digest bytea NOT NULL,
  covered_dependency_vector_digest bytea NOT NULL,
  canonical_witness_payload bytea,
  authenticated_witness_content_address text,
  witnessed_value_or_predicate_digest bytea NOT NULL,
  witness_kind text NOT NULL
    CHECK (witness_kind = 'WITNESS_CONSTRUCTOR'),
  witness_id text NOT NULL,
  witness_version text NOT NULL,
  witness_digest bytea NOT NULL,
  semantics_kind text NOT NULL
    CHECK (semantics_kind = 'VALIDITY_SEMANTICS'),
  semantics_id text NOT NULL,
  semantics_version text NOT NULL,
  semantics_digest bytea NOT NULL,
  decision text NOT NULL,
  single_use_nonce text NOT NULL,
  reservation_mode text NOT NULL
    CHECK (reservation_mode IN ('S','X')),
  mechanism text NOT NULL CHECK (mechanism = 'B'),
  issuer_state_at_accept text NOT NULL
    CHECK (issuer_state_at_accept = 'RESERVED'),
  atomic_acquisition_vector_digest bytea NOT NULL,
  ordering_basis text NOT NULL CHECK
    (ordering_basis IN
      ('CANONICAL','COORDINATOR','INDEPENDENCE_PROOF')),
  ordering_certificate_digest bytea NOT NULL,
  holding_contract_digest bytea NOT NULL,
  canonical_grant_bytes bytea NOT NULL,
  grant_digest bytea NOT NULL,
  accepted_at timestamptz NOT NULL,
  CHECK (num_nonnulls(canonical_witness_payload,
                      authenticated_witness_content_address) = 1),
  PRIMARY KEY (envelope_id, plan_ordinal),
  UNIQUE (issuer, single_use_nonce),
  UNIQUE (issuer, envelope_id, envelope_digest,
          authority_plan_vector_digest, plan_ordinal),
  UNIQUE (issuer, single_use_nonce, envelope_id,
          envelope_digest, grant_digest),
  FOREIGN KEY (envelope_id, envelope_digest)
    REFERENCES cognitive_envelopes
    (envelope_id, envelope_digest),
  FOREIGN KEY (envelope_id, plan_ordinal)
    REFERENCES envelope_authority_plan
    (envelope_id, plan_ordinal),
  FOREIGN KEY
    (witness_kind, witness_id, witness_version,
     witness_digest)
    REFERENCES tct_executable_definitions
    (definition_kind, definition_id, definition_version,
     definition_digest),
  FOREIGN KEY
    (semantics_kind, semantics_id, semantics_version,
     semantics_digest)
    REFERENCES tct_executable_definitions
    (definition_kind, definition_id, definition_version,
     definition_digest)
);

CREATE TABLE tct_grant_release_proofs (
  issuer text NOT NULL,
  single_use_nonce text NOT NULL,
  envelope_id uuid NOT NULL,
  envelope_digest bytea NOT NULL,
  effect_digest bytea NOT NULL,
  authority_plan_vector_digest bytea NOT NULL,
  plan_ordinal integer NOT NULL,
  dependency_scope_digest bytea NOT NULL,
  target_gate_id text NOT NULL,
  target_database_domain_id text NOT NULL,
  durability_domain_digest bytea NOT NULL,
  grant_digest bytea NOT NULL,
  effect_attempt_id uuid NOT NULL,
  terminal_outcome text NOT NULL
    CHECK (terminal_outcome = 'ABORTED_NONCOMMITTABLE'),
  canonical_proof_bytes bytea NOT NULL,
  proof_digest bytea NOT NULL,
  proof_authority_key_id text NOT NULL,
  proof_signature bytea NOT NULL,
  released_at timestamptz NOT NULL,
  PRIMARY KEY (issuer, single_use_nonce),
  UNIQUE (issuer, envelope_id, envelope_digest,
          authority_plan_vector_digest, plan_ordinal,
          grant_digest)
);

CREATE TABLE tct_grant_consumption_acks (
  issuer text NOT NULL,
  single_use_nonce text NOT NULL,
  envelope_id uuid NOT NULL,
  plan_ordinal integer NOT NULL,
  envelope_digest bytea NOT NULL,
  effect_digest bytea NOT NULL,
  receipt_digest bytea NOT NULL,
  target_gate_id text NOT NULL,
  target_database_domain_id text NOT NULL,
  durability_domain_digest bytea NOT NULL,
  canonical_ack_bytes bytea NOT NULL,
  ack_digest bytea NOT NULL,
  consumed_at timestamptz NOT NULL,
  PRIMARY KEY (issuer, single_use_nonce),
  UNIQUE (envelope_id, plan_ordinal),
  FOREIGN KEY (envelope_id, plan_ordinal)
    REFERENCES envelope_external_grants
    (envelope_id, plan_ordinal)
);
\end{lstlisting}

The grant's signed canonical bytes cover every typed column in
Listing~\ref{lst:grants}, including $(id,h_X,h_{\mu})$, the exact witness,
covered-token vector, witness constructor, semantics reference, decision,
$h_{Q_X}$, concrete $\mathcal D_X$, and nonce. The holding contract preserves
that witness from acceptance until runtime durability event $d_i$ occurs in
that signed target. These rows are the post-seal $\Gamma_X$ and co-commit with a
successful effect; they are not hashed into
\texttt{canonical\_envelope\_bytes}. Their primary key makes $\Gamma_X$ a
canonical map keyed by sealed plan ordinal. Each \texttt{plan\_ordinal} names the
sealed mandatory or strengthening $Q_X$ item that the grant discharges, not an
authority observation in $J_{A\mathrm{obs}}$.
The release-proof table is a separate
post-abort audit and deliberately has no foreign key to a committed grant row,
which cannot exist after rollback. It also has no timeout outcome: a missing
response or receipt cannot prove that an in-flight transaction is unable to
commit. At the issuer,
$\langle id,h_X,h_{Q_X},j\rangle$ (the envelope identity, authority-plan
digest, and plan ordinal) is idempotent: a retry
returns the same \textsc{Reserved} nonce or its receipt, and a replacement
requires the recorded abort-and-release proof. Successful receipt finalization
changes \textsc{Reserved} to \textsc{Consumed}; its acknowledgment primary key
makes duplicate consumption idempotent. A stranded \textsc{Reserved} grant is
safe but not live.

\begin{lstlisting}[language=SQL,caption={Atomic effect, receipt, and outbox records.},label={lst:atomic_records}]
CREATE TABLE committed_effects (
  envelope_id uuid PRIMARY KEY
    REFERENCES cognitive_envelopes,
  operation_id text NOT NULL,
  effect_digest bytea NOT NULL,
  canonical_effect_bytes bytea NOT NULL,
  effect_projection jsonb NOT NULL
);

CREATE TABLE justification_receipts (
  receipt_id uuid PRIMARY KEY,
  receipt_digest bytea NOT NULL UNIQUE,
  canonical_receipt_bytes bytea NOT NULL,
  receipt_projection jsonb NOT NULL,
  envelope_id uuid NOT NULL UNIQUE
    REFERENCES cognitive_envelopes,
  envelope_digest bytea NOT NULL,
  effect_digest bytea NOT NULL,
  dependency_vector_digest bytea NOT NULL,
  dependency_coverage_vector_digest bytea NOT NULL,
  authority_plan_vector_digest bytea NOT NULL,
  commit_required_plan_vector_digest bytea NOT NULL,
  executable_ref_vector_digest bytea NOT NULL,
  sealed_profile text NOT NULL CHECK (sealed_profile IN ('S','C')),
  required_profile text NOT NULL CHECK (required_profile IN ('S','C')),
  witness_payload_vector_digest bytea NOT NULL,
  policy_obs_epoch text NOT NULL,
  policy_obs_version text NOT NULL,
  policy_obs_digest bytea NOT NULL,
  policy_commit_epoch text NOT NULL,
  policy_commit_version text NOT NULL,
  policy_commit_digest bytea NOT NULL,
  recertifier_kind text NOT NULL
    CHECK (recertifier_kind = 'ENVELOPE_RECERTIFIER'),
  recertifier_id text NOT NULL,
  recertifier_version text NOT NULL,
  recertifier_digest bytea NOT NULL,
  validation_verdict text NOT NULL CHECK
    (validation_verdict IN
      ('STRICT_EXACT','JOINT_COMPATIBLE')),
  external_grant_vector_digest bytea NOT NULL,
  external_grant_count integer NOT NULL,
  guard_key_mode_vector_digest bytea NOT NULL,
  target_gate_id text NOT NULL,
  target_database_domain_id text NOT NULL,
  durability_domain_digest bytea NOT NULL,
  serialization_event_id uuid NOT NULL,
  recorded_in_transaction_at timestamptz NOT NULL,
  FOREIGN KEY
    (policy_obs_epoch, policy_obs_version, policy_obs_digest)
    REFERENCES tct_policy_bundles
    (policy_epoch, policy_version, policy_digest),
  FOREIGN KEY
    (policy_commit_epoch, policy_commit_version,
     policy_commit_digest)
    REFERENCES tct_policy_bundles
    (policy_epoch, policy_version, policy_digest),
  FOREIGN KEY
    (recertifier_kind, recertifier_id,
     recertifier_version, recertifier_digest)
    REFERENCES tct_executable_definitions
    (definition_kind, definition_id, definition_version,
     definition_digest)
);

CREATE TABLE receipt_external_witnesses (
  receipt_id uuid NOT NULL
    REFERENCES justification_receipts,
  envelope_id uuid NOT NULL,
  plan_ordinal integer NOT NULL,
  mechanism text NOT NULL
    CHECK (mechanism IN ('B','C','D')),
  canonical_witness_payload bytea,
  authenticated_witness_content_address text,
  witness_payload_digest bytea NOT NULL,
  PRIMARY KEY (receipt_id, plan_ordinal),
  UNIQUE (envelope_id, plan_ordinal),
  FOREIGN KEY (envelope_id, plan_ordinal)
    REFERENCES envelope_authority_plan
    (envelope_id, plan_ordinal),
  CHECK (num_nonnulls(canonical_witness_payload,
                      authenticated_witness_content_address) = 1)
);

CREATE TABLE cognitive_outbox (
  outbox_id bigserial PRIMARY KEY,
  envelope_id uuid NOT NULL
    REFERENCES cognitive_envelopes,
  event_kind text NOT NULL CHECK
    (event_kind IN
      ('FINALIZE_GRANTS','ACTUATE','RECONCILE_BELIEF')),
  canonical_payload_bytes bytea NOT NULL,
  payload_projection jsonb NOT NULL,
  recorded_in_transaction_at timestamptz NOT NULL,
  UNIQUE (envelope_id, event_kind)
);
\end{lstlisting}

The receipt preserves the sealed and recomputed profiles and the policy
actually exposed during derivation and the separately resolved commit policy;
TCT-S compares them but never rewrites the
former. It also binds the complete executable-reference, dependency, external
grant, dependency-discharge, authority-plan, durability-target, and key--mode
guard vectors. Under the locked current policy head, the versioned plan
resolver computes the commit-required vector; admission requires
\texttt{commit\_required\_plan\_vector\_digest} to equal the sealed
\texttt{authority\_plan\_vector\_digest}. A mismatch aborts and requires a new
envelope rather than inserting a post-seal plan row. The ordered
\texttt{envelope\_external\_grants} rows and receipt co-commit in the effect
transaction; the non-null digest and count cover exactly $\Gamma_X$ (including
the canonical empty vector when no mechanism-B grant is required). The
\texttt{witness\_payload\_vector\_digest} covers the exact canonical witness
rows in \texttt{receipt\_external\_witnesses}. Their domain is exactly
$\mathsf{ValExt}(Q_X)$: the B/C/D plan ordinals whose admission predicates
consume authenticated values. Each row stores either the canonical payload or
an immutable authenticated content-addressed reference; a transaction-level
check compares their ordered vector digest and domain with the receipt before
commit. Downstream
belief activation and every receipt consumer verify
\texttt{sealed\_profile} against the profile in the sealed envelope, require
\texttt{required\_profile = sealed\_profile}, and resolve and authenticate the
witness payloads before using them. Independence or commutativity does not
remove an ordinal from the witness-map domain.
\texttt{atomic\_order\_key\_bytes} stores
$k_{\mathrm{atom}}(a)=\mathsf{Canon}\langle
\mathit{coordinationDomain}(a),\mathit{atomicLockId}(a)\rangle$. The same
physical lock therefore has the same key regardless of envelope, parent scope,
plan ordinal, mechanism label, or spelling. The primary key merges duplicates
within an envelope; \texttt{strongest\_mode} records $X$ if any covered plan
item requires $X$, and \texttt{covered\_plan\_ordinals} retains all plan
identities. The gate performs the actual atomic B/C operations in increasing
key order. If incompatible acquisitions cannot share a coordination domain,
the common-coordinator alternative is required. Each grant binds
\texttt{atomic\_acquisition\_vector\_digest}.
\texttt{plan\_ordinal} remains the grant identity and retry key.
\texttt{FINALIZE\_GRANTS} outbox payload binds the receipt, every nonce, and
$\mathcal D_X$; an issuer may apply it repeatedly but records only one
\textsc{Consumed} transition.

Durable rejection audit, if desired, uses a separate post-rollback transaction:
\begin{lstlisting}[language=SQL,caption={Optional durable-rejection audit table.},label={lst:rejections}]
CREATE TABLE tct_rejections (
  rejection_id bigserial PRIMARY KEY,
  attempted_envelope_id uuid,
  attempted_envelope_digest bytea,
  reason_code text NOT NULL,
  gate_metadata jsonb NOT NULL,
  recorded_at timestamptz NOT NULL
);
\end{lstlisting}

Only witness identifiers and assertion digests are retained; reusable authority
secrets remain in the authority service or trusted key store. Guard rows may
represent rows, objects, predicate ranges, aggregates, or registry-discovery
scopes. Every writer must participate in the same S/X guard discipline.

\end{document}